\newcommand{\be}{\begin{equation}}
\newcommand{\ee}{\end{equation}}
\newcommand{\bes}{\begin{equation*}}
\newcommand{\ees}{\end{equation*}}
\newcommand{\beqn}{\begin{eqnarray}}
\newcommand{\eeqn}{\end{eqnarray}}
\newcommand{\beqns}{\begin{eqnarray*}}
\newcommand{\eeqns}{\end{eqnarray*}}
\newcommand{\lkr}{\left(}
\newcommand{\rkr}{\right)}
\newcommand{\lfi}{\left\{}
\newcommand{\rfi}{\right\}}
\newcommand{\del}{\delta}
\newcommand{\eps}{\epsilon}
\newcommand{\sig}{\sigma}
\newcommand{\EE}{\ensuremath{{\mathbb E}}}
\newcommand{\real}{{\mathbb R}}
\newcommand{\iter}{\mathrm{iter}}
\newcommand{\vect}{\mbox{vec}}
\newcommand{\Span}{\mbox{Span}}
\newcommand{\diag}{\mbox{diag}}
\newcommand{\Tr}{\mbox{Tr}}
\newcommand{\Skew}{\mathrm{Skew}}
\newcommand{\Kmax}{K_{\max}}
\newcommand{\Kmin}{K_{\min}}
\newtheorem{thm}{Theorem}
\newtheorem{lem}{Lemma}
\newtheorem{cor}{Corollary}
\newtheorem{rem}{Remark}
\newtheorem{prop}{Proposition}
\newcommand{\bbA}{\boldsymbol{A}}
\newcommand{\bbB}{\boldsymbol{B}}
\newcommand{\bbC}{\boldsymbol{C}}
\newcommand{\bL}{\mathbf{L}}
\newcommand{\bfe}{\mathbf{e}}
\newcommand{\bbP}{\boldsymbol{P}}
\newcommand{\bbQ}{\boldsymbol{Q}}
\newcommand{\bbX}{\boldsymbol{X}}
\newcommand{\bbY}{\boldsymbol{Y}}
\newcommand{\bbDelta}{\boldsymbol{\Delta}}
\newcommand{\rank}{\mathrm{rank}}
\newcommand{\bDelta}{\mathbf{\Delta}}
\newcommand{\bu}{\mathbf{u}}
\newcommand{\bv}{\mathbf{v}}
\newcommand{\bx}{\mathbf{x}}
\newcommand{\bA}{\mathbf{A}}
\newcommand{\bB}{\mathbf{B}}
\newcommand{\bC}{\mathbf{C}}
\newcommand{\bD}{\mathbf{D}}
\newcommand{\bI}{\mathbf{I}}
\newcommand{\bK}{\mathbf{K}}
\newcommand{\bO}{\mathbf{O}}
\newcommand{\bP}{\mathbf{P}}
\newcommand{\bQ}{\mathbf{Q}}
\newcommand{\bR}{\mathbf{R}}
\newcommand{\bS}{\mathbf{S}}
\newcommand{\bU}{\mathbf{U}}
\newcommand{\bV}{\mathbf{V}}
\newcommand{\bW}{\mathbf{W}}
\newcommand{\bX}{\mathbf{X}}
\newcommand{\bY}{\mathbf{Y}}
\newcommand{\bZ}{\mathbf{Z}}
\newcommand{\bzero}{\mathbf{0}}
\newcommand{\bone}{\mathbf{1}}
\newcommand{\bLam}{\mbox{\mathversion{bold}$\Lambda$}}
\newcommand{\bTe}{\mbox{\mathversion{bold}$\Theta$}}
\newcommand{\calA}{{\mathcal{A}}}
\newcommand{\calF}{{\mathcal{F}}}
\newcommand{\calM}{{\mathcal M}}
\newcommand{\calP}{{\mathcal{P}}}
\newcommand{\calS}{{\mathcal{S}}}
\newcommand{\calV}{{\cal{V}}}
 \newcommand{\minM}{\displaystyle \min_{m=1, ....M}\ }
 \newcommand{\maxM}{\displaystyle \max_{m=1, ....M}\ }
\newcommand {\colred}[1] {\textcolor{red}{#1}}
\definecolor{mycolor1}{rgb}{0.1, 0.5, 0.7}
\long\def\ignore#1{}
\newcommand{\reals}{\mathbb{R}}
\begin{document}

\title{ALMA: Alternating Minimization Algorithm For  Clustering Mixture Multilayer Network }

\author{\name  Xing Fan \email fanxing@knights.ucf.edu
  \\
       \addr Department of Mathematics\\
       University of Central Florida\\
     Orlando, FL 32816, USA
      \AND
\name  Marianna Pensky  \email marianna.pensky@ucf.edu \\
       \addr Department of Mathematics\\
       University of Central Florida\\
     Orlando, FL 32816, USA
     \AND
    \name   Feng Yu  \email yfeng@knights@ucf.edu \\
       \addr Department of Mathematics\\
       University of Central Florida\\
     Orlando, FL 32816, USA
     \AND
    \name  Teng Zhang \footnote{Corresponding Author}
    \email Teng.Zhang@ucf.edu \\
       \addr Department of Mathematics\\
       University of Central Florida\\
     Orlando, FL 32816, USA
          }

\editor{}

\maketitle


\begin{abstract}%

The paper considers a   Mixture Multilayer Stochastic Block Model (MMLSBM), where layers can be partitioned into groups of similar networks, and networks in each group are  equipped with a  distinct
 Stochastic Block Model. The goal is to partition the multilayer network into clusters of similar layers, and to
 identify communities in those layers.   Jing {\it et al.} (2020) introduced the MMLSBM  and developed a clustering methodology, TWIST,  based on regularized tensor decomposition.

The present paper proposes a different technique,  an alternating minimization algorithm (ALMA), that aims at simultaneous recovery of the layer partition, together with estimation  of the matrices of connection probabilities  of the distinct layers. Compared to  TWIST, ALMA achieves   higher accuracy, both theoretically and numerically.
\\
\end{abstract}

\begin{keywords}
Stochastic Block Model,  Multilayer Network, Alternating Minimization, Clustering
\end{keywords}


\section{Introduction}
\label{sec:introduction}

Stochastic networks arise in many areas of research and applications and are
used, for example, 
to study brain connectivity or gene regulatory mechanisms, 
to monitor cyber and homeland security, and to evaluate and predict social relationships within groups or between groups, such as countries.
\\

While in the early years of the field of stochastic networks, research  mainly focused  on studying a
single network, in recent years the frontier moved to investigation of collection of networks, the
so called {\it multilayer network}, which allows to model relationships between nodes
with respect to various modalities (e.g., relationships between species based on food or space),
or consists of  network data collected from different individuals (e.g., brain networks).
\\

Although there are many different ways of modeling a multilayer network (see, e.g.,
an excellent review article of \cite{10.1093/comnet/cnu016}), in this paper we consider the case where
all layers have the same set of  nodes, and all edges between nodes are drawn within  layers, i.e.,
there are no edges connecting the nodes in different layers. 
 \cite{macdonald2021latent} called this type of networks the {\it multiplex} networks and argued that they appear 
 in a variety of applications. Indeed, consider brain networks of several individuals that are drawn on the basis 
of some imaging modality. The nodes in the networks are associated with brain regions,
and the brain regions are considered to be connected if the signals in those regions exhibit some kind of similarity.   
In this setting, the nodes are the same for each individual network, and there is no connection between 
brain regions of different individuals. For this reason, one  can consider a multiplex
network constituted by brain networks of several individuals, with common nodes but possibly different
community structures in different layers (individuals). It is known that brain disorders are associated with
changes in brain network organizations (see, e.g.,  \cite{Buckner2019TheBD}), and that alterations in the community structure of
the brain have been observed in several neuropsychiatric conditions, including Alzheimer disease (see, e.g.,  \cite{doi:10.1002/hbm.23240}),
 schizophrenia (see, e.g.,  \cite{pub.1037745277}) and epilepsy disease (see, e.g.,  \cite{munsell_2015}). Hence, assessment of the brain
modular organization may provide a key to understanding the relation between aberrant connectivity and
brain disease.
\\

The multiplex networks have been studied  by many authors  who work  in a variety of research fields. 
(see, e.g.,  \cite{JMLR:v18:16-391}, \cite{han2018multiresolution},
\cite{Aleta_2019}, \cite{Kao_2017} among others).
In this paper, we consider a multilayer network where all layers are equipped with the Stochastic Block Models (SBM).
In this case, the   problems of interest include finding groups of layers that are similar in some sense, finding the
communities in those groups of layers and estimation of the tensor of connection probabilities.
While the scientific community attacked   all three of those problems, often in a somewhat ad-hoc manner
(see e.g., \cite{doi:10.1098/rsos.171747}, \cite{Kao_2017}, \cite{mercado2018power} among others),
the theoretically inclined papers in the field of statistics   mainly been  investigated the case where
communities persist throughout all layers of the network. This includes studying the so called ``checker board model''  in
\cite{JMLR:v21:18-155},  where the matrices of block probabilities take only finite number of values,
and communities persist in all layers. The tensor block models of \cite{NEURIPS2019_9be40cee} and  \cite{han2021exact} 
belong to the same category. In recent years, statistics publications extended this type of research   
to the  case, where community structure persists but the matrix of probabilities of connections can take
arbitrary values (see, e.g., \cite{bhattacharyya2020general}, \cite{paul2020}, \cite{10.1093/biomet/asz068},
\cite{lei2020tail}, \cite{paul2016}  and references therein). The authors studied  precision of community detection 
and provided comparison between various techniques that can be employed in this case. 
\\

In many practical situations, however, the assumption of common community structures in all layers of
the network may not be justified. Indeed, as we have stated above, some psychiatric or neurological conditions 
may be due to the alteration in the brain networks community structures rather than modifications
in the strength of connections. For this reason, it is of interest to study a multiplex network 
with distinct community structures in groups of layers. 
Recently,  \cite{jing2020community}  investigated the
so called ``{\bf M}ixture {\bf M}ulti{\bf L}ayer {\bf S}tochastic
{\bf B}lock {\bf M}odel'' (MMLSBM),
where there are $L$ layers can be partition into $M$ different types, with $M$ being a small number.
In MMLSBM,  each   class $m$ of   layers is equipped with its own  community structure and a
distinct matrix of connection probabilities $\bbB_m$, $m=1,...,M$. The methodology of \cite{jing2020community}
is  based on a regularized  tensor decomposition, where all tensor dimensions are treated in the same way.
The theory is developed under the assumption that the number of layers does not exceed the number of nodes.
Note that the latter may not be true, for example, for brain networks, where the number of nodes 
is in hundreds (and is fixed) while the number of individuals, whose brain images are available, 
can grow indefinitely.
\\

In this paper, we  also consider  the MMLSBM   and suggest a new algorithm for the layer partition and local communities recovery.   
While  the methodology of \cite{jing2020community} is  based on a regularized  tensor decomposition, our technique
is centered around finding the groups of layers. Indeed, the ``naive'' approach to the problem would be 
to vectorize all adjacency matrices and cluster them using the k-means procedure. The major difference between 
our paper and \cite{jing2020community}  is that we recognize that it is advantageous to treat   within-layer and between-layer dimensions 
of the adjacency tensor in a different manner.
Specifically,  we propose a novel {\bf AL}ternating {\bf M}inimization {\bf A}lgorithm  (ALMA) which utilizes the fact that,
for each layer of the network, the matrix of probabilities of connections can be approximated by
a low-rank matrix. As a result, for the MMLSBM, our algorithm  consistently recovers the layer labels and the memberships of nodes.
\\

The present paper makes several contributions. 
First, it introduces the idea that the key to the inference in the MMLSBM is identification of the groups of layers: 
as soon as networks in each of $M$ layers are discovered, the communities can be found by the spectral algorithm of \cite{lei2015}, applied to
the averages of the adjacency matrices.  In addition, it uses the information that all layers are approximately low-rank. 
In comparison, the algorithm  of \cite{jing2020community} only uses the information that the underlying tensor is approximately low-rank, 
which ignores the low-rankness within each layer. Due to this idea,  as it follows from our theoretical analysis, ALMA achieves
higher accuracy in the between-layer clustering. Also, as our numerical studies show, the latter 
leads to  smaller between-layer and within-layer clustering errors,   than for the algorithm  of \cite{jing2020community}.
In addition, unlike  the technique in \cite{jing2020community}, ALMA does not require the assumption 
that the number of layers in the network is smaller than the number of nodes.
\\

In this paper, we are not interested in the case of $M=1$, where   communities are the same in all layers. 
Indeed, if one know that $M=1$, then, under the assumption that there are only $M=1$ types of matrices 
of connection probabilities, one can just find communities by spectral clustering after averaging.
For this reason, one should not apply ALMA to the ``checker board'' or tensor block model,
and ALMA should not be compared with techniques designed for this type of models.
%
\\

Also, we assume that both the number of distinct layers $M$ and the number of communities in each group 
of layers are fixed and  known in advance. While this is usually not true in practice, this is a very common 
assumption for theoretical investigations. When the algorithm is used in a real data setting, one needs to
obtain solutions for several different values of $M$ and then choose the one that agrees with data.
Since the probability tensor of the  MMLSBM has sets of identical layers, we can borrow the idea from 
the problem of determining the number of clusters in a data set, when the  $K$-means algorithm is used.
One of the most popular heuristic methods is the so called ``elbow method''. In our setting, we can run 
the algorithm with an increasing number of clusters  $M$, and plot an error measure of the model 
as a function of $M$. This function would decrease as $M$ increases since models with larger $M$ explain more variations. 
Then, the elbow methods evaluate the curve of the function and find the ``elbow of the curve'', i.e., the point where the 
function is no longer decreasing rapidly, as the  number of distinct layers $M$ grow
(see, e.g., \cite{doi.org/10.1111/1467-9868.00293}, \cite{Zhang2012,Le2015}). 
Other methods of  choosing $M$ include  cross-validation~\citep{10.1093/biomet/asq061} and information 
criterion~\citep{10.1007/978-3-540-45080-1_27}. 
After the number of groups of layers has been  determined by one of the above mentioned techniques
and the between-layers clustering  has been implemented, one can identify  the number of communities within 
each group of layers using common techniques employed in the Stochastic Block Models~(SBMs)~\citep{Zhang2012,Le2015,10.1214/19-EJS1533}.
\\

Note that dynamic network models can be viewed as a particular case of the multilayer network model where
there are no edges connecting the nodes in different layers.  The difference between those models and the multilayer network
is that, in a dynamic network, the layers are ordered according to time instances, while in a multilayer network the enumeration
of layers is completely arbitrary. That is why, although there is a multitude of papers that study the change point detection
in the dynamic SBMs (see, e.g., \cite{bhattacharjee2018change}, \cite{gangrade2018testing} and \cite{wang2017optimal} among others),
the techniques and error bounds in those papers are not applicable in the situation of the MMLSBM.
\\

The rest of the paper is organized as follows.
Section~\ref{sec:model} describes the MMLSBM and presents the necessary concepts and notations.
Section~\ref{sec:methodology} introduces the Alternating Minimization Algorithm  (ALMA).
Section~\ref{sec:Theor_guarantees}  provides theoretical guarantees for between-layer and within-layer clustering errors.
Specifically, the section starts with  Section~\ref{sec:convergence_true} that investigates the situation where ALMA
is applied to the true probability tensor. Based on the results of this analysis, Section~\ref{sec:assumptions}  provides assumptions which,
as it is confirmed in Section~\ref{sec:convergence}, guarantee
 convergence of our algorithm. Finally, Section~\ref{sec:layer_clust}
 produces   upper bounds for    between-layer   and   within-layer
 clustering errors. Section~\ref{sec:Theor_guarantees}  is concluded
 by a discussion of various aspects of ALMA in
 Section~\ref{sec:discuss_theor}.
Section~\ref{sec:comparison} brings up theoretical and numerical comparisons between the ALMA and the TWIST algorithm, proposed in
\cite{jing2020community}.
The proofs of all statements in the paper are deferred to Section~\ref{sec:Proofs}, Appendix.


\section{Model framework}\label{sec:model}
\setcounter{equation}{0}

This work considers an $L$-layer network on the same set of $n$ vertices $\calV=\{1,\cdots,n\}$.
For any $1\leq l \leq L$, the observed data is the adjacency matrix $\bA_l\in\reals^{n\times n}$
of the $l$-th network, where $\bA_l(i,j)=\bA_l (j,i) = 1$ if a connection between nodes $i$
and $j$ is observed at the $l$-th network, and  $\bA_l (i,j)=\bA_l (j,i)  = 0$  otherwise.
Assume that for all $1 \leq i < j \leq n$ and $1\leq l\leq L$, $\bA_l (i,j)$ are the Bernoulli
random variables with $\Pr(\bA_l(i,j)=1)=\bP_{*l}(i,j)$, and they are independent
 with each other. The probability matrices $\{\bP_{*l}\}_{l=1}^L$ take $M$
different values ($M<L$), that is, there exists a partition of $[L] = \{1,\cdots, L\}=\cup_{m=1}^{M}\calS_m$ such that
$\bP_{*l} = \tilde{\bQ}_{*m}$  for all $l\in\calS_m$.
This means that there exists a clustering function $z: [L] \to [M]$ such that $z(l) = m$ if the $l$-th
network is of the type $m$, or, equivalently,  $l\in\calS_{m}$. Consider a set $\calF_{L,M}$ of the {\it clustering }  matrices
\bes
\calF_{L,M} = \lfi \bZ \in \{0,1\}^{L \times M},\quad \bZ  \bone = \bone, \quad \bZ^T \bone \neq \bzero \rfi,
\ees
and $\bZ\in \calF_{L,M}$ such that $\bZ (l,m) = 1$ if $l\in\calS_{m}$ and $\bZ (l,m) = 0$  otherwise, and matrix $\bZ$ does not have zero columns.
It is easy to see that matrix $\bZ^T \bZ$ is diagonal, and $\bW_* = \bZ (\bZ^T \bZ)^{-1/2}$ satisfies
$\bW_*^T\bW_*=\bI$. Here, $\bW_{\star}(l,m) = L_m^{-1/2}$ if $l\in\calS_{m}$ 
and $\bZ (l,m) = 0$ otherwise, where $L_m$ is the number of networks in the layer of type $m$,
$m=1, ...,M$.

Furthermore, we assume that each network can be described by a  Stochastic Block Model (SBM). Specifically, we assume that,
for each $m$  and any $l\in\calS_m$, $\bP_{*l} = \tilde{\bQ}_{*m}$ where
$\tilde{\bQ}_{*m}$ is generated as follows: the nodes $\calV$ are grouped into $K_m$ classes
$G_{m,1}, \cdots, G_{m,K_m}$, and the probability of a connection $\bP_{*l} (i,j)$
is entirely  determined by the  groups to which the nodes $i$ and $j$
belong at $l$. In particular,  if $i \in G_{m,k}$ and $j \in G_{m,k'}$, then
$\bP_{*l} (i,j) = \bB_m(k,k')$, where $\bB_m\in\reals^{K_m\times K_m}$ is
the {\it connectivity matrix} with $\bB_m (k,k') = \bB_m (k',k)$.
In this case, one has 
\be \label{maineq}
\bP_{*l} = \bTe_m \bB_m \bTe_m^T, \quad m = z(l), \quad \bTe_m \in \calF_{n, K_m},
\ee
where $\bTe_m (i,k) =1$ if and only if    node $i$ belongs to the class $G_{m,k}$ and is zero otherwise.

Denote   $\bQ_{*m} = \sqrt{|\calS_m|}\tilde{\bQ}_{*m}$. Denote the three-way tensors with
the $l$-th layer $\bA_l$ and $\bP_{*l}$ by, respectively, $\bbA, \bbP_*\in \reals^{L\times n\times n}$,
and the three-way tensor with the $m$-th layer $\bQ_{*m}$ by $\bbQ_*\in\reals^{M\times n\times n}$.

The objective of this work is to partition  the multilayer network $\calA$ into $M$ similar layers (between-layer clustering)
and, furthermore, for each of these sets of layers, to recover communities $G_{m,1}, \cdots, G_{m,K_m}$, $m = 1, \ldots, M$
(within layer clustering).
Specifically, we focus on  the setting where $M$ and $\{K_m\}_{m=1}^M$ are fixed or grow  slowly, while   $n$ and  $L$
tend to infinity,  since usually   networks are large but have
relatively few similar groups of layers, and the number of communities is also usually small
compared to the number of nodes.



\subsection{Notations}


For any matrix $\bX \in \reals^{n_1 \times n_2}$, denote the Frobenius and the operator norm of any matrix $\bX$ by
$\|\bX\|_F$  and $\|\bbX\|$, respectively, and its $r$-th largest
singular value by $\sig_r(\bX)$. Let $\vect(\bX) \in \reals^{n_1 n_2}$  be  vectorization of matrix $\bX$
obtained by sequentially stacking columns of matrix $\bX$. 
Denote the projection operator onto the nearest orthogonal matrix by   $\Pi_o$:
 \be \label{eq:Pi_o}
 \Pi_o(\bX)= \bX (\bX^T \bX)^{-1/2}.
 \ee
If $n_1\geq n_2$, then $\Pi_o(\bX)$ is an orthogonal matrix that has the same column space as $\bX$.
Specifically, if  the singular value decomposition of $\bX$ is $\bX=\bU\Sigma\bV^T$, 
where $\bU\in\reals^{n_1\times n_2}$ and $\Sigma,\bV\in\reals^{n_2\times n_2}$, then  $\Pi_o(\bX)=\bU\bV^T$. 
\\


For any tensor $\bbX\in\reals^{n_1\times n_2\times n_3}$, its mode 1 matricization 
$\calM_1(\bbX)\in\reals^{n_1\times n_2n_3}$ is a matrix such that $[\calM_1(\bbX)](l,:)=\mathrm{vec}(\bbX(l,:,:))$.
For any tensor $\bbX\in\reals^{n_1\times n_2\times n_3}$ and a matrix $\bA\in\reals^{m\times n_1}$,
their mode-1  product $\bbX\times_1\bA$ is a tensor in $\reals^{m\times n_2\times n_3}$ defined by
\bes
[\bbX\times_1\bA](j,i_2,i_3)=\sum_{i_1=1}^{n_1}  \bbX(i_1,i_2,i_3)  \bA(i_1,j), \quad j=1, ...,m.
\ees
In this product, every mode-1 fiber of tensor $\bbX$ is multiplied by matrix $\bA$:
\be \label{eq:mode1}
\bY = \bX \times_1 \bA \Longleftrightarrow  \widetilde{\bY} = \bA \widetilde{\bX}, \quad \widetilde{\bY} = \calM_1(\bY), \
\widetilde{\bX} = \calM_1(\bX)
\ee 
If $\bbX\in\reals^{n \times n_2\times n_3}$ and $\bbY\in\reals^{m\times n_2\times n_3}$ are two tensors,  
their mode-(2,3) product  denoted by $\bbX\times_{2,3}\bbY$, is a matrix in $\reals^{n \times m}$ with elements
$i_1 = 1, ...,n$,   $i_2 = 1, ...,m$
$$
[\bbX\times_{2,3}\bbY](i_1,i_2)=\sum_{j_2=1}^{n_2}\sum_{j_3=1}^{n_3} \bbX(i_1,j_2,j_3)\bbY(i_2,j_2,j_3) = 
\Tr[X(i_1, :, :)Y (i_2, :, :)^T]
$$
The Frobenius norm $\|\bbX\|_F $ and the largest singular value $\sigma_1(\bbX)$ of
a tensor $\bbX\in\reals^{n_1\times n_2\times n_3}$ are defined by
\begin{align*}
\|\bbX\|_F & =\sqrt{\sum_{i_1,i_2,i_3=1}^{n_1,n_2,n_3}\bbX(i_1,i_2,i_3)^2},\\
\sigma_1(\bbX)=\|\bbX\|& =\max_{\bu_i\in\reals^{n_i}, \|\bu_i\|=1, 1\leq i\leq 3}\bbX\times_1\bu_1\times_2\bu_2\times_3\bu_3.
\end{align*}
Operations above obey the following properties:
\\
1.\ For $\bX  \in \reals^{n_1 \times n_2\times n_3}$, $\bA \in  \reals^{m \times n_1}$, $\bY \in \reals^{n \times n_2\times n_3}$
one has $(\bX \times_1 \bA) \times_{2,3} \bY = \bA^T (\bX \times_{2,3} \bY)$ 
\\
2. \  If $\bW$ is such that $\bW^T \bW = \bI$, then $\| \bQ \times_1 \bW^T \|^2_F = \| \bQ \|^2_F$ 
\\

For a more comprehensive tutorial for tensor algebra, please, see  
the review article of \cite{Kolda09tensordecompositions}.

Next, we introduce some notations  that will be used later in the paper.
Let
$\bK = (K_1, \ldots, K_M),$ and denote
\be \label{eq:K_def}
\Kmax= \max_{m=1,\cdots,M}K_m,\quad \Kmin= \min_{m=1,\cdots,M}K_m,\quad
\dot{K}= \sum_{m=1}^MK_m.
\ee
Denote the size of the smallest cluster in all networks by $g_{\min}$, i.e.,
$\displaystyle{g_{\min}=\min_{\stackrel{1\leq m\leq M}{1\leq k\leq K_m}}\, |G_{m,k}|}$.
Consider the SVDs of matrices $\bbQ_*(m,:,:)$ and the matrices  $\Pi_{\bU_m^\perp}$
orthogonal to the linear spaces of their eigenvectors:
\be \label{eq:svd_Qm}
\bbQ_*(m,:,:) = \bU_m  \bLam_m \bU_m^T, \quad \Pi_{\bU_m^\perp}=\bI-\bU_m\bU_m^T
\ee 
Since $\rank(\bbQ_*(m,:,:))=K_m$,  $\bU_m\in\reals^{n\times K_m}$ is an  
orthogonal matrix that has the same column space as $\bbQ_*(m,:,:)$.
Note that we somewhat abuse notations here: $\Pi_{\bU_m^\perp}$ is a matrix 
and also an operator, so that, for any matrix $\bX$,   $\Pi_{\bU_m^\perp} (\bX)$
is a projection of the matrix $\bX$ on the linear space orthogonal to the column space  of matrix $\bU_m$.

Now, we introduce  operators that will be used later in the paper.
For any tensor $\bbX\in\reals^{M\times n\times n}$, define a projector
$\Pi_\bK: \reals^{M\times n\times n}\rightarrow \reals^{M\times n\times n}$ by
\be \label{eq:Pi_bK}
[\Pi_\bK(\bbX)](m,:,:)=\Pi_{K_m}\big(\bbX(m,:,:)\big), \ m=1, \ldots, M,
 \ee
where $\Pi_{K_m}: \reals^{n\times n}\rightarrow \reals^{n\times n}$ is the projection onto the nearest rank $K_m$ matrix.
Consider  operator $\Pi_{T,\bK}: \reals^{M \times n \times n} \to \reals^{M \times n \times n}$ 
defined as
\begin{align} \label{eq:Pi_T_bK}
[\Pi_{T,\bK}(\bbX)](m,:,:) & =\bbX(m,:,:)-\Pi_{\bU_m^\perp}\bbX(m,:,:)\Pi_{\bU_m^\perp},
\end{align}
where $\Pi_{\bU_m^\perp}$ is defined in \eqref{eq:svd_Qm}.
In addition, let $\Pi_{T,K_m}: \reals^{L\times n\times n}\rightarrow\reals^{L\times n\times n}$ be 
the projection onto the subspace spanned by $\bU_m$ for each ``slice'' of the tensor, i.e.,
\begin{align*}
&[\Pi_{T,K_m}(\bbX)](m',:,:)=\bbX(m,:,:)-\Pi_{\bU_m^\perp}\bbX(m,:,:)\Pi_{\bU_m^\perp},
\quad m'=1, \ldots, M, \nonumber
\end{align*}
%

 
\section{Alternating Minimization Algorithm  (ALMA)}
\label{sec:methodology}

As we observe the multi-layer networks $\{\bbA_l\}_{l=1}^L$, our objectives are
\begin{itemize}
\item {\bf Between-layer clustering:} recover the network classes $\calS_1, \cdots, \calS_M$ such that $[L]=\cup_{m=1}^M\calS_m$.
\item {\bf Within-layer clustering:} recover the community structures for each network class, i.e.,
for any $m\in [M]$, find a partition of the vertices $G_{m,1}, \cdots, G_{m,K_m}$.
\end{itemize}
To achieve these goals, we   start with the estimation of $\bbQ_*\in\reals^{M\times n\times n}$ and
$\bW_*\in\reals^{L\times M}$ based on  $\bbA$. Then, the between-layer clustering can be carried out by
applying $K$-means algorithm  to the rows of  the estimator $\widehat{\bW}$ of  matrix  $\bW_*$. Subsequently,  the within-layer clustering  of
the $m$-th group of networks can be obtained by analyzing  estimators $\hat{\bbQ}_*(m,:,:)$ of $\bbQ_*(m,:,:)$ for every $m\in [M]$.
\\

In order to estimate $\bbQ_*$ and $\bW_*$, note that the tensor $\bbA$ can be considered as a noisy observation of $\bbP_*$,
since $\EE(\bbA)=\bbP_*$, where $\bbP_*=\bbQ_*\times_1\bW_*^T$, and $\bW_*$ is an orthogonal matrix by definition.
For this reason, we propose to find $\bbQ_*$ and $\bW_*$ by solving the following  optimization problem
\begin{align}\label{eq:objective}
& \underset{\bbQ,\bW}{\text{argmin}}\
\|\bbA-\bbQ\times_1\bW^T \|_F\\
&\text{s.t. $\bbQ\in \reals^{M \times n\times n},\bW\in\reals^{L\times M}$, $\bW^T\bW=\bI$, $\rank(\bbQ(m,:,:))\leq K_m$
for all $1\leq m\leq M$.}\nonumber
\end{align}
We solve \eqref{eq:objective} by alternatively minimizing the objective function in \eqref{eq:objective} over $\bQ$ and $\bW$.

When $\bW$ is fixed, the best approximation to $\bbQ$ is given by  $\bbQ= \Pi_{\bK}(\bbA\times_1\bW)$. 
Indeed, by equation~\eqref{eq:mode1}, one has $\|\bbA-\bbQ\times_1\bW\|_F = \|\calM_1(\bA) - \bW \calM_1(\bbQ)\|$. Hence,  
minimization of the last expression over $\bbQ$ yields $\calM_1(\bbQ) = \bW^T \calM_1(\bA)$ which, 
by \eqref{eq:mode1},  leads to $\bbQ = \bA \times_1 \bW$. The latter, due to the rank restrictions, is approximated
by the closest rank projection  $\Pi_{\bK}(\bbA\times_1\bW)$.

When $\bbQ$ is fixed, the problem of minimizing of   $\|\bbA-\bbQ\times_1\bW^T \|_F$   over $\bW$
under the assumption that $\bW^T\bW=\bI$,  is called the orthogonal Procrustes problem   (see, e.g., \cite{oro2736}),
and it has an explicit solution  $\bW = \Pi_o(\bbA\times_{2,3} \bbQ)$,
where $\Pi_o(\bX)$ is defined in \eqref{eq:Pi_o}. 
Combining the two steps, we summarize this  alternating minimization procedure
in Algorithm~\ref{alg:update}.
\\

%
%
\begin{algorithm} [tb]
\caption{{Alternating Minimization Algorithm (ALMA)}}  
\label{alg:update}
\begin{flushleft}
{\bf Input:}  {Adjacency tensor $\bbA\in\reals^{L\times n\times n}$; number of different types of networks $M$;
$\{K_m\}_{m=1}^M$; Initialization clustering matrix  $\bW^{(1)}\in\reals^{L\times M}$ such that 
$(\bW^{(1)})^T\, \bW^{(1)}=\bI$}   \\
{\bf Output:} A clustering matrix $\widehat{\bW}\in\reals^{L\times M}$ such that
$\widehat{\bW}^T\widehat{\bW}=\bI$, and a tensor $\widehat{\bbQ}\in\reals^{M\times n\times n}$.\\
{\bf Steps:}\\
{\bf 1:}  Set  $\iter=1$.\\
{\bf 2:}  Let $\bbQ^{(\iter+1)}=\Pi_{\bK}(\bbA\times_1  \bW^{(\iter)})$, where $\Pi_\bK$ is a projector defined  in \eqref{eq:Pi_bK}.  \\
{\bf 3:} Let $\bW^{(\iter+1)}=\Pi_o(\bbA\times_{2,3} \bbQ^{(\iter+1)})$  
where $\Pi_o(\bX)$ is defined in \eqref{eq:Pi_o}.\\
{\bf 4:} Set $\iter=\iter+1$.\\
{\bf 5:} Repeat steps 2-4 until $\|\bW^{(\iter)}-\bW^{(\iter-1)}\|_F \leq \eps_{n,L}$ where
$\eps_{n,L}$ is a pre-specified threshold, or the number of iterations exceeds the upper limit: $\iter > N^{(iter)}$.\\
{\bf 6:} Set $\widehat{\bW}= \bW^{(\iter-1)}$, $\widehat{\bbQ}= \bbQ^{(\iter-1)}$.
\end{flushleft}
\end{algorithm}

After obtaining $\widehat{\bW}$ and  $\widehat{\bbQ}$, we recover the groups of similar networks $\calS_1, \cdots, \calS_M$
by  clustering the rows of $\widehat{\bW}$ into $M$ groups using the $(1 + \eps)$ approximate $K$-means algorithm.
Finally, for clustering the nodes in each type of networks,
we apply spectral clustering with $\widehat{\bbQ} (m,:,:)$ being treated as the affinity matrix.
Specifically, we first find the orthogonal matrix of size $n\times K_m$ whose columns are the top $K_m$
eigenvectors of $\widehat{\bbQ}(m,:,:)$, and then cluster its rows into $K_m$ groups using the $(1 + \eps)$ approximate $K$-means.
There exist efficient algorithms for solving  the $(1 + \eps)$ approximate $K$-means problem, see, e.g., \cite{1366265}.


\section{Theoretical guarantees}
\label{sec:Theor_guarantees}

\subsection{Convergence of the iterative algorithm for the true probability tensor}
\label{sec:convergence_true}

The purpose of this section is to explain how Algorithm~\ref{alg:update} works.
Indeed, in order this algorithm delivers acceptable solutions   when it is applied the adjacency tensor $\bA$,
it should  guarantee  convergence when  $\bA$ is replaced by $\bP_*$, 
and one starts from an arbitrary matrix $\bW^{(1)}$.
In this case, the associated optimization problem becomes
\begin{align}\label{eq:objective1}
& \underset{\bbQ,\bW}{\text{argmin}}\
\|\bbP_*-\bbQ\times_1\bW^T\|_F\\
&\text{s.t. $\bbQ\in \reals^{M \times n\times n},\bW\in\reals^{L\times M}$, $\bW^T\bW=\bI$, $\rank(\bbQ(m,:,:))\leq K_m$,
for all $1\leq m\leq M$}. \nonumber
\end{align}
Then, Algorithm~\ref{alg:update} yields
\begin{equation}\label{eq:update1}
\text{$\bbQ^{(\iter)}=\Pi_{\bK}(\bbP_*\times_1\bW^{(\iter-1)})$,\ \ \  $\bW^{(\iter)}=\bW_*\Pi_o(\bbQ_*\times_{2,3}\bbQ^{(\iter)})$},
\end{equation}
where the latter formula is obtained using $\bbP_*=\bbQ_*\times_1\bW_*^T$.
Hence,
\[
\bW^{(\iter)}=\Pi_o(\bbP_*\times_{2,3}\bbQ^{(\iter)})=\Pi_o(\bW_* (\bbQ_*\times_{2,3}\bbQ^{(\iter)}))=
\bW_*\Pi_o(\bbQ_*\times_{2,3}\bbQ^{(\iter)}).
\]

As a result, we can reformulate problem \eqref{eq:objective1} by adding an assumption that $\bW=\bW_*\bV$
for some $\bV\in\reals^{M\times M}$. Then \eqref{eq:objective1} is simplified to
\begin{align}\label{eq:objective2}
& \underset{\bbQ,\bV}{\text{argmin}}\
\|\bbQ_*-\bbQ\times_1\bV^T\|_F\\
&\text{s.t. $\bbQ\in \reals^{M \times n\times n},\bV\in\reals^{M\times M}$, $\bV^T\bV=\bI$, $\rank(\bbQ(m,:,:))\leq K_m$
for all $1\leq m\leq M$,}\nonumber
\end{align}
and the iterative relations \eqref{eq:update1} become
\begin{equation}\label{eq:update2}
\text{$\bbQ^{(\iter)}=\Pi_{\bK}(\bbQ_*\times_1\bV^{(\iter-1)})$,\ \ \  $\bV^{(\iter)}=\Pi_o(\bbQ_*\times_{2,3}\bbQ^{(\iter)})$},
\end{equation}
where the first equation follows from the fact that 
$$
\bbP_*\times_1 \bW_*\bV^{(\iter-1)}=(\bbP_*\times_1 \bW_*)\times_1\bV^{(\iter-1)}=\bbQ_*\times_1 \bV^{(\iter-1)}.
$$
The latter implies that, for the sets $\calS_1$ and $\calS_2$ in $\reals^{M\times n\times n}$ defined by
\begin{align*}
\calS_1 & =\{\bbQ:  \rank(\bbQ(m,:,:))\leq K_m\  \mbox{for all}\  1\leq m\leq M\},\\
\calS_2 & =\{\bbQ = \bbQ_*\times_1\bV: \bV\in\reals^{M\times M}, \bV^T\bV=\bI\},
\end{align*}
 $\bbQ^{(\iter)}$ is the nearest point on $\calS_1$ to $\bbQ_*\times_1\bV^{(\iter-1)}$,
and $\bbQ_*\times_{2,3}\bV^{(\iter)}$ is the nearest point on $\calS_2$ to $\bbQ^{(\iter)}$.
Hence,  the update formula \eqref{eq:update2} can be viewed as an alternating projection between $\calS_1$ and $\calS_2$.

Denote the tangent planes to the sets $\calS_1$ and $\calS_2$ at $\bbQ_*$ by $L_1$ and $L_2$, respectively.
Then, the explicit formulas for $L_1$ and $L_2$ are given by
\begin{align} \label{eq:L1L2}
L_1 & =\{\bbQ\in\reals^{M\times n\times n}: \Pi_{\bU_m^\perp}\bbQ(m,:,:)\Pi_{\bU_m^\perp}=\mathbf{0}, \quad m=1, \ldots, M,   \}\\
L_2 & =\{\bbQ = \bbQ_*\times_1\bX: \bX\in \mathrm{Skew}_M\},  \nonumber
\end{align}
where $\mathrm{Skew}_M$  represents the set of skew-symmetric matrices of size $M\times M$. The intuition, the formal definition of tangent space, and the derivations of $L_1$ and $L_2$ are deferred to Section~\ref{sec:tangent}.

Hence,  the ``alternating projection'' viewpoint of \eqref{eq:update2} reveals that it is approximately
an alternating projection procedure between the subspaces $L_1$ and $L_2$. Since the projections onto
$L_1$ and $L_2$ are linear operators, the convergence rate of this alternating projection method
can be described by the operator norm of the composite operator:
\be \label{eq:kappa_H}
\kappa_H=\max_{\bbX\in\reals^{M\times n\times n}}\frac{\|P_{L_2}P_{L_1}\bbX\|_F}{\|\bbX\|_F}.
\ee
Since $P_{L_1}$ and $P_{L_2}$ are projection operators, one has $\|P_{L_1}(\bbX)\|_F\leq \|\bbX\|_F$,
 $\|P_{L_2}(\bbX)\|_F\leq \|\bbX\|_F$ and, therefore, $\kappa_H\leq 1$.

Note that $\kappa_H$ can be expressed via the smallest principal angle $\theta$  between planes $L_1$ and $L_2$: $\kappa_H=\cos(\theta)$.
In particular, $\kappa_H=0$ if $L_1$ and $L_2$ are perpendicular to each other, and $\kappa_H=1$ if intersection  $L_1\cap L_2$ is nontrivial.
As an  example, when $M=2$, one has $\dim(L_2)=1$, and $\kappa_H$ can be  explicitly written as
\[
\kappa_H=\sqrt{\frac{{\|\bbQ_*(1,:,:)-\Pi_{\bU_2^\perp}\bbQ_*(1,:,:)\Pi_{\bU_2^\perp}\|_F^2}
+\|\bbQ_*(2,:,:)-\Pi_{\bU_1^\perp}\bbQ_*(2,:,:)\Pi_{\bU_1^\perp}\|_F^2}{{\|\bbQ_*(1,:,:)\|_F^2+\|\bbQ_*(2,:,:)\|_F^2}}}.
\]
Since the algorithm in \eqref{eq:update2} is approximately
an alternating projection procedure between the subspaces $L_1$ and $L_2$, it converges faster for smaller values
of $\kappa_H$. Note that $\kappa_H=1$ if and only if $P_{L_2}P_{L_1}\bbX=\bbX$ for some $\bbX$,
i.e., when there is a nontrivial intersection  between planes $L_1$ and $L_2$.


\subsection{Assumptions }
\label{sec:assumptions}

In order to guarantee   linear convergence
of Algorithm~\ref{alg:update} when it is applied to the true probability tensor $\bP_*$,
we make the following assumption:\\

\noindent
\textbf{(A1).} The subspaces $L_1$ and $L_2$, defined in \eqref{eq:L1L2},  have only trivial intersection at the origin.
\\

\noindent
While Assumption \textbf{(A1)} is somewhat complicated, it is actually not very restrictive.
Specifically, the statement below provides two very simple sufficient conditions that guarantee  Assumption~\textbf{(A1)}.  In particular, Assumption \textbf{(A1(b))} holds if each clustering pattern is not obtained by  mixing other clustering patterns via combining or intersecting the clusters. For example, Assumption \textbf{(A1(b))} holds with high probability when the $M$ clustering patterns are drawn uniformly at random.

\begin{lem}\label{lem:assum_A1}
Let at least one of the following conditions hold:
\\

\noindent
{\rm {\bf(A1(a))}.} For every $1\leq m\leq M$, the sets of $(M-1)$ matrices
\bes
\lfi \Pi_{\bU_m^\perp}\bbQ_*(m',:,:)\Pi_{\bU_m^\perp}, m'\neq m, 1\leq m'\leq M  \rfi
\ees
are linearly independent.
That is, the vectorized versions of those matrices, a matrix of size $n^2\times M$ with the m-th column given by
$\vect(\Pi_{\bU_m^\perp}\bbQ_*(m',:,:)\Pi_{\bU_m^\perp})$, has rank $(M-1)$.
\\

\noindent
{\rm {\bf(A1(b))}.} For all $1\leq m\leq M$, one has
$$
\Span(\bTe_m)\not\in\bigoplus_{1\leq m'\leq M,m'\neq m}\Span(\bTe_{m'}),
$$
where $\bTe_m\in\reals^{n\times K_m}$ is the membership matrix for the $m$-th network as defined in \eqref{maineq}
and $\oplus$ stands for the direct sum of subspaces.
\\
\end{lem}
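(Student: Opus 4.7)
The plan is to establish $L_1\cap L_2=\{0\}$ under each condition separately by parametrizing an arbitrary element of $L_2$ as $\bbQ=\bbQ_*\times_1\bX$ with $\bX\in\RR^{M\times M}$ skew-symmetric, and then showing the $L_1$ constraint forces $\bX=0$. Reading off the $m$-th slice with the paper's mode-1 convention, I would first write
\[
\bbQ(m,:,:)=\sum_{m'=1}^{M}\bX(m',m)\,\bbQ_*(m',:,:)=\sum_{m'\neq m}\bX(m',m)\,\bbQ_*(m',:,:),
\]
where the second equality uses the vanishing diagonal of $\bX$. Substituting this into the defining relation $\Pi_{\bU_m^\perp}\bbQ(m,:,:)\Pi_{\bU_m^\perp}=0$ of $L_1$ yields, for each $m$,
\[
\sum_{m'\neq m}\bX(m',m)\,\Pi_{\bU_m^\perp}\bbQ_*(m',:,:)\Pi_{\bU_m^\perp}=0.
\]

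Under (A1(a)) the conclusion is immediate: the $M-1$ matrices $\Pi_{\bU_m^\perp}\bbQ_*(m',:,:)\Pi_{\bU_m^\perp}$, $m'\neq m$, are linearly independent by hypothesis, so $\bX(m',m)=0$ for every $m'\neq m$; ranging over $m$ and recalling that the diagonal already vanishes gives $\bX=0$, hence $\bbQ=0$.

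For (A1(b)) I would reduce to the statement in (A1(a)) by proving the same linear-independence conclusion from the spanning hypothesis. Suppose instead that a nontrivial combination $\sum_{m'\neq m}c_{m'}\Pi_{\bU_m^\perp}\bbQ_*(m',:,:)\Pi_{\bU_m^\perp}=0$ exists, and set $\bY=\sum_{m'\neq m}c_{m'}\bbQ_*(m',:,:)$. The relation $\Pi_{\bU_m^\perp}\bY\Pi_{\bU_m^\perp}=0$ is equivalent to writing $\bY=\bU_m\bS+\bS^T\bU_m^T$ for some $\bS\in\RR^{K_m\times n}$, which forces $\bY$ to carry $\Span(\bTe_m)^\perp$ into $\Span(\bTe_m)$. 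Independently, since $\bbQ_*(m',:,:)=\sqrt{|\calS_{m'}|}\,\bTe_{m'}\bB_{m'}\bTe_{m'}^T$, the column space of $\bY$ sits inside $\bigoplus_{m'\neq m}\Span(\bTe_{m'})$. Combining the two observations, $\bY\big(\Span(\bTe_m)^\perp\big)\subseteq \Span(\bTe_m)\cap\bigoplus_{m'\neq m}\Span(\bTe_{m'})$, which by (A1(b)) is a \emph{proper} subspace of $\Span(\bTe_m)$. Exploiting the directness of the sum in (A1(b)) to get a unique decomposition of $\bY$ along the summands $\Span(\bTe_{m'})$, I would then push this codimensional constraint back onto each summand to conclude $c_{m'}=0$.

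The delicate step will be precisely this last one: turning ``the image of $\bY$ restricted to $\Span(\bTe_m)^\perp$ lies in a strictly smaller subspace of $\Span(\bTe_m)$'' into the vanishing of every coefficient $c_{m'}$. The argument relies on two features that (A1(b)) is meant to supply simultaneously, namely the uniqueness of the direct-sum decomposition across $m'\neq m$ (which rules out cancellations between different summands) and the non-inclusion $\Span(\bTe_m)\not\subseteq\bigoplus_{m'\neq m}\Span(\bTe_{m'})$ (which creates the rigidity against the tangent form $\bU_m\bS+\bS^T\bU_m^T$). The final dimension count will also implicitly require each $\bB_{m'}$ to be nondegenerate, so that the column space of $\bTe_{m'}\bB_{m'}\bTe_{m'}^T$ equals $\Span(\bTe_{m'})$ and the decomposition of $\bY$ genuinely isolates each $c_{m'}\bbQ_*(m',:,:)$.
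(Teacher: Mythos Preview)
Your treatment of \textbf{(A1(a))} is correct and coincides with the paper's argument: parametrize $L_2$ by a skew-symmetric $\bX$, read off the $m$-th slice, impose the $L_1$ constraint, and invoke the assumed linear independence to force $\bX=0$.

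For \textbf{(A1(b))} you adopt the same route the paper states in a single sentence (``the second sufficient condition follows from the first sufficient condition directly''), namely to deduce the linear-independence condition of \textbf{(A1(a))} from \textbf{(A1(b))}. Your execution, however, has a real gap. You plan to ``exploit the directness of the sum in \textbf{(A1(b))}'' to obtain a unique decomposition of $\bY$ along the summands $\Span(\bTe_{m'})$ and then push the constraint back onto each $c_{m'}$. But that sum is \emph{never} direct: each $\bTe_{m'}$ is a clustering matrix, hence $\bTe_{m'}\bone_{K_{m'}}=\bone_n$, so the all-ones vector lies in every $\Span(\bTe_{m'})$ and any two of these subspaces intersect nontrivially. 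There is therefore no unique decomposition of $\bY=\sum_{m'\neq m}c_{m'}\bbQ_*(m',:,:)$ that isolates the individual coefficients, and the intersection $\Span(\bTe_m)\cap\sum_{m'\neq m}\Span(\bTe_{m'})$ into which you map $\Span(\bTe_m)^\perp$ is in general nonzero. The ``delicate step'' you flag is thus the crux of the matter, and the premise you plan to use for it is false in this setting. To be fair, the paper's one-line assertion does not supply these details either; a complete argument would need a mechanism other than direct-sum uniqueness (and, as you note, also the nondegeneracy of each $\bB_{m'}$, which is an additional assumption rather than a consequence of \textbf{(A1(b))}).
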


\noindent
The proof that these conditions are sufficient is presented in Section~\ref{sec:auxillary_proof}.
While conditions in Lemma~\ref{lem:assum_A1} are sufficient, they are not necessary.
A more detailed discussion of assumption  \textbf{(A1)} is deferred to  Section~\ref{sec:A1_discussion}.
\\

In addition,   we impose  few other natural assumptions as follows:\\

\noindent
\textbf{(A2).} There exist absolute constants $c_0>0$ such that $K_{\max}M\leq c_0\dot{K}$,
where $\dot{K}$ is defined in \eqref{eq:K_def}.
\\

\noindent
\textbf{(A3).} The layers in the network, as well as local communities in each network are balanced, i.e.,
there exist absolute constants $c_1,c_2, c_3, c_4$ such that
\begin{equation}\label{eq:assumptionA3}
c_1\, \frac{L}{M} \leq L_m \leq c_2\, \frac{L}{M}, \quad c_3\, \frac{n}{K_m} \leq |G_{m,k}| \leq c_4\, \frac{n}{K_m} \quad \mbox{for  any}\ \ 
1\leq m\leq M,\ 1\leq k\leq K_m,
\end{equation}
where $|G_{m,k}|$ is the size of the $k$-th community in cluster $m$.
\\

\noindent
\textbf{(A4).} There exist matrices     $\bB^0_m\in\reals^{K_m\times K_m}$, $m=1, \ldots, M$, such that
$\bB_m = p_{\max} \bB^0_m$, where $p_{\max}\in (0,1]$ controls the overall network sparsity 
 and the matrices $\bB^0_m$ are such that, for all $m=1, ...,M$, 
there exists some absolute constants $b_1>0$  and $b_2>0$ such that
\begin{equation}\label{eq:assumptionA4}\sigma_{K_m}(\bB_m^0)\geq b_1, \quad
\|\bB_m^0\|_F\geq b_2\, K_m, \quad
\|\bB_m^0\|_\infty\leq 1.
\end{equation}
We remark that the first two inequalities of  \eqref{eq:assumptionA4} imply that the magnitude of $\bB_m^0$ 
is bounded from  below, while the last inequality of  \eqref{eq:assumptionA4} ensures that the magnitude of $\bB_m^0$ 
is bounded from above. In addition,   $\|\bB_m^0\|_\infty\geq  K_m^{-1}\, \|\bB_m^0\|_F$, and \eqref{eq:assumptionA4} 
guarantees that $\|\bB_m^0\|_\infty$ is bounded from both below and above: $b_2\leq \|\bB_m^0\|_\infty\leq 1$.

\subsection{Convergence of ALMA }
\label{sec:convergence}

Denote by  $\kappa_0$  the condition number of matrix $\bbQ_*\times_{2,3}\bbQ_*$:
$$
\kappa_0=\frac{\sigma_1(\bbQ_*\times_{2,3}\bbQ_*)}{\sigma_M(\bbQ_*\times_{2,3}\bbQ_*)}.
$$
Let
\be \label{eq:betanL}
\beta_{n,L} =  \log^2(n+L)\sqrt{M^3 \, \kappa_0^5}  \left(\sqrt{\frac{1}{p_{\max}n^2}}+
\frac{\dot{K}^2}{p_{\max}n\min(n,L)}\right)
\ee
Then,  the following theorem shows that, with a good initialization that satisfies \eqref{eq:assumption23}, 
as the number of iterations tends to infinity, 
Algorithm~\ref{alg:update}   converges to a fixed point that is close to $\bbQ_*$ and $\bW_*$.

\begin{thm}\label{thm:simple_model}
Let Assumptions \textbf{(A1)}-\textbf{(A4)}  hold and  $\kappa_H$ be uniformly bounded away from one  for any $n$ and $L$ large enough.  Let,
for some positive absolute constant  $C_1$
{\begin{align}\label{eq:assumption13}
p_{\max}\, \geq C_1\max\left(\frac{\kappa_0^{12}\log^6(n+L)M^3\dot{K}^3}{(1-\kappa_H)^4\,n\, \min(n,L)},\frac{\log(n+L)}{n+L} \right)
\end{align}}
and the initialization  $\bW^{(1)}$  has an estimation error bounded above by some positive function $h$ of $M,\dot{K}$ and  $\kappa_0$:
\begin{equation}\label{eq:assumption23}
\|\bW^{(1)}-\bW_*\|_F\leq h(M,\dot{K},\kappa_0).
\end{equation}
Then, for some absolute constant $C_2 >0$  and $\iter\geq 1$,
with probability    $1 - o(1)$  as $n,L\rightarrow\infty$,   one has
\begin{align}
\|{\bW}^{(\iter+1)}-\bW_*\|_F\leq \frac{1+\kappa_H}{2}\|{\bW}^{(\iter)}-\bW_*\|_F +  {C_2}\,  \beta_{n,L}
\label{eq:hatW0}
\end{align}
Moreover, for any $\iter>1$, one has
\bes
\|{\bW}^{(\iter)}-\bW_*\|_F   \leq
\lkr \frac{1+\kappa_H}{2}\rkr^{\iter-1}\, \|{\bW}^{(1)}-\bW_*\|_F
  + \frac{2C_2}{1-\kappa_H}\, \beta_{n,L},
\ees
with probability    $1 - o(1)$  as $n,L\rightarrow\infty$.
Consequently, if
$\displaystyle \widetilde{\bW} = \lim_{\iter\rightarrow\infty}{\bW}^{(\iter)}$ and
$\displaystyle \widetilde{\bbQ} = \lim_{\iter\rightarrow\infty}{\bbQ}^{(\iter)}$, then
\begin{align}
\|\widetilde{\bW}-\bW_*\|_F\leq   \frac{2C_2}{1-\kappa_H}\,  \beta_{n,L}
\label{eq:hatW}
\end{align}
In addition, with probability    $1 - o(1)$  as $n,L\rightarrow\infty$,
 for some absolute constant $C_3 >0$ and any $m=1, \ldots, M$, one has
\begin{align}
 \left\|[\bbQ^{(\iter)} -\bbQ_*](m,:,:)\right \| \leq 2p_{\max}\,n\, \sqrt{L}\|{\bW}^{(\iter)} - \bW_*\|_F
+ C_3\, \sqrt{p_{\max}(n+L)}\, \log(n+L).
\label{eq:hatQ}
\end{align}
\\
\end{thm}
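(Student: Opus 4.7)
The plan is to decompose the iteration into a deterministic \emph{noiseless} part and a stochastic \emph{noise} part. Write $\bbA=\bbP_*+\bbE$ with $\bbE=\bbA-\bbP_*$ a mean-zero Bernoulli-noise tensor, so that
\[
\bbQ^{(\iter+1)}=\Pi_{\bK}\bigl(\bbP_*\times_1\bW^{(\iter)}+\bbE\times_1\bW^{(\iter)}\bigr),\qquad \bW^{(\iter+1)}=\Pi_o\bigl(\bbA\times_{2,3}\bbQ^{(\iter+1)}\bigr).
\]
Using $\bbP_*=\bbQ_*\times_1\bW_*^T$ exactly as in the derivation of \eqref{eq:update2}, the noiseless part is precisely the alternating projection scheme analyzed in Section~\ref{sec:convergence_true}, whose contraction rate at $\bbQ_*$ is governed by $\kappa_H<1$ via the composite $P_{L_2}P_{L_1}$. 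The noise part is a perturbation whose size is what ultimately dictates $\beta_{n,L}$.

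Second, I would carry out a local linearization argument. Near the fixed point, the nonlinear rank projection $\Pi_{\bK}$ and the orthogonal projection $\Pi_o$ agree to first order with the linear projections onto the tangent planes $L_1$ and $L_2$ defined in \eqref{eq:L1L2}. Using standard Davis--Kahan / Wedin type bounds, the discrepancy between these nonlinear maps and their linearizations is quadratic in the distance to the fixed point and is controlled by $\sigma_{K_m}^{-1}$ of $\bbQ_*(m,:,:)$, i.e.\ by $\kappa_0$. This is how the bare contraction rate $\kappa_H$ gets inflated to $(1+\kappa_H)/2$ in \eqref{eq:hatW0}: the extra $(1-\kappa_H)/2$ slack absorbs the quadratic linearization error, provided $\|\bW^{(\iter)}-\bW_*\|_F$ stays below the threshold set by $h(M,\dot K,\kappa_0)$ in \eqref{eq:assumption23}. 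Combining the contraction of the noiseless part with a bound on the noise part then yields the one-step recursion \eqref{eq:hatW0}, and the unfolded bound and the limiting bound \eqref{eq:hatW} follow from a geometric series.

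Third, the noise term is controlled by spectral-norm concentration of random matrices with independent Bernoulli entries. The key quantities are of the form $\|\bbE\times_1 \bW^{(\iter)}\|$ and $\|\bbE\times_{2,3}\bbQ^{(\iter)}\|$, which reduce, after flattening, to operator norms of $n\times n$ random matrices whose entries are $O(p_{\max})$-bounded and $p_{\max}$-variance; matrix Bernstein (or the symmetric-random-matrix bound of Bandeira--van Handel) gives $O(\sqrt{p_{\max}(n+L)\log(n+L)})$ bounds, and the extra $\dot K^2/(p_{\max}n\min(n,L))$ and $(p_{\max}n^2)^{-1/2}$ factors in $\beta_{n,L}$ come from converting the operator-norm bound on the noise into a Frobenius-norm bound on the resulting $\bW^{(\iter+1)}-\bW_*$ via Procrustes perturbation. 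The condition \eqref{eq:assumption13} is then exactly the requirement that $C_2\beta_{n,L}$ stays smaller than the radius of the neighborhood in which the contraction is valid, enabling an inductive argument that every iterate remains in this neighborhood. The final bound \eqref{eq:hatQ} on $\|[\bbQ^{(\iter)}-\bbQ_*](m,:,:)\|$ then follows directly: writing
\[
\bbQ^{(\iter)}-\bbQ_*=\Pi_{\bK}\bigl(\bbQ_*\times_1(\bW_*^T\bW^{(\iter-1)})+\bbE\times_1\bW^{(\iter-1)}\bigr)-\bbQ_*
\]
and using $\|\bW_*^T\bW^{(\iter-1)}-\bI\|\lesssim \|\bW^{(\iter-1)}-\bW_*\|_F$ for the first term, plus the spectral-norm bound on $\bbE\times_1\bW^{(\iter-1)}$ for the second, gives the two summands in \eqref{eq:hatQ} (with the factor $p_{\max} n\sqrt{L}$ arising from $\|\bbQ_*(m,:,:)\|\lesssim p_{\max}n\sqrt{L/M}$).

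The main obstacle is the coupling between the two nonlinearities across iterations: one must verify that every iterate stays in the \emph{same} good neighborhood of $(\bbQ_*,\bW_*)$ where both linearizations are accurate, and that a single high-probability event suffices to control the noise for \emph{all} iterations simultaneously. This is handled by induction, where the base case uses \eqref{eq:assumption23}, and the inductive step combines the one-step bound \eqref{eq:hatW0} with the fact that $(1+\kappa_H)/2<1$. Verifying the precise form of $\kappa_0$-dependence and the polylogarithmic factors in $\beta_{n,L}$ is bookkeeping that depends on how sharply one tracks the Wedin/Davis--Kahan constants through the rank-$K_m$ projections, and this is where the assumption (A1) (equivalently the non-degeneracy of $\kappa_H$) is essential.
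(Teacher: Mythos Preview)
Your high-level strategy matches the paper's: decompose into a noiseless alternating-projection part contracting at rate $\kappa_H$, plus noise, with linearization errors absorbed into the $(1-\kappa_H)/2$ slack, and close the loop by induction. The derivation of \eqref{eq:hatQ} is also essentially the paper's argument.

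However, there is a real gap in how you transfer the $\kappa_H$-contraction from the $\bbQ$-side to the $\bW$-side. The quantity $\kappa_H$ in \eqref{eq:kappa_H} is the operator norm of $P_{L_2}P_{L_1}$ acting on tensors in $\reals^{M\times n\times n}$, so what it gives directly is
\[
\|\bbQ_*\times_1\hat\bY^{(\iter+1)}\|_F\le\kappa_H\,\|\bbQ_*\times_1\bY^{(\iter)}\|_F,
\]
where $\bY^{(\iter)}$ is the skew-symmetric part of $\bW_*^T\bW^{(\iter)}$. This is \emph{not} a contraction of $\|\bY^{(\iter)}\|_F$ (hence of $\|\bW^{(\iter)}-\bW_*\|_F$) unless $\bbQ_*\times_{2,3}\bbQ_*$ is a multiple of the identity; otherwise the ratio can be off by the condition number of $\calM_1(\bbQ_*)$, and $\kappa_H$ times that factor need not be below $1$. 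The paper addresses this by running the entire one-step recursion in a custom norm
\[
\|\bW\|_d=\sqrt{\lambda^2\|\bbP_*\times_1 P_{L_0}\bW\|_F^2+\|P_{L_0^\perp}\bW\|_F^2},
\]
chosen precisely so that $\|\bW_*\bY\|_d=\lambda\|\bbQ_*\times_1\bY\|_F$ on the skew-symmetric tangent $L_0$. The contraction \eqref{eq:monotone0} is proved in $\|\cdot\|_d$, and only at the end is it converted back to Frobenius via the equivalence $\|\cdot\|_F\le\|\cdot\|_d\le C_H\|\cdot\|_F$ with $C_H\le\sqrt{\kappa_0}$; this is also where the $\kappa_0^{5/2}$ in $\beta_{n,L}$ comes from. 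Your sketch implicitly assumes the Frobenius-norm recursion holds directly, which the paper explicitly flags as unsupported.

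A smaller point: the noise terms that actually appear after linearizing the $\bW$-update are not raw quantities like $\|\bbE\times_1\bW^{(\iter)}\|$ or $\|\bbE\times_{2,3}\bbQ^{(\iter)}\|$, but projected ones such as $\|\Pi_{T,K_m}(\bbQ_*)\times_{2,3}\Pi_{T,K_m}(\bbE)\|$ and $\|\Pi_{T,K_m}(\bbE)\times_{2,3}\Pi_{T,K_m}(\bbE)\|$, because the relevant perturbation enters through $\Pi_{T,\bK}(\bbA\times_1\bW^{(\iter)}-\bbQ_*)$. Bounding these is what produces the two summands in $\beta_{n,L}$ (the $1/\sqrt{p_{\max}n^2}$ term from the cross term, the $\dot K^2/(p_{\max}n\min(n,L))$ term from the quadratic-in-noise term), and it requires concentration for sums over communities rather than a single global spectral bound on $\bbE$.
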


Based on the iterative update formula for the estimation error in \eqref{eq:hatW0}, we can
assess performance of Algorithm~\ref{alg:update} with the stopping criterion $\eps_{n,L}$.
\\

\begin{cor}\label{cor:main}
Under the assumptions and the notations of Theorem~\ref{thm:simple_model},
if
\be \label{eq:iter_cond}
\|{\bW}^{(\iter +1)}-{\bW}^{(\iter)}\|_F   \leq \eps_{n,L},
\ee
then, with probability    $1 - o(1)$  as $n,L\rightarrow\infty$, one has
\be \label{eq:alg1_er_bound}
\|{\bW}^{(\iter)}-\bW_*\|_F   \leq  \frac{2 \eps_{n,L}}{1-\kappa_H}\,  
+ \frac{2C_2}{1-\kappa_H}\, \beta_{n,L}.
\ee
As a result, if  $\epsilon_{n,L} \geq 6\, C_2\, (1-\kappa_H)^{-1}\, \beta_{n,L}$,
then Algorithm~\ref{alg:update} would stop within at most $T_{n,L}$ iterations, where
\bes
 T_{n,L} =
\log \lkr \frac {6 \, \|\bW^{(1)}-\bW_*\|_F} {\epsilon_{n,L}} \rkr \Bigg/ \log\lkr \frac{2}{1+\kappa_H} \rkr
\ees
\\
\end{cor}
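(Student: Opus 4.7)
The plan is to derive both claims directly from Theorem~\ref{thm:simple_model}, using only the triangle inequality and the contraction bound \eqref{eq:hatW0}. No new probabilistic analysis is needed; the entire high-probability event is inherited from the theorem, so both statements hold on the same event of probability $1-o(1)$.

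For the first claim (the bound \eqref{eq:alg1_er_bound}), I would start from the triangle inequality
$$\|\bW^{(\iter)}-\bW_*\|_F \leq \|\bW^{(\iter+1)}-\bW^{(\iter)}\|_F + \|\bW^{(\iter+1)}-\bW_*\|_F,$$
use the stopping criterion \eqref{eq:iter_cond} on the first term, and apply \eqref{eq:hatW0} to the second term, obtaining
$$\|\bW^{(\iter)}-\bW_*\|_F \leq \eps_{n,L} + \tfrac{1+\kappa_H}{2}\,\|\bW^{(\iter)}-\bW_*\|_F + C_2\,\beta_{n,L}.$$
Moving the $\|\bW^{(\iter)}-\bW_*\|_F$ term to the left and dividing by $\tfrac{1-\kappa_H}{2}$ yields \eqref{eq:alg1_er_bound} exactly.

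For the second claim (the iteration count $T_{n,L}$), I would first unroll Theorem~\ref{thm:simple_model} to obtain the geometric convergence
$$\|\bW^{(\iter)}-\bW_*\|_F \leq \Bigl(\tfrac{1+\kappa_H}{2}\Bigr)^{\iter-1}\|\bW^{(1)}-\bW_*\|_F + \tfrac{2C_2}{1-\kappa_H}\,\beta_{n,L},$$
and then bound the consecutive difference via
$$\|\bW^{(\iter+1)}-\bW^{(\iter)}\|_F \leq \|\bW^{(\iter+1)}-\bW_*\|_F + \|\bW^{(\iter)}-\bW_*\|_F \leq 2\Bigl(\tfrac{1+\kappa_H}{2}\Bigr)^{\iter-1}\|\bW^{(1)}-\bW_*\|_F + \tfrac{4C_2}{1-\kappa_H}\,\beta_{n,L},$$
using $1+\tfrac{1+\kappa_H}{2}\leq 2$ to combine the two geometric terms. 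Under the hypothesis $\eps_{n,L}\geq 6\,C_2(1-\kappa_H)^{-1}\beta_{n,L}$, the noise-floor contribution $\tfrac{4C_2}{1-\kappa_H}\beta_{n,L}$ is at most $\tfrac{2}{3}\eps_{n,L}$, so it suffices to force the geometric term to satisfy $2\bigl(\tfrac{1+\kappa_H}{2}\bigr)^{\iter-1}\|\bW^{(1)}-\bW_*\|_F \leq \tfrac{1}{3}\eps_{n,L}$. Taking logarithms and inverting the direction of the inequality (since $\log(2/(1+\kappa_H))>0$) produces exactly the threshold $\iter \geq T_{n,L}$ stated in the corollary.

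The only real subtlety is bookkeeping the constant $6$ in $T_{n,L}$: it arises because the consecutive-difference bound introduces a factor of $2$ on the geometric term and a factor of $2$ on the noise floor, and one needs the noise floor to occupy at most $\tfrac{2}{3}$ of the budget so that $\tfrac{1}{3}\eps_{n,L}$ is left for the geometric term, giving the ratio $\tfrac{6\|\bW^{(1)}-\bW_*\|_F}{\eps_{n,L}}$ inside the logarithm. Aside from this careful matching of constants, the argument is a routine one-page calculation with no essential obstacle.
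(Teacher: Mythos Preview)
Your proposal is correct and is exactly the natural derivation the paper leaves implicit (the paper states Corollary~\ref{cor:main} without supplying a separate proof, treating it as an immediate consequence of Theorem~\ref{thm:simple_model}). Your triangle-inequality plus contraction-bound argument for \eqref{eq:alg1_er_bound}, and your unrolling plus consecutive-difference bound for $T_{n,L}$, are precisely the intended route; the only residual is a harmless off-by-one in the iteration index, which is absorbed by the phrase ``within at most $T_{n,L}$ iterations.''
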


\begin{rem} \label{rm:permutations}
{\bf Permutations. \ }
{\rm
We remark that $\bW^{(1)}$ needs to be close to $\bW_*$ up to  a permutation of columns, but this permutation has no impact
on the between-layer and within-layer clustering results, as the outputs of Algorithm~\ref{alg:update},
$\widehat{\bW}$ and $\widehat{\bbQ}$, would also be close to $\bW_*$ and $\bbQ_*$
up to permutations of columns and layers, respectively.
\\
}
\end{rem}

\begin{rem} \label{rm:initialization}
{\bf Initialization. \ }
{\rm
Theorem~\ref{thm:simple_model} and Corollary~\ref{cor:main}  require
initialization $\bW^{(1)}$ that satisfies condition \eqref{eq:assumption23}.
If $M$, $\dot{K}$ and $\kappa_0$ are uniformly bounded above for any values of $n$ and $L$, then \eqref{eq:assumption23}
is satisfied by any matrix  $\bW^{(1)}$ since for any $L \times M$ matrix $\bW$ such that $\bW^T \bW = I_M$ one has
$ \|\bW \|^2_F = M$, i.e. condition \eqref{eq:assumption23} holds with $h(M,\dot{K},\kappa_0) = 2 \sqrt{M}$.
In order to obtain more precise results, one
can   first obtain an  initial between-layer clustering matrix $\bZ^{(1)} \in \{ 0,1 \}^{L \times M}$ using, e.g.,
spectral clustering algorithm on vectorized layer matrices $\{\bA_l\}_{l=1}^L$, and then   set
\begin{equation}\label{eq:initialclustering}
 {\bW}^{(1)} = \bZ^{(1)} ((\bZ^{(1)})^T \bZ^{(1)})^{-1/2}.
\end{equation}
However,   as it follows from Theorem~\ref{thm:simple_model} and Corollary~\ref{cor:main},
the errors are smaller and the convergence of ALMA algorithm is faster when a more accurate initialization $\bW^{(1)}$  is used. 
For this reason, in Section~\ref{sec:initialization} we present a   more involved initialization procedure.
\\
}
\end{rem}


{\bf Sketch of the proof.\ \ }  The proof of Theorem~\ref{thm:simple_model}  is deferred to Section~\ref{sec:proof_main}.
Below we provide some insight into how this theorem can be proved.
The proof of the main inequality \eqref{eq:hatW0} in Theorem~\ref{thm:simple_model} can be divided into four steps.

The first step establishes a deterministic bound  on  $\|{\bW}^{(\iter)}-\bW_*\|$  for any given fixed $\bbA$.
The second   and the third steps establish    probabilistic bounds for a random tensor $\bbA$
under the probabilistic model in Section~\ref{sec:model}.
Finally, the fourth step simplifies this probabilistic bound using  Assumptions \textbf{(A2)}-\textbf{(A4)}.

As for the proof of \eqref{eq:hatQ}, it is based on the following chain of inequalities
\begin{align}
\|[\hat{\bbQ}-\bbQ_*](m,:,:)\| & =\|\Pi_{K_m}[\bbA\times_1\hat{\bW}]-\bbQ_*(m,:,:)\|
\leq 2\|\bbA\times_1\hat{\bW}-\bbQ_*(m,:,:)\| \nonumber  \\
& \leq  2\|\bbP_*\times_1(\hat{\bW}-\bW_*)\|+2\|\bbDelta\times_1\hat{\bW}(:,m)\|
\leq  2\|\bbQ_*\|\|\hat{\bW}-\bW_*\|+2\|\bbDelta\|    \nonumber
\\
&\leq 2\|\bbQ_*\|\|\hat{\bW}-\bW_*\|_F+2\|\bbDelta\|,
\label{eq:hatQ1}
\end{align}
where $\bbDelta=\bbA-\bbP_*$, and the factor 2 in the first inequality follows from
\[
\|\bbA\times_1\hat{\bW}-\Pi_{K_m}[\bbA\times_1\hat{\bW}]\|\leq \|\bbA\times_1\hat{\bW}-\bbQ_*(m,:,:)\|.
\]
Inequality \eqref{eq:hatQ} is then obtained by combining \eqref{eq:hatQ1}
with the upper bounds on   $\|\bbDelta\|$
in Lemma~\ref{lemma:prob_main} (see Appendix) and the fact that $\|\bbQ_*\|\leq \|\bbQ_*\|_F\leq p_{\max}n\sqrt{L}$.


\subsection{Consistency of between-layer   and within-layer clustering}
\label{sec:layer_clust}

This section studies   misclassification error rates of network clustering and   local community detection.
The misclassification error rates are measured by the Hamming distance between clustering partitions.
{Since the clustering is unique only up to a permutation of clusters, denote the set of permutation functions of $[r]=\{1, \cdots, r\}$ by $\aleph(r)$.}

Given the true partition of network layers $[L] = \{1,\cdots, L\}=\cup_{m=1}^{M}\calS_m$ and the estimated partition
$[L]=\cup_{m=1}^{M}\hat{\calS}_m$, the misclassification error rate of between-layer clustering is given by
\be \label{eq:err_between}
{R_{BL} = L^{-1}\ \min_{\tau \in \aleph{(M)}}\ \sum_{m=1}^M|\calS_m\setminus \hat{\calS}_{\tau(m)}|.}
\ee
The misclassification rate of within-layer clustering is defined similarly: given the true partition of vertices
$[n]=\{1,\cdots, n\}=\cup_{k=1}^{K_m}G_{m,k}$ and the estimated partition $\cup_{k=1}^{K_m}\hat{G}_{m,k}$,
the misclassification error rate of within-layer  clustering for the $m$-th group of layers is given by
\be \label{eq:err_wihin}
{R_{WL} (m) = n^{-1} \ \min_{\tau \in \aleph(K_m)}\ \sum_{k=1}^{K_m}|G_{m,k}\setminus \hat{G}_{m,\tau(k)}|.}
\ee
The derivations of both misclassification rates are based on the upper bound  \eqref{eq:hatQ} and Lemma C.1 of \cite{lei2020tail}.
In addition, the analysis of  within-layer clustering also applies the Davis-Kahan theorem.
Our results on the misclassification errors are as follows, with the proof deferred to Section~\ref{sec:proof_classification}.
\\

\begin{thm}\label{thm:classification}
(a)  [Between-layer clustering error] For an $(1+\epsilon)$ approximate solution of the $K$-means problem,
with probability    $1 - o(1)$  as $n,L\rightarrow\infty$, the between-layer clustering error is bounded by
\be \label{eq:RBL}
R_{BL} \leq C_{\epsilon} \, \frac{\beta_{n,L}^2}{M^2 (1-\kappa_H)^2}.
%
\ee
for some constant $C_{\epsilon}$ depending on $\epsilon$.

(b) [Within-layer clustering] With probability    $1 - o(1)$  as $n,L\rightarrow\infty$,
the within-layer clustering error of the $m$-th type of network  is bounded by
\be \label{eq:RWL}
R_{WL} (m)  \leq C_{\epsilon}K_{\max} \lkr \frac{\beta_{n,L}^2}{(1-\kappa_H)^2} +
\frac{\log^2(n+L)}{n\, L\, p_{\max}} \rkr, \quad m=1, \ldots, M.
\ee
Here, $\kappa_H$ and $\beta_{n,L}$ are defined in, respectively, \eqref{eq:kappa_H} and \eqref{eq:betanL}.
\end{thm}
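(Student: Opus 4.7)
The plan is to reduce both misclassification bounds to the combination of the estimation results of Theorem~\ref{thm:simple_model} and Corollary~\ref{cor:main}, which control $\|\widehat{\bW}-\bW_*\|_F$ and the operator norm of $[\widehat{\bbQ}-\bbQ_*](m,:,:)$, with Lemma~C.1 of \cite{lei2020tail}, which converts an $\ell_2$ estimation error for a membership-type matrix into a misclassification rate for the $(1+\epsilon)$-approximate $K$-means procedure. The within-layer step needs one extra ingredient, a Davis--Kahan sin-$\theta$ bound that transfers the operator-norm bound \eqref{eq:hatQ} into a Frobenius bound on the leading eigenvectors of $\widehat{\bbQ}(m,:,:)$.

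For part~(a), the rows of $\bW_*=\bZ(\bZ^T\bZ)^{-1/2}$ take exactly $M$ distinct values: for $l\in \calS_m$ the $l$-th row equals $L_m^{-1/2}\bfe_m$, where $\bfe_m$ denotes the $m$-th canonical basis vector in $\reals^M$. Under Assumption~\textbf{(A3)} one has $L_m\asymp L/M$, so the squared Euclidean distance between any two distinct row-centers is bounded below by $cM/L$. Applying Lemma~C.1 of \cite{lei2020tail} to the $(1+\epsilon)$-approximate $K$-means run on the rows of $\widehat{\bW}$, the number of misclassified layers is bounded by $C_\epsilon\,\|\widehat{\bW}-\bW_*\|_F^{2}\big/(cM/L)$. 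Dividing by $L$ and substituting the bound on $\|\widehat{\bW}-\bW_*\|_F$ produced by Corollary~\ref{cor:main} (or, equivalently, the limiting bound \eqref{eq:hatW}) delivers \eqref{eq:RBL} up to absolute constants.

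For part~(b), I would work type-by-type. Fix $m$ and let $\bU_m\in\reals^{n\times K_m}$ collect the top $K_m$ eigenvectors of $\bbQ_*(m,:,:)=\sqrt{L_m}\,\bTe_m\bB_m\bTe_m^T$; by the SBM structure in \eqref{maineq} together with Assumption~\textbf{(A3)}, the rows of $\bU_m$ partition into $K_m$ distinct values indexed by the communities $G_{m,1},\ldots,G_{m,K_m}$, with minimum squared row-distance of order $K_m/n$. Let $\widehat{\bU}_m$ collect the top $K_m$ eigenvectors of $\widehat{\bbQ}(m,:,:)$. The Davis--Kahan sin-$\theta$ theorem then furnishes an orthogonal $O_m$ with
\bes
\|\widehat{\bU}_m-\bU_m O_m\|_F \;\lesssim\; \frac{\sqrt{K_m}\,\|[\widehat{\bbQ}-\bbQ_*](m,:,:)\|}{\sigma_{K_m}\bigl(\bbQ_*(m,:,:)\bigr)}.
\ees
Assumptions \textbf{(A3)} and \textbf{(A4)} give $\sigma_{K_m}(\bbQ_*(m,:,:))=\sqrt{L_m}\,\sigma_{K_m}(\tilde{\bQ}_{*m})\gtrsim p_{\max}\,n\sqrt{L/M}/K_m$. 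Plugging in \eqref{eq:hatQ} with $\|\widehat{\bW}-\bW_*\|_F\lesssim \beta_{n,L}/(1-\kappa_H)$ controls $\|\widehat{\bU}_m-\bU_m O_m\|_F^{2}$ by $K_m$ times the bracketed quantity appearing in \eqref{eq:RWL}. A second invocation of Lemma~C.1 of \cite{lei2020tail}, this time applied to the $(1+\epsilon)$-approximate $K$-means on the rows of $\widehat{\bU}_m$ with row-separation of order $K_m/n$, converts this Frobenius estimate into the stated misclassification bound \eqref{eq:RWL}.

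The main bookkeeping hurdle is the Davis--Kahan step: one must verify that the signal $\sigma_{K_m}(\bbQ_*(m,:,:))$ dominates the operator-norm perturbation $\|[\widehat{\bbQ}-\bbQ_*](m,:,:)\|$ by a margin sufficient for the sin-$\theta$ bound to be informative, and then keep track of the factors $M$, $K_{\max}$, $n$, $L$, $p_{\max}$ through both the Davis--Kahan step and Lei's $K$-means lemma, so that the final expression has exactly the form claimed in \eqref{eq:RWL}. Once these bookkeeping steps are carried out, the argument is an essentially mechanical combination of Theorem~\ref{thm:simple_model} with the two lemmas listed above.
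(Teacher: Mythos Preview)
Your proposal is correct and follows essentially the same route as the paper: Lemma~C.1 of \cite{lei2020tail} applied to the rows of $\widehat{\bW}$ for part~(a), and Davis--Kahan followed by the same lemma applied to $\widehat{\bU}_m$ for part~(b), with the row separations computed exactly as you describe under Assumption~\textbf{(A3)}. The only cosmetic difference is that the paper writes the Davis--Kahan bound with the Frobenius norm $\|[\widehat{\bbQ}-\bbQ_*](m,:,:)\|_F$ in the numerator and then uses the rank-$2K_m$ structure of the perturbation to pass to the operator norm, whereas you invoke the operator-norm variant with the explicit $\sqrt{K_m}$ factor directly; the two are equivalent here.
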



\subsection{Discussion  of theoretical guarantees}
\label{sec:discuss_theor}


\subsubsection{Discussion of Assumption \textbf{(A1)}}
\label{sec:A1_discussion}

This section shows that Assumption \textbf{(A1)} is not restrictive, and is usually satisfied in practice.
In Lemma~\ref{lem:assum_A1}, we have already provided   sufficient conditions that guarantee validity of  Assumption~\textbf{(A1)}.
Below, we continue the discussion of this assumption.

\begin{figure}\label{fig:example}
\caption{Examples of convergence rates of the alternating projection algorithm. Left: $\calS_1$ and $\calS_2$ are ``tangent'' to each other and their tangent planes have nontrivial intersections. Right: the tangent planes of $\calS_1$ and $\calS_2$ only have a trivial intersection.}
\begin{center}
\includegraphics[height=3.6cm]{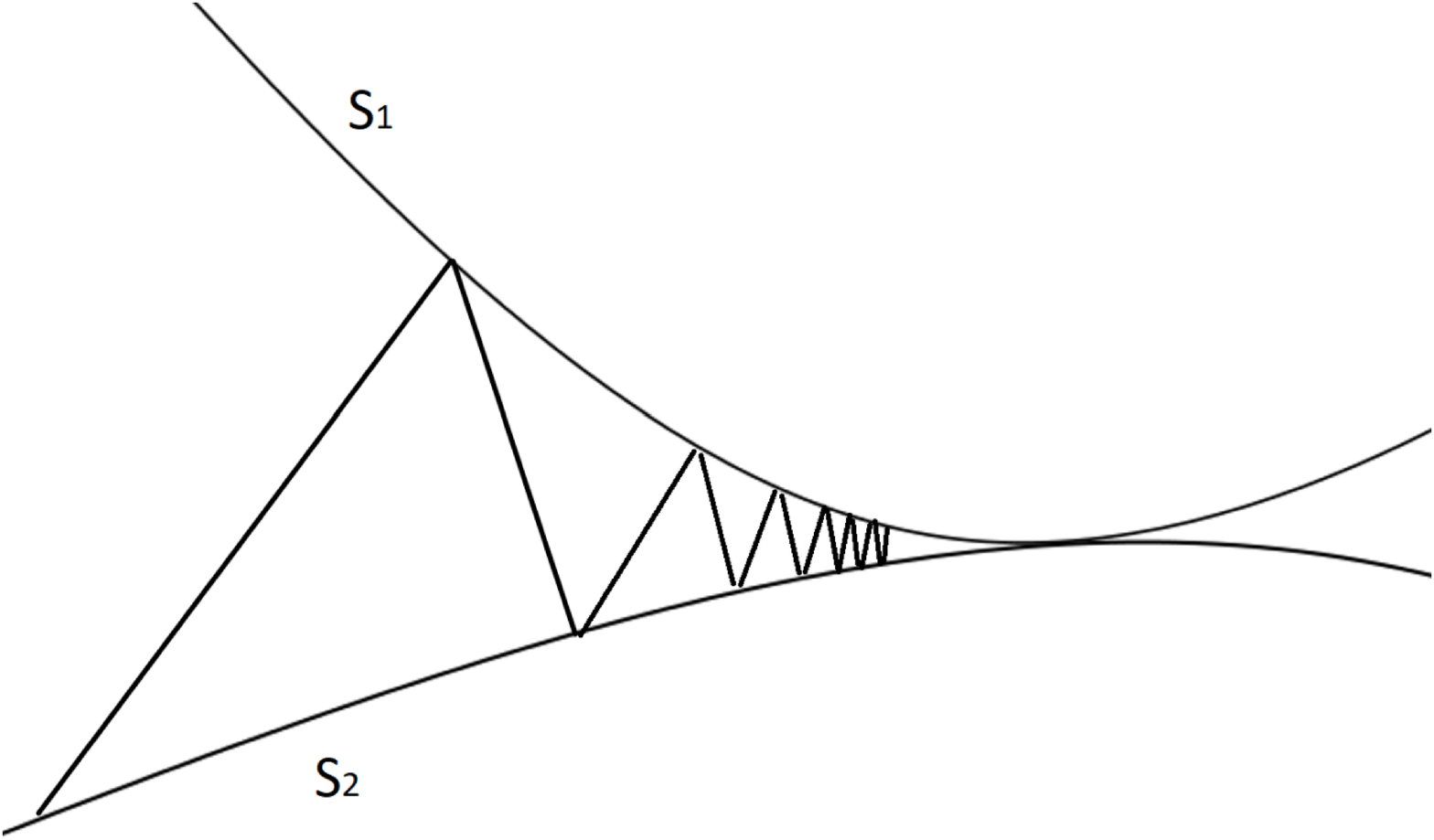}\hspace{0.5in}\includegraphics[height=3cm]{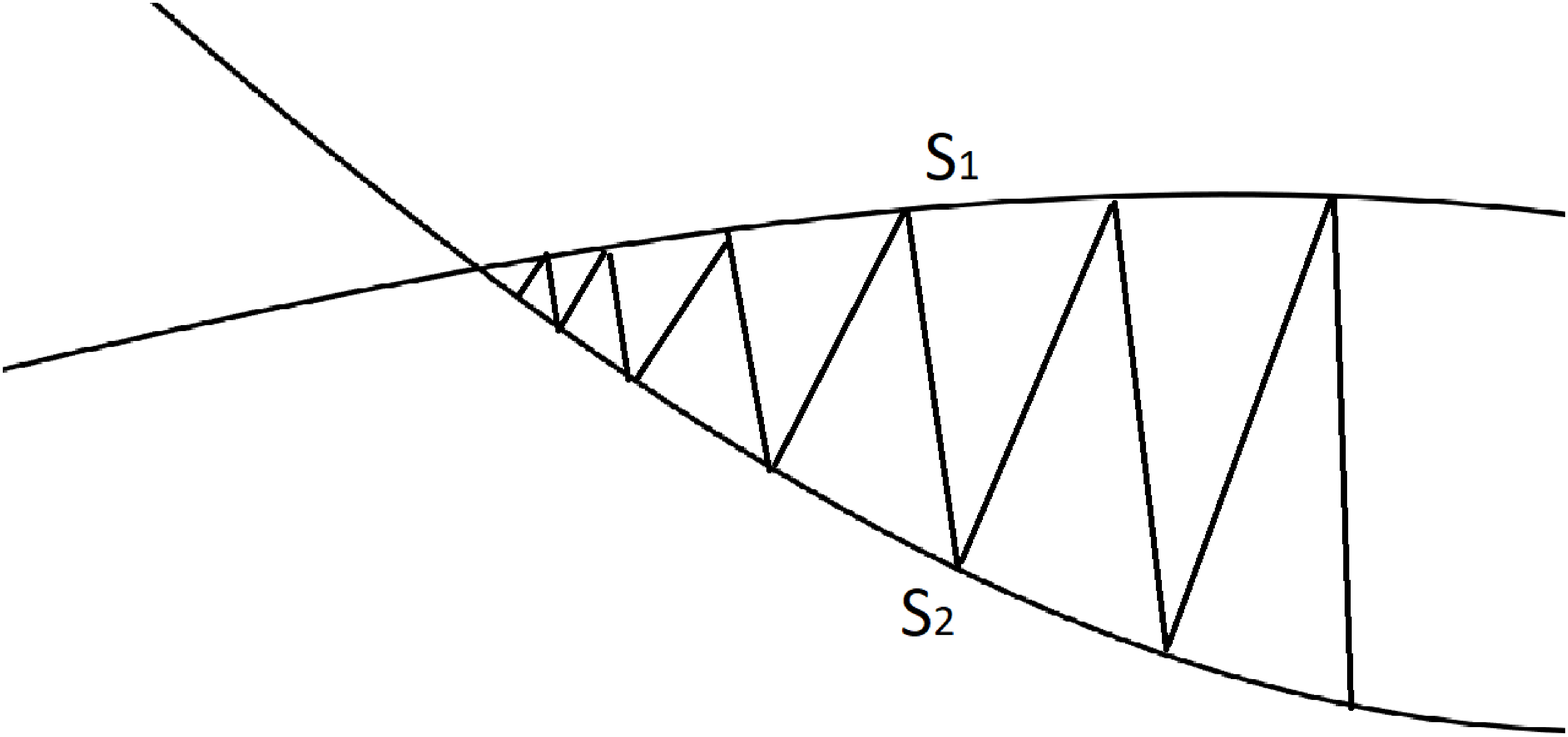}
\end{center}
\end{figure}

The fact that Assumption~\textbf{(A1)} is not restrictive can also be inferred
from counting the dimensions of $L_1$ and $L_2$. Since a symmetric matrix $\bbQ(m,:,:)$ has  $n(n-1)/2$ degrees of freedom,
the set
$$
\{\bbQ(m,:,:): \Pi_{\bU_m^\perp}\bbQ_(m,:,:)\Pi_{\bU_m^\perp}=\mathbf{0}\}
$$
has $n(n-1)/2-(n-K_m)(n-K_m-1)/2= K_m n-  K_m(K_m+1)/2$ degrees of freedom. Summing those up for  $1\leq m\leq M$,
obtain that the dimension of $L_1$
 is $\sum_{m=1}^M (K_m n-  K_m(K_m+1)/2)$. Since the set $\mathrm{Skew}_M$ has $ M(M-1)/2$ degrees of freedom,
$L_2$ has a dimension of $M(M-1)/2$. Since random subspaces of dimensions $d_1$ and $d_2$ in $\reals^D$ 
do not intersect  if $d_1+d_2\leq D$, that is,  if
$$
\frac{M(M-1)}{2} + \sum_{m=1}^M (K_m n- \frac{K_m(K_m+1)}{2})\leq M\, n^2,
$$
(which holds when $n$ is large), then \textbf{(A1(b))} should  usually hold.
\\

We also remark that Assumption \textbf{(A1)} is slightly more restrictive than the local uniqueness of the solution
to the problem \eqref{eq:objective} in the noiseless scenario, which only requires that
$\calS_1$ and $\calS_2$ intersect only at $\bbQ_*$. However, our goal is to prove the linear convergence
of Algorithm~\ref{alg:update}, and, as it is shown in  Figure 1 below, the convergence rate of the alternating method
in  Algorithm~\ref{alg:update} would be slow  and nonlinear if $\calS_1$ and $\calS_2$ are ``tangent'' to each other
and their tangent planes have nontrivial intersections. On the other hand,  the convergence rate is linear
if the tangent planes to $\calS_1$ and $\calS_2$ have only   a trivial intersection.
\\

 We should mention that Assumption \textbf{(A1)} fails in the case of the ``checker board'' model of \cite{JMLR:v21:18-155},
where all networks have the same community structures. As we have indicated, we are not interested in carrying out the inference 
in this case. However, we remark that the ALMA   still succeeds empirically in the checker board model, even though the Assumption \textbf{(A1)} is violated.


\begin{rem} \label{rm:uniqueness}
{\bf Uniqueness of the solution. \ }
{\rm 
Similar to many clustering problems, the solution of the optimization problem  \eqref{eq:objective1} is  unique only up to
permutations of clusters. The non-uniqueness due to permutation of clusters, however, does not cause  difficulty
for Algorithm~\ref{alg:update}. Hence,  Theorem~\ref{thm:simple_model} still applies: if the initialization
$\bW^{(1)}$ is reasonably close to $\bW_*$, then Algorithm~\ref{alg:update} converges to the true solution. 
\\
}
\end{rem}


\subsubsection{Initialization}
\label{sec:initialization}

We remark that   Theorem~\ref{thm:simple_model} and Corollary~\ref{cor:main} require a good initialization
such that \eqref{eq:assumption23}  holds. Also, as Remark~\ref{rm:initialization} states, the errors in
Theorem~\ref{thm:simple_model} and Corollary~\ref{cor:main} are smaller, and the convergence of Algorithm \ref{alg:update}  is faster
when a more accurate initialization $\bW^{(1)}$  is used.
For this reason, in this section we present a  more involved initialization procedure, summarized in Algorithm~\ref{alg:initial},  
which is based on an initial estimator of the between-layer clustering.

%

\begin{algorithm}   [tb]
\caption{Initialization of Algorithm~\ref{alg:update}}
\label{alg:initial}
\begin{flushleft}
{\bf Input:}  {Adjacency matrices $\bA_l\in\reals^{n\times n}$, $1\leq l\leq L$; number of different layers $M$;
$\dot{K}$.}  \\
{\bf Output:} {$\bW^{(1)}$.} \\
{\bf Steps:}\\
{\bf 1:} Set  $\bW^{(0)}\in\reals^{L\times M}$ to be composed of the top $M$ left singular vectors of $\calM_3(\bbA)$.\\
{\bf 2:} Apply the $K$-means algorithms to the rows of $\bW^{(0)}$ to obtain an initial estimator of the between-layer clustering
$[L]=\cup_{m=1}^M\tilde{\calS}_m$.\\
{\bf 3:} Apply \eqref{eq:initialclustering} to obtain $\bW^{(1)}$ from $[L]=\cup_{m=1}^M\tilde{\calS}_m$.
\end{flushleft}
\end{algorithm}

\noindent
Theoretical guarantees on this initialization are given by the statement below. Its proof is deferred to Section~\ref{sec:auxillary_proof}.
\\

\begin{prop} \label{prop:initial}
(Theoretical guarantee of Algorithm~\ref{alg:initial}.)\  Assume that  $p_{\max} \geq c\frac{\log (n^2+L)}{\min(n^2,L)}$  
for some constant $c > 0$. Then for any $r > 0$, there exists
  a constant $C > 0$ depending only on $r, c, k$, such that, with probability at least $1 - n^{-r}$ , 
\[
\|\bW^{(1)}-\bW_*\|_F\leq \sqrt{\frac{C_{\epsilon}M(n^2+L)}{p_{\max}n^2L}}\ 
\frac{\displaystyle \max_{1\leq m\leq M}L_m}{\displaystyle \min_{1\leq m\leq M}L_m}.
\]
\end{prop}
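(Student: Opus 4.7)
The plan is to proceed in three stages: a Davis--Kahan type spectral bound on $\bW^{(0)}$, a $K$-means perturbation step, and a translation of the resulting misclustering count into a Frobenius bound on $\bW^{(1)}$.

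First, I would exploit the algebraic identity that the layer-mode matricization $\bM_*\in\reals^{L\times n^2}$ of $\bbP_*$ factors as $\bM_*=\bW_*\bR$, where $\bR\in\reals^{M\times n^2}$ has rows $\vect(\bQ_{*m})^T$. Because $\bW_*^T\bW_*=\bI$, the top $M$ left singular vectors of $\bM_*$ coincide with $\bW_*$ up to an orthogonal rotation $\bO\in O(M)$, and $\sigma_M(\bM_*)=\sigma_M(\bR)$. To bound the perturbation, note that the rows of $\bM-\bM_*$ (where $\bM$ is the matricization of $\bbA$) are independent across layers with entries bounded by $1$ and variance $\lesssim p_{\max}$, so a standard matrix Bernstein argument produces
\[
\|\bM-\bM_*\|\lesssim \sqrt{p_{\max}(n^2+L)}
\]
with probability at least $1-(n^2+L)^{-r}$; here the sparsity hypothesis $p_{\max}\geq c\log(n^2+L)/\min(n^2,L)$ keeps us in the regime where Bernstein delivers the Gaussian-like leading term. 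For the denominator, I would combine $\sigma_M(\bZ\bR)^2\geq L_{\min}\sigma_M(\bR)^2$ with Assumption (A4) (which ensures $\|\tilde{\bQ}_{*m}\|_F\asymp p_{\max}n$) and a linear-independence condition such as (A1(b)) to get $\sigma_M(\bM_*)\gtrsim p_{\max}\,n\sqrt{L/M}$. The Davis--Kahan $\sin\Theta$ theorem then yields an orthogonal $\bO$ with
\[
\|\bW^{(0)}-\bW_*\bO\|_F\lesssim\sqrt{M}\;\frac{\|\bM-\bM_*\|}{\sigma_M(\bM_*)}\lesssim \sqrt{\frac{M(n^2+L)}{p_{\max}\,n^2\,L}}.
\]

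Second, I would apply the $(1+\epsilon)$-approximate $K$-means perturbation lemma (of the sort proved by Lei--Rinaldo and stated as Lemma C.1 of Lei 2020, which the paper already invokes elsewhere) to the rows of $\bW^{(0)}$. The population matrix $\bW_*\bO$ has exactly $M$ distinct rows, with the row of layer $l$ of norm $L_{m(l)}^{-1/2}$ and with minimum pairwise separation at least $\sqrt{2/L_{\max}}$. The lemma then gives a misclustering count $n_{err}\leq C_\epsilon L_{\max}\,\|\bW^{(0)}-\bW_*\bO\|_F^2$. Because $\bW^{(1)}$ is obtained via \eqref{eq:initialclustering} from the estimated partition, it differs from $\bW_*$ only on the $n_{err}$ misclustered rows, each by an $O(L_{\min}^{-1/2})$ amount, so
\[
\|\bW^{(1)}-\bW_*\|_F^2\lesssim \frac{n_{err}}{L_{\min}}\lesssim \frac{L_{\max}}{L_{\min}}\,\|\bW^{(0)}-\bW_*\bO\|_F^2,
\]
and inserting the bound from the first stage yields the claimed inequality (with the factor $L_{\max}/L_{\min}$ absorbing the combined effect of the $K$-means separation $\delta_{\min}^{-1}\asymp L_{\max}^{1/2}$ and the renormalization by $L_{\min}^{-1/2}$).

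The hardest ingredient is the lower bound on $\sigma_M(\bR)$. Assumption (A4) alone controls each $\tilde{\bQ}_{*m}$ individually, so linear independence of the $M$ vectorized matrices $\vect(\tilde{\bQ}_{*m})$ has to be supplied externally; the natural sufficient condition is (A1(b)), which ensures that $\Span(\bTe_m)\not\subseteq\bigoplus_{m'\neq m}\Span(\bTe_{m'})$, from which one can extract a quantitative lower bound on the gap and then on $\sigma_M(\bR)$. Bookkeeping the constants carefully enough to recover the exact $p_{\max} n\sqrt{L/M}$ scaling, and then to produce the stated $L_{\max}/L_{\min}$ factor at the end, is delicate but introduces no essentially new ideas beyond the Bernstein/Davis--Kahan/$K$-means toolbox outlined above.
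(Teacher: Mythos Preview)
Your three-stage outline is correct, and your Stage~2/Stage~3 argument---$K$-means via Lemma~C.1 of \cite{lei2020tail}, then translating the misclustering count into $\|\bW^{(1)}-\bW_*\|_F^2\lesssim (L_{\max}/L_{\min})\|\bW^{(0)}-\bW_*\bO\|_F^2$---is exactly what the paper does in \eqref{eq:init2}.

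Where you differ is Stage~1. The paper does not run matrix Bernstein plus Davis--Kahan on $\calM_1(\bbA)$ directly; instead it imports Lemma~5 of \cite{jing2020community}, which controls $\|\bW^{(0)}-\bW_*\bO\|$ in terms of the minimum Tucker singular value $\sigma_{\min}(\bar{\bbC})$ of the core tensor from the TWIST decomposition \eqref{eq:xia_prob_tensor}. The paper's proof then stops at \eqref{eq:init3}, a bound of the form $\sqrt{M}\min\bigl(C\sqrt{Mrnp_{\max}}\log^2 n\log^2 r/\sigma_{\min}(\bar{\bbC}),\,2\bigr)\cdot L_{\max}/L_{\min}$, and does not actually reduce this to the $\sqrt{M(n^2+L)/(p_{\max}n^2L)}$ form stated in the proposition. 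Your route is more elementary and lands directly on the claimed rate; the paper's route reuses existing tensor machinery but leaves the final simplification implicit. Both approaches require an unstated side condition to lower-bound the relevant minimum singular value ($\sigma_M(\bR)$ for you, $\sigma_{\min}(\bar{\bbC})$ for the paper), and you are right that something like \textbf{(A1(b))} is what supplies it; the paper's version of this gap is that the hypotheses of Lemma~5 of \cite{jing2020community} are invoked without being verified from the proposition's stated assumptions.
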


Let us discuss  Proposition~\ref{prop:initial} under the assumptions  that    $M,\dot{K},\kappa_0$ and 
$\frac{\max_{1\leq m\leq M}L_m}{\min_{1\leq m\leq M}L_m}$ are uniformly bounded above, as $n,L \rightarrow\infty$. 
Then, Proposition~\ref{prop:initial}  implies that $\|\bW^{(1)}-\bW_*\|_F\leq O(\sqrt{\frac{1}{p_{\max}\min(n^2,L)}})$,
while    \eqref{eq:assumption23} requires that  $\|\bW^{(1)}-\bW_*\|_F\leq O(1)$. 
As a result, the method satisfies   condition \eqref{eq:assumption23} if $p_{\max}\geq \frac{\log (n^2+L)}{\min(n^2,L)}$.

One can use alternative methods to generate $\bW^{(0)}$, and subsequently follow Steps 2 and 3 in Algorithm~\ref{alg:initial}. 
For example, one can follow  initialization strategy in \cite[Section 5.5]{jing2020community} to obtain $\bW^{(0)}$. 
 This, however, will require additional assumptions to comply with the theory in \cite{jing2020community}.  


\section{Comparison with existing results}
\label{sec:comparison}

To the best of our knowledge, the only paper that studied the model considered in this paper is \cite{jing2020community},
where the authors introduced   algorithm TWIST, based on regularized tensor decomposition.
In this section, we provide theoretical and numerical comparisons with their results.

\subsection{Description of TWIST}
\label{sec:TWIST}

While \cite{jing2020community} consider the model described in this paper, their methodology
and their assumptions are somewhat different. Specifically,   TWIST iterates Tucker decomposition with regularization step on the observation
tensor $\bA$ to obtain a low-rank approximation of $\bA$, where the intention of the
regularization is to dampen the stochastic errors.
The Tucker structure of the approximation is used to cluster the nodes and the layers.  \cite{jing2020community} start with compiling a collection of all
clustering matrices $\bTe_m$, $m=1, \ldots, M$, in \eqref{maineq} into one matrix
$\bTe \in \reals^{n\times\dot{K}}$ defined as $\bTe=[\bTe_1,\cdots,\bTe_M]$. With this notation, they obtain the
Tucker decomposition of the true probability tensor $\bP_*$ as
\be \label{eq:xia_prob_tensor}
\bP_* =  \bbB  \times_2 \bTe \times_3 \bTe \times_1 \bZ, \quad \bC\in\reals^{\dot{K}\times\dot{K}\times M},
\ee
where $\bZ \in \{0,1\}^{L \times M}$ is the clustering matrix of layers such that $\bZ (\bZ^T \bZ)^{-1/2} = \bW_*$
and  $\bbB$ is  defined as
\be \label{eq:tensorB}
\bbB(:,:,m)=\diag(0, \cdots,0,\bB_m,0,\cdots,0), \quad m=1, \ldots, M.
\ee
Furthermore, they obtain the SVD $\bTe = \bar{\bU} \bar{\bD} \bar{\bR}$  of $\bTe$
where matrices $\bar{\bU} \in \real^{n \times r}$ and $\bar{\bR}  \in \real^{\dot{K}\times \times r}$
have orthonormal columns, $r$ is the rank of $\bTe$ and $\bar{\bD}$ is the diagonal matrix of nonzero singular values.
The objective of the technique is to recover matrix $\bW_*$ as well as $\bar{\bU}$.

The TWIST algorithm is based on iterative updates of matrices $\widehat{\bU}^{(iter)}$ and $\widehat{\bW}^{(iter)}$.
Specifically, given $\widehat{\bU}^{(iter)}$ and $\widehat{\bW}^{(iter)}$, TWIST sets
\bes
\tilde{\bU}^{(iter)}  =  \calP_{\del_1,r}  (\widehat{\bU}^{(iter)}), \quad
\tilde{\bW}^{(iter)}  =  \calP_{\del_2,M}  (\widehat{\bW}^{(iter)}),
\ees
where, for matrix $\bV$, any $\del >0$ and positive integer $s$, one has
\be \label{eq:calP_delta}
\calP_{\delta, s} (\bV) = SVD_{s}(\bV_*)  \quad \mbox{with} \quad     \bV_*(i,:)=\bV(i,:)\, \frac{\min(\delta,\|\bV(i,:)\|)}{\|\bV(i,:)\|},
\ee
and $SVD_{s}$ returns the top $s$ left singular vectors. Subsequently, the new iterations
$\widehat{\bU}^{(iter+1)}$ and $\widehat{\bW}^{(iter+1)}$  are obtained as, respectively, the top $r$ left singular vectors of
$\calM_3 (\bA \times_1  (\tilde{\bU}^{(iter)})^T \times_2 (\tilde{\bW}^{(iter)})^T$, and the top $M$ left singular vectors of
$\calM_1 (\bA \times_2  (\tilde{\bU}^{(iter)})^T \times_3 (\tilde{\bU}^{(iter)})^T$.
The process is carried out till the number of iterations reaches the pre-specified value ${\rm iter}_{\max}$.


\subsection{Theoretical comparisons  of TWIST and ALMA}
\label{sec:theor_comparison}

Note that TWIST 
aims at  revealing  both the global and the local memberships of nodes, together with the  memberships of layers.
Since Algorithm~\ref{alg:update} (ALMA) does not deal with the concept of global communities, in
the context of this paper, we use the terms ``within--layer'' and ``between--layer'' clustering to stand for the local memberships
and memberships of layers, respectively. The global communities, defined  in \cite{jing2020community}, are related to,  but not identical, to
the persistence of the local ones in all layers.

Since   \cite{jing2020community} apply a different technique,
their assumptions, theoretical analysis and final results differ from ours.
We start with the comparison of the assumptions.

Specifically,    \cite{jing2020community} impose the following conditions:

\begin{enumerate}

\item
Denote $\sigma_{\min}(\bbB)=\min \lfi \sigma_{\min}(\calM_1(\bbB)),\sigma_{\min}(\calM_2(\bbB)),\sigma_{\min}(\calM_3(\bbB))\rfi$.
Then, for the tensor $\bbB$, defined in \eqref{eq:tensorB}, one has
$\sigma_{\min}(\bbB)\geq \tilde{c}_1 p_{\max}$.
 Note that
\[
\sigma_{\min}(\bbB)\leq \sigma_{\min}(\calM_1(\bbB)) = \min_{m=1,\cdots,M}\sigma_{K_m}(\bB_m) 
= p_{\max}\min_{m=1,\cdots,M} \sigma_{K_m} (\bB_m^0).
\]
Hence, the assumption on $\sigma_{\min}(\calM_1(\bbB))$ is equivalent to the first part of Assumption \textbf{(A4)} in \eqref{eq:assumptionA4}.
 While the assumptions on $\sigma_{\min}(\calM_j(\bbB))$ for $j=2,3$, are not directly comparable to  the second part of Assumption \textbf{(A4)} in
 \eqref{eq:assumptionA4}, both serve a similar purpose that $\bbB$ is not too small.
\\

\item
Matrix $\bTe$ in \eqref{eq:xia_prob_tensor}, for some $\tilde{\kappa}_{0} < \infty$, is assumed to satisfy the condition
$\sigma_{\max}(\bTe)\leq \tilde{\kappa}_{0} \sigma_r(\bTe)$  where $r=\rank(\bTe)$.
This assumption implicitly implies our assumption \textbf{(A2)} and the ``balanced local community'' assumption in \textbf{(A3)},
i.e., the second condition in  \eqref{eq:assumptionA3}.
\\

\item
Layer sizes and community sizes in the layers are  assumed to be similar.
This is equivalent to the   assumption   \textbf{(A3)}.
\\

\item
Network sparsity assumption
$L\, n\, p_{\max}\geq C(\dot{K}+\tilde{\kappa}_0^2\, r^2\, \log^2n)M^{-1}\tilde{\kappa}_0^6\, r^{2}\, \log^2(r\tilde{\kappa}_0)\, \log^{2}n$.
In comparison, we have a similar assumption in \eqref{eq:assumption13}.
\\

\item
Theoretical analysis of TWIST is carried out under the condition that $L\leq n$. We do not impose this assumption.
\\
\end{enumerate}

\noindent
Under these assumptions, the  error  rate of the between-layer clustering of TWIST is
\bes
R_{BL}^{(TWIST)}  = O\lkr \tilde{\kappa}_0^4\frac{r^2\log n}{M\,L\, n\, p_{\max}}\rkr.
\ees
Under the additional assumption  $\sqrt{L}np_{\max}\geq CM^{-1}\tilde{\kappa}_0^5r^{5/2}\log(r\tilde{\kappa}_0)\log^{5/2} n$,
the error rate of within-layer clustering for the $m$-th type of network is
\bes
R_{WL}^{(TWIST)} (m) = O \lkr \frac{\tilde{\kappa}_0^4K_m^2\log n}{M\,L\, n\, p_{\max}} \rkr.
\ees
We remark that   the comparison with $\kappa_0$ and $\tilde{\kappa}_0$ is not straightforward, as they are defined very differently
(even though both measure how ``well-conditioned'' the model is). In order to compare the clustering errors of ALMA and TWIST,
we assume that   $M, K_m,  \kappa_0$ and  $\kappa_H$ are uniformly bounded by constants independent of $n$ and $L$,
so that, as a result, the same is true for $\dot{K},r$ and $\tilde{\kappa}_0$.
Then the clustering error rates  are more comparable, since they  depend only on $n, L$, and $p_{\max}$.
Specifically, the error rates of the between-layer clustering are
\begin{equation}\label{eq:error1}
R_{BL}^{(ALMA)} = O\left(  \frac{\log^4(n+L)}{n^2\, p_{\max}}+\frac{\log^4(n+L)}{n^2 [\min(n,L) \, p_{\max}]^2} \right),
\quad
R_{BL}^{(TWIST)}  = O\lkr  \frac{\log n}{L\, n\, p_{\max}}\rkr.
\end{equation}
The error rates of the within-layer clustering for the $m$-th type of network are
\begin{equation}\label{eq:error2}
R_{WL}^{(ALMA)} (m) = O \lkr \frac{\log^4(n+L)}{n\, \min(n,L)\, p_{\max}} \rkr,
\quad
R_{WL}^{(TWIST)} (m) = O \lkr  \frac{\log n}{L\, n\, p_{\max}}\rkr.
\end{equation}
Recall that, for both the between-layer clustering and  the within-layer clustering,
the error rates of TWIST are derived under the assumption that $L \leq n$.

In comparison, the error  rates of the between-layer clustering of ALMA are  better in two aspects.
First, they  hold  in the case of $L>n$. Second, since both methods require that the quantity $n\, L\, p_{\max}$
grows with $n$ and $L$,   it is easy to see that, up to  the logarithmic factors,
\bes
R_{BL}^{(ALMA)} = o\lkr R_{BL}^{(TWIST)} \rkr \quad \mbox{if} \quad n \to \infty, \ L/n \to 0.
\ees
Also, up to the logarithmic factors, the within-layer clustering error rates   are equivalent.

However,   \cite{jing2020community} do  not have  anything similar to Assumption \textbf{(A1)}
imposed in the present paper. This assumption is due to the fact that Theorem~\ref{thm:simple_model}
attempts to achieve something more than clustering: it aims at recovering $\bbQ_*$ and $\bW_*$ directly.
In addition, we have somewhat stronger assumption on sparsity, requiring that $p_{\max}>O\Big(\frac{\log (L+n)}{n+L}+\frac{\log^6(n+L)}{n\min(n,L)}\Big)$, instead of $p_{\max}>O(\frac{\log n}{nL})$ in \cite{jing2020community}. We suspect that the difference in the assumptions
is due to the technicalities in our analysis, rather than the inherent drawbacks of our algorithm. 
\\

It would also be interesting to compare the computational cost of ALMA and TWIST. In ALMA, the steps $\bbQ^{(\iter+1)}=\Pi_{\bK}(\bbA\times_1  \bW^{(\iter)})$  
and    $\bW^{(\iter+1)}=\Pi_o(\bbA\times_{2,3} \bbQ^{(\iter+1)})$  
 require, respectively,    $O(LMn^2+\dot{K}n^2)$ and  $O(LMn^2+LM^2)$ operations, 
 so that,  each iteration of ALMA requires    $O(LMn^2+LM^2+\dot{K}n^2)$ operations. 
 When $L$ and $n$ are large,  the dominant term is $O(Ln^2M)$. In comparison, each iteration of TWIST has a computational cost of $O(Ln^2(r+M))$, which is larger than that of ALMA, since   $r$ is larger than $M$, due to $[\bTe_1,\cdots,\bTe_M]\in\reals^{n\times \dot{K}}$, where   $\dot{K}=\sum_{m=1}^MK_m>M$.


\subsection{Numerical comparisons}
\label{sec:simul}

As it is evident from the previous section, the theoretical comparison between ALMA and TWIST is very difficult
due to the differences between assumptions.

In order to test the performance of Algorithm~\ref{alg:update} (ALMA) and subsequent within-layer clustering,
and to provide a fair comparison of the clustering precisions with the TWIST technique of   \cite{jing2020community},
we carry out a limited simulation study with various choices of parameters  $p_{\max}$, $L$ and $n$.
We use the misclassification rates as measures of the performance of our algorithm.
Specifically, we characterize the between layer clustering  precision by \eqref{eq:err_between}.
For the error of the within-layer clustering, we average the rates in \eqref{eq:err_wihin}  over the $M$ layers  and use
\be \label{eq:err_ave_wihin}
R_{WL}  = M^{-1} \ \sum_{m=1}^M R_{WL} (m).
\ee
We choose $M$ and fix $K_1=\cdots=K_M=K$, so that in each cluster, network follows SBM with $K$ communities.
The underlying class for each layer, and the membership for each node in every class of layers  are randomly sampled
using the multinomial distributions with equal class probabilities $1/M$ for the layers of the networks, and $1/K$
for the nodes in each of the layer clusters. In each of the layers, we
 use identical connectivity matrices $\bB_m \equiv B$ where the diagonal values are set to $p = p_{max}$
while the off-diagonal entries are equal to $q = \alpha p_{max}$ with $\alpha <1$. The constant $\alpha$ controls
the ratio of the probability of connection of a node outside its own community versus inside it.
Consequently, the within layer clustering is easier when $\alpha$ is small and harder when it is large.

\begin{figure}[t] 
    \begin{center}
    \subfloat[Between-layer clustering]{\includegraphics[width=0.4\linewidth]{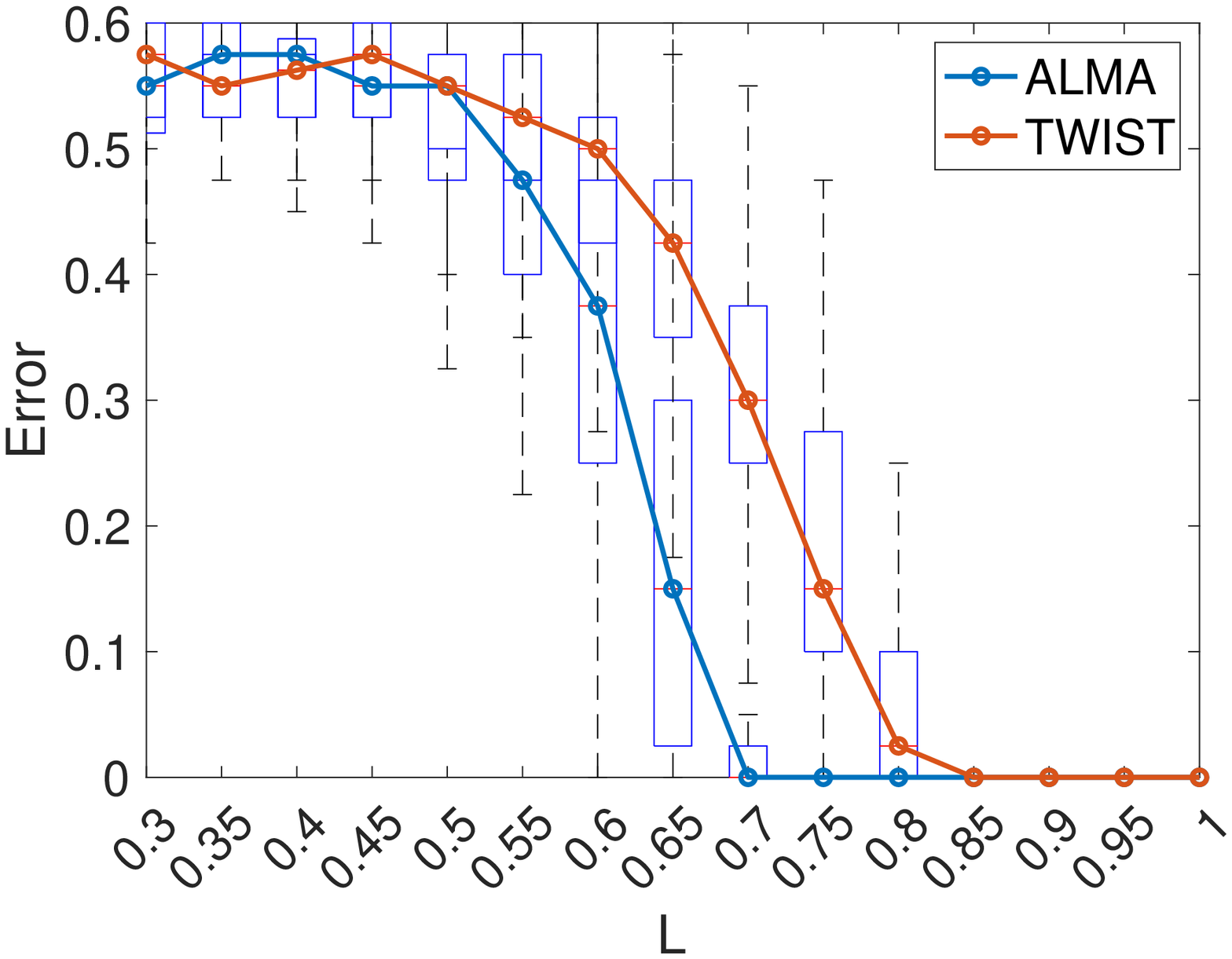} }
    \subfloat[Within-layer clustering]{\includegraphics[width=0.4\linewidth]{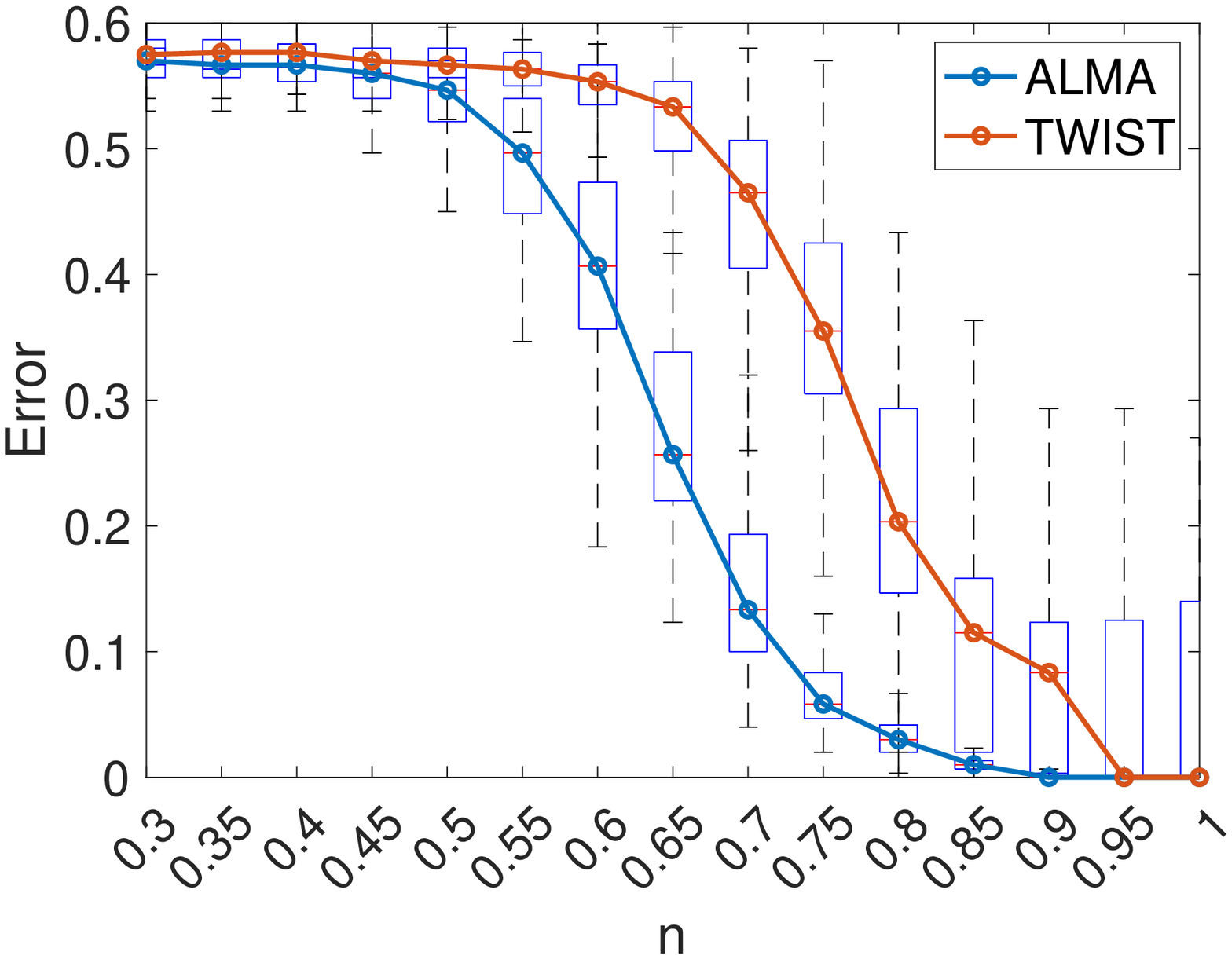} }
    \caption{Simulation Scenario 1: $L=40, n=100, M=3, K=3, \alpha=0.9$.
The between-layer clustering errors and within-layer clustering errors are plotted versus $p_{max}$. 
 The solid lines exhibit the average misclassification errors. } 
    \label{fig:simu-pmax}
    \end{center}
\end{figure}

\begin{figure}[t] 
    \begin{center}
    \subfloat[Between-layer clustering]{\includegraphics[width=0.4\linewidth]{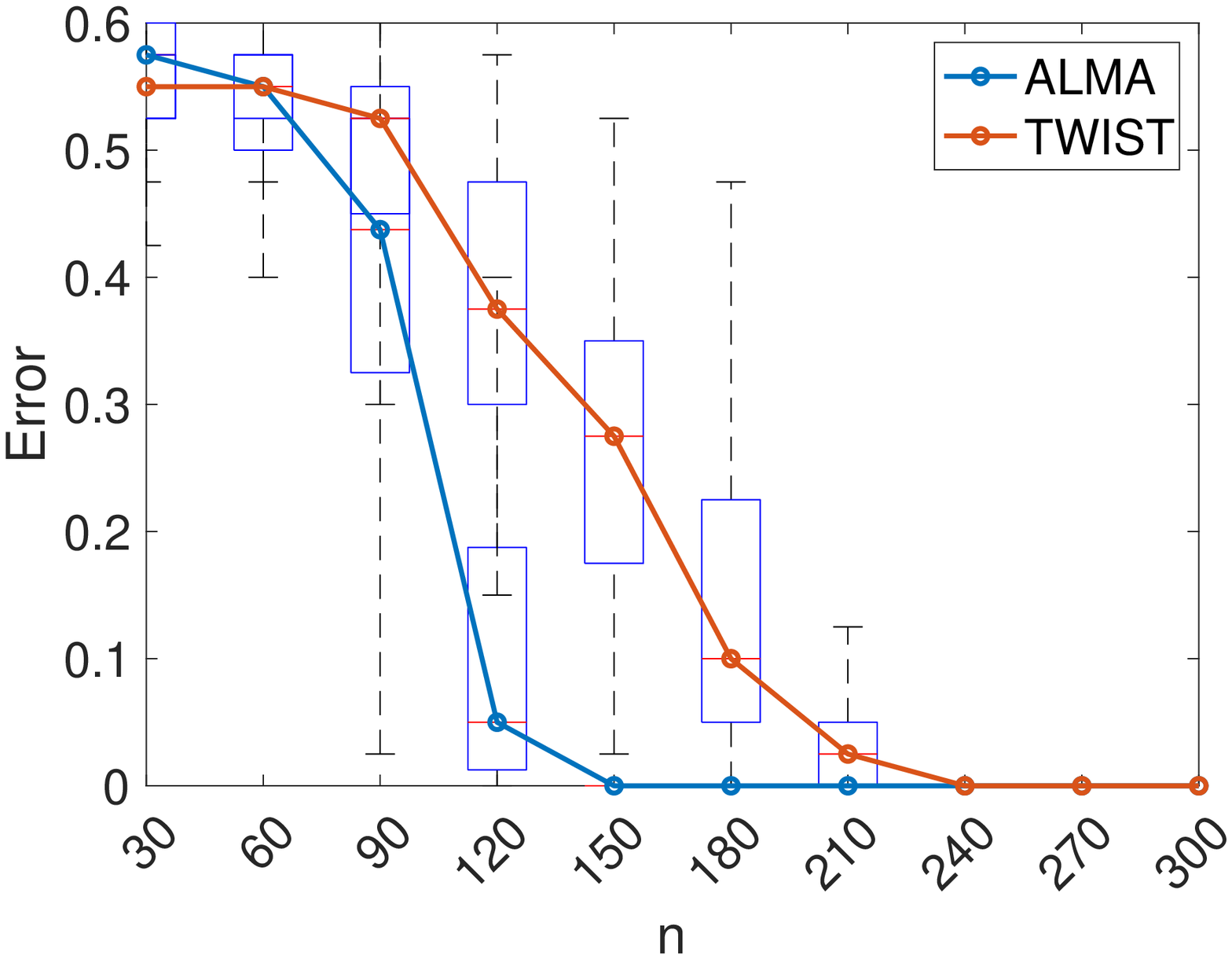} }
    \subfloat[Within-layer clustering]{\includegraphics[width=0.4\linewidth]{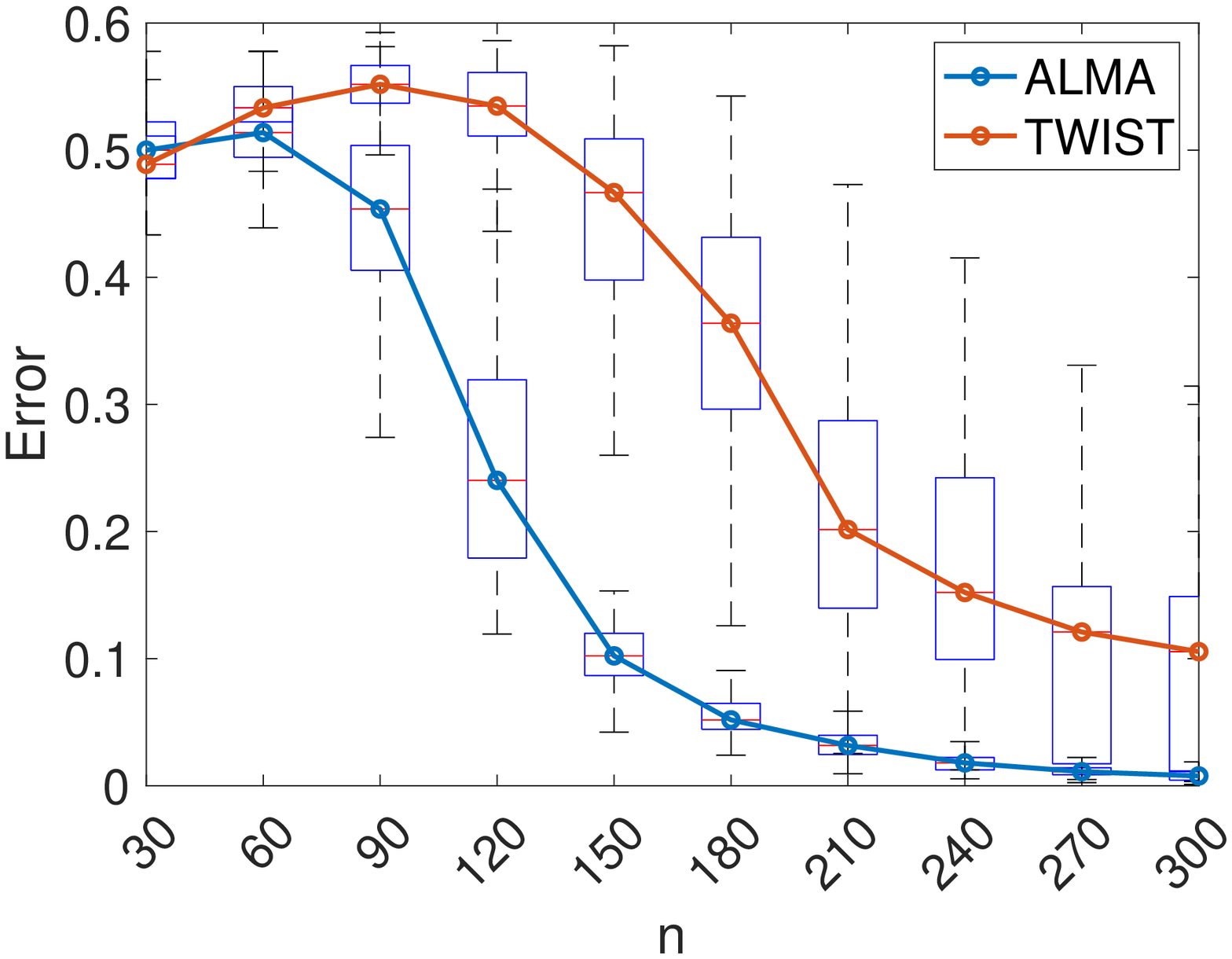} }
    \caption{Simulation Scenario  2: $L=40, M=3, K=3, p_{max}=0.6, \alpha=0.9$.
The between-layer clustering errors and within-layer clustering errors are plotted versus the number of vertices $n$. 
 The solid lines exhibit the average misclassification errors. } 
    \label{fig:simu-n}
    \end{center}
\end{figure}

We investigate the performances  of ALMA (Algorithm~\ref{alg:update}) and compare it with the performances of the TWIST
in four simulation scenarios.
%
In our simulations, we set $M=3$, $K=3$ and $r=7$, since
$r=\rank([\bTe_1,\cdots,\bTe_M])\leq \sum_{m=1}^M K_m-(M-1)$, with inequality  occurring in  degenerate setting.
Since our approach does not involve the concept of global membership, we only compare ALMA with TWIST in terms of ``within--layer'' and
``between--layer clustering''. Furthermore, we choose the stopping criterion   $\|\mathbf{W}^{(\iter)}-\mathbf{W}^{(\iter-1)} \|\leq 10^{-4}$
for both of ALMA and TWIST to make a fair comparison between the algorithms.
Below we describe the simulation schemes.

In Simulation 1, we investigate the effect of the network sparsity on the precision of the algorithms.
For this purpose, we choose   the number of vertices $n=100$, the number of layers $L=40$, the number of network clusters $M=3$,
the number of communities in each cluster of layers $K=3$ and   $\alpha=0.9$.
The variable $p_{max}$, which controls the overall network sparsity, varies from $0.3$~to~$1$.
Fig.~\ref{fig:simu-pmax} shows that both between-layer and within-layer clustering errors decrease as $p_{max}$ is increasing.

\begin{figure}[t] 
    \begin{center}
    \subfloat[Between-layer clustering]{\includegraphics[width=0.4\linewidth]{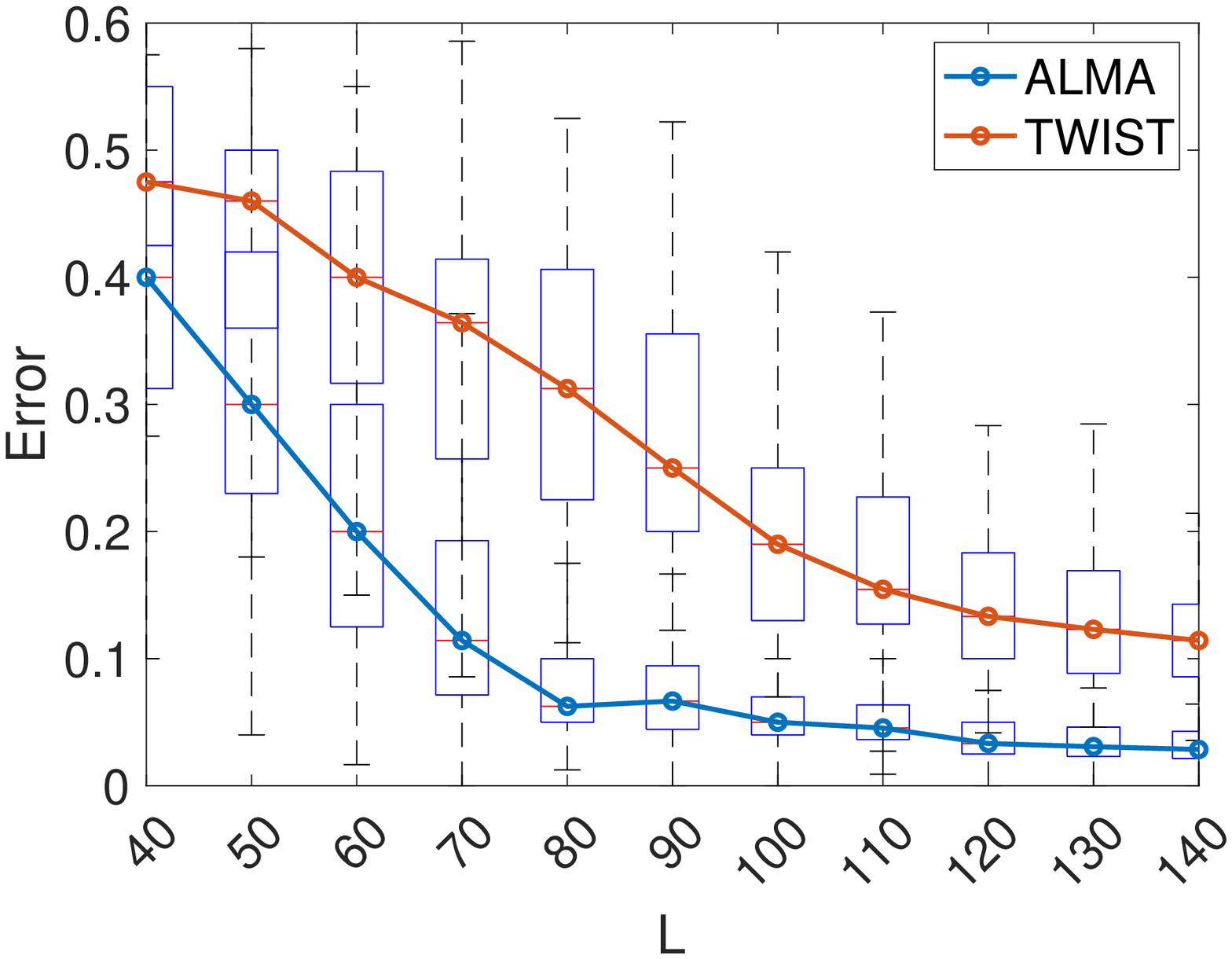} }
    \subfloat[Within-layer clustering]{\includegraphics[width=0.4\linewidth]{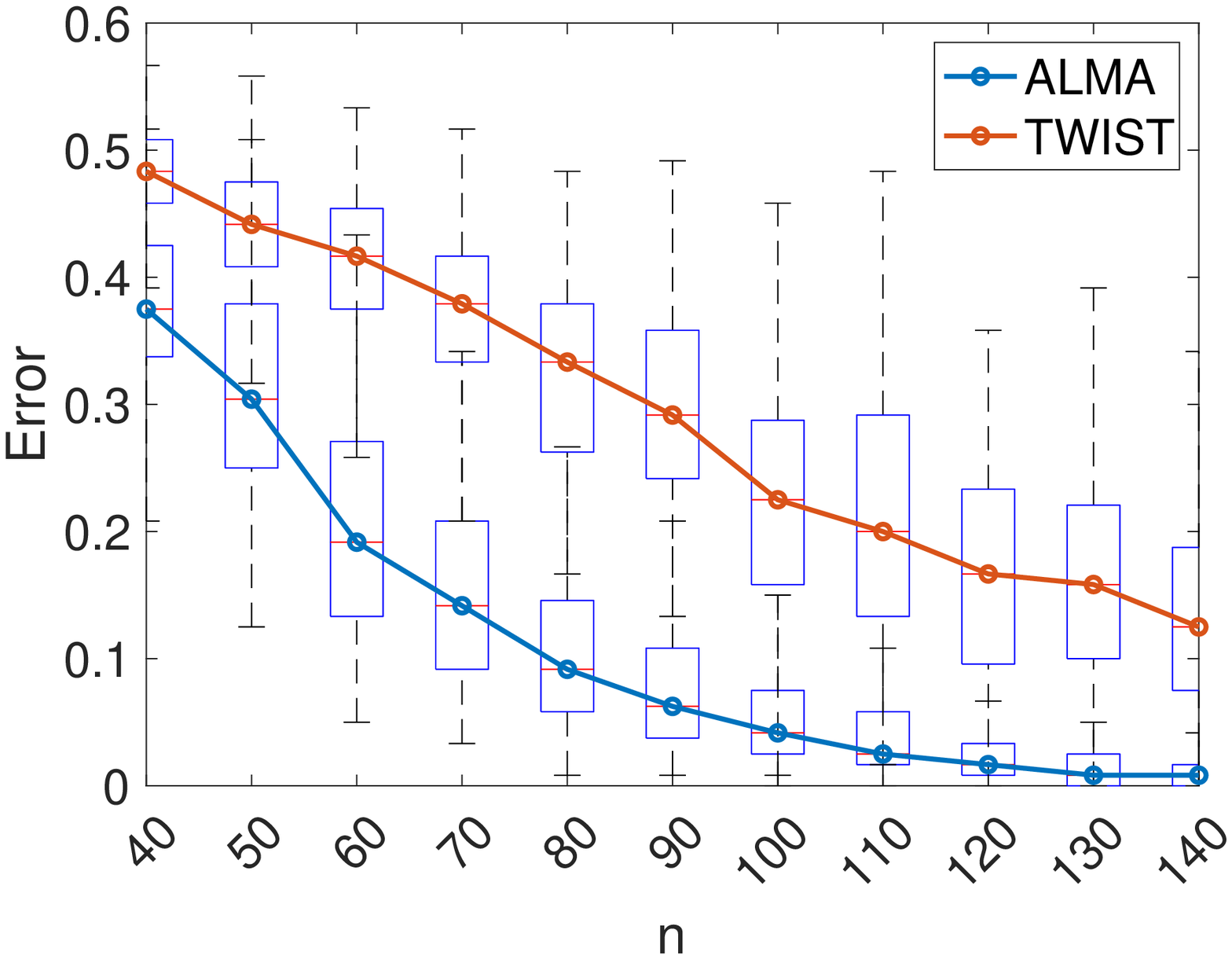} }
    \caption{Simulation Scenario  3: $n=40, M=3, K=3, p_{max}=0.5, \alpha=0.8$.
The between-layer clustering errors and within-layer clustering errors are plotted versus the number of layers $L$ when $L>n$. 
 The solid lines exhibit the average misclassification errors. } 
    \label{fig:simu-L1}
    \end{center}
\end{figure}

\begin{figure}[t] 
    \begin{center}
    \subfloat[Between-layer clustering]{\includegraphics[width=0.4\linewidth]{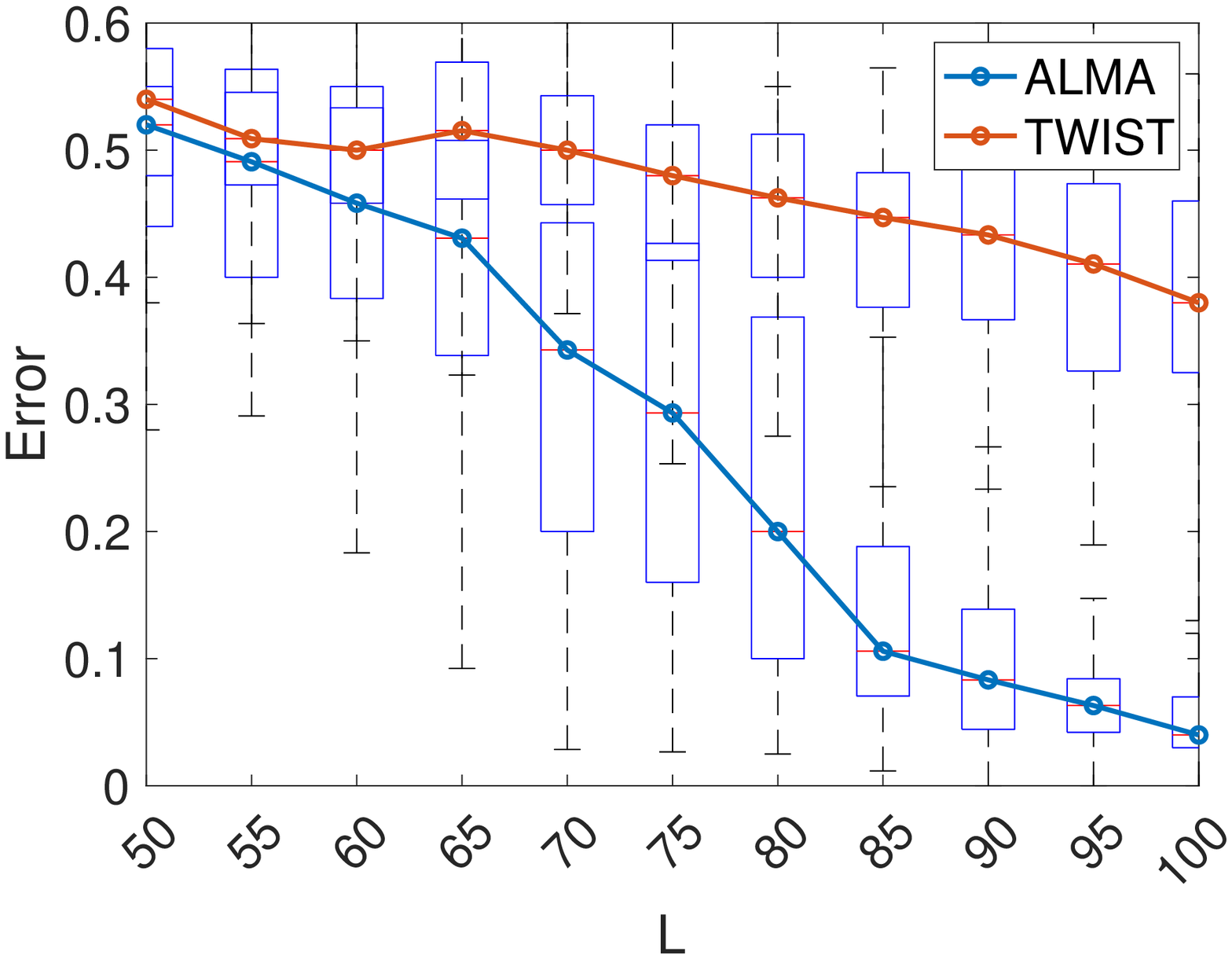} }
    \subfloat[Within-layer clustering]{\includegraphics[width=0.4\linewidth]{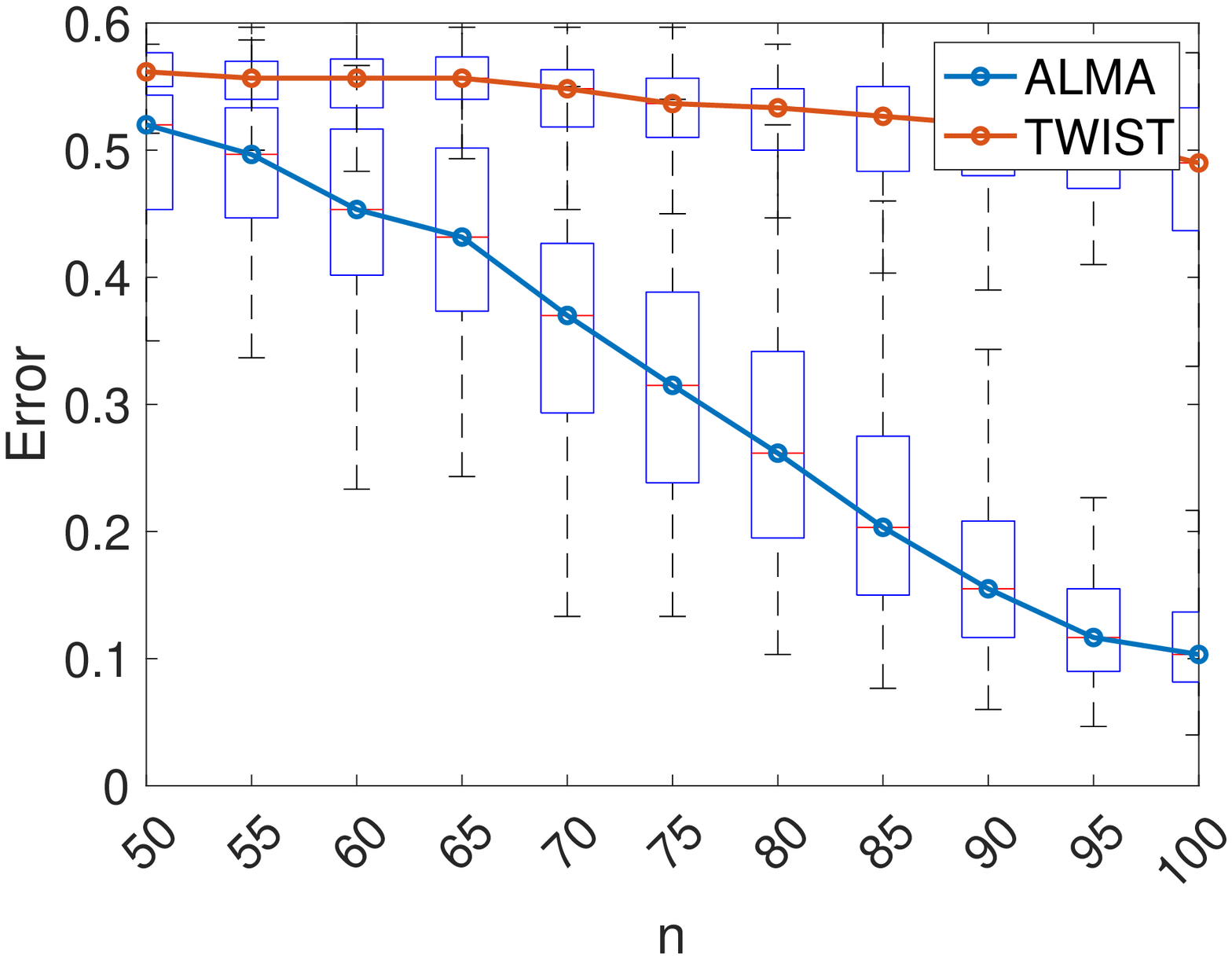} }
    \caption{Simulation Scenario  4: $n=100, M=3, K=3, p_{max}=0.5, \alpha=0.9$.
The between-layer clustering errors and within-layer clustering errors are plotted versus the number of layers $L$ when $L<n$. 
 The solid lines exhibit the average misclassification errors. } 
    \label{fig:simu-L2}
    \end{center}
\end{figure}

In Simulation 2,  the settings are the same as Simulation 1 except that $p_{max}=0.6$ is fixed,
and the number of vertices varies from $30$ to $300$. As $n$ increases,
the between-layer  and within-layer clustering error  rates decrease to zero,
as predicted by  Theorem~\ref{thm:classification}.

In Simulations 3 and 4, we study the effect of the numbers of  layers in the network,
when $L>n$ and $L <n$, respectively. Specifically, in Simulation 3,  we set
 $n=40, M=3,K=3,\alpha=0.8, p_{max}=0.6$ and vary the number of layers $L$ between $40$ and $140$.
The settings  in   Simulation 4 are the same as Simulation 3, except $n=100$ is larger, and $L$ varies from $50$ to $100$.

For both algorithms, in each of the simulation scenarios, we report the between-layer  and the within-layer clustering
errors \eqref{eq:err_between} and \eqref{eq:err_wihin}, respectively, averaged over  100  independent simulation runs.
The results are summarized in  Figures~\ref{fig:simu-pmax}--\ref{fig:simu-L2}.

As it is evident from Figures~\ref{fig:simu-pmax}--\ref{fig:simu-L2}, for all four scenarios, ALMA
has  smaller both the between-layer  and the within-layer clustering errors. Note also that ALMA has better
precision not only in the case of $L > n$, that violates the assumptions of TWIST, but also in the case of $L \leq n$.


\section*{Acknowledgements}

Marianna Pensky was  partially supported by National Science Foundation (NSF)  grants DMS-1712977 and DMS-2014928. Teng Zhang was partially supported by National Science Foundation (NSF)  grant CNS-1818500.


\section{Appendix}
\label{sec:Proofs}

\subsection{Manifold and tangent space}\label{sec:tangent}

The concepts of tangent vector and tangent space to an abstract manifold can be found in, e.g., 
 \cite{boothby2003introduction} and \cite{absil2009optimization}. When $\calM$ is a manifold embedded 
in the Euclidean space $\mathbb{R}^p$, then a smooth function $\gamma: \mathbb{R}\rightarrow\calM$ is called a 
\textit{curve} in $\calM$, and $\gamma'(0)$ is a \textit{tangent vector} to the manifold $\calM$ at the point $\gamma(0)$.
The \textit{tangent space} of $\calM$ at $x$, denoted by $T_x\calM$, is the set of all tangent vectors of $\calM$ at $\bx$, 
that is, $T_x\calM=\{\gamma'(0): \text{$\gamma: \reals\rightarrow\calM$ is a smooth function with $\gamma(0)=x$}\}$ 
\citep{Absil2015}. Intuitively, the tangent plane $T_x\calM$ is the subspace that approximates the manifold $\calM$ 
in a local neighborhood around $x$. For example, if  $\calM$ is the unit sphere $\{x=(x_1,x_2,x_3): x_1^2+x_2^2+x_3^2=1\}$, 
then the tangent space at point $(\hat{x}_1,\hat{x}_2,\hat{x}_3)$ is given by 
$\{(x_1,x_2,x_3): \hat{x}_1 x_1+\hat{x}_2 x_2+\hat{x}_3 x_3=0$\}. 
A visualization of the tangent space is given in Figure~\ref{fig:tangent}.  
\begin{figure}
  \centering
  \includegraphics[width=0.5\linewidth]{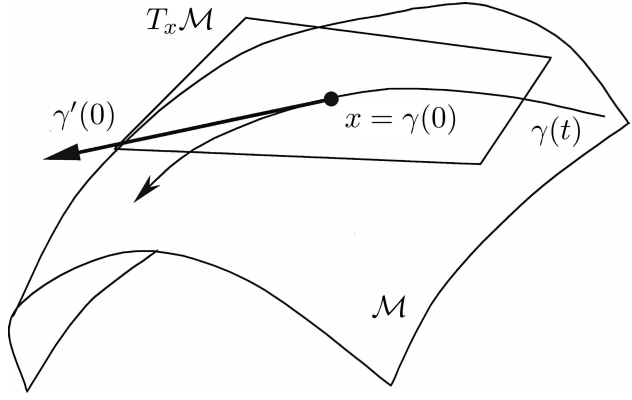}
  \caption{A visualization of the manifold $\calM$, the curve $\gamma$ in $\calM$, the tangent vector $\gamma'(0)$, and the tangent space $T_{\bx}(\calM)$. }
  \label{fig:tangent}
\end{figure}

It remains to derive the tangent planes to the sets $\calS_1$ and $\calS_2$ at $\bbQ_*$ in \eqref{eq:L1L2}.
The expression for $L_1$ follows from the formula for the tangent planes for the manifold of low-rank matrices \citep[equation (13)]{Absil2015}.
Specifically, the  explicit formula for the tangent plane to the manifold of rank $K$ matrices at $\bQ$ is given by
the equation $\Pi_{\bU^\perp}\bbQ \Pi_{\bU^\perp}=\mathbf{0}$, where $\bU$ is an orthogonal matrix that has the same column
space as $\bQ$. Now, the first formula  in \eqref{eq:L1L2} is due to the fact 
that $\calS_1$ is the product of $M$ manifolds of low-rank matrices: $\calS_1=\otimes_{m=1}^M\calM_m$, where 
\[
\calM_m=\{\bX\in\reals^{n\times n}: \rank(\bX)\leq K_m\}.
\]
In order to obtain the second equation in \eqref{eq:L1L2}, note that the tangent plane 
to the set of orthogonal matrices $\calM_0$ at $\bI$ is the set of skew-symmetric matrices: $T_{\bI}\calM_0= \mathrm{Skew}_M$
\citep[Section 2.2.1]{doi:10.1137/S0895479895290954}.
Now, the explicit formula for $L_2$ follows from the facts that   $\calS_2$ is obtained by multiplying 
$\bbQ_*$ with each element from $\calM_0$, where $\calM_0$ is the set of orthogonal matrices of size $M\times M$: 
$\calS_2=\{\bbQ = \bbQ_*\times_1\bV: \bV\in \calM_0\}$.

\subsection{Proof of Theorem~\ref{thm:simple_model}}\label{sec:proof_main}
The organization of this section follows from the sketch of the proof after Theorem~\ref{thm:simple_model} in four steps: the first step establishes a deterministic bound of $\bW^{(\iter)}-\bW_*$,  the second and the third steps establish a probabilistic bound, and the fourth step simplifies the probabilistic bound using Assumptions \textbf{(A2)}-\textbf{(A4)}.

\subsubsection{Step 1: Deterministic analysis of Algorithm~\ref{alg:update}}\label{sec:step1}
In this step, we aim to find a metric $\|\cdot\|_d$ on $\reals^{L\times M}$ such that $\{\|\bW^{(\iter)}-\bW_*\|_d\}_{\iter=1}^{\infty}$ should be monotonically decreasing approximately. While it is natural to consider the Frobenius norm, the previous analysis of the noiseless case in algorithm \eqref{eq:update2} does not support the monotonicity of $\{\|\bW^{(\iter)}-\bW_*\|_F\}_{\iter=1}^{\infty}$. Instead, it establishes the ``approximate'' monotonicity of $\|\bbQ_*\times_1(\bV^{(\iter)}-\bI)\|_F$ of
\begin{equation}\label{eq:converge_clean0}
\|\bbQ_*\times_1(\bV^{(\iter+1)}-\bI)\|_F\lesssim \kappa_H\|\bbQ_*\times_1(\bV^{(\iter)}-\bI)\|_F.\end{equation}
Recall that in the noiseless case, $\bW^{(\iter)}=\bW_*\bV^{(\iter)}$, we expect that $\|\cdot\|_d$ should be defined such that when $\bV$ is orthogonal and close to $\bI$, \begin{equation}\|\bW_*(\bV-\bI)\|_d \approx\|\bbQ_*\times_1(\bV-\bI)\|_F.\label{eq:metric_expect}\end{equation} Since the tangent plane of the set of orthogonal matrices at $\bI$ is the set of skew-symmetric matrices~\cite[Theorem 14.2.2]{Gallier2001}, the tangent space of $\{\bW_*\bV: \bV^T\bV=\bI\}$ at $\bV=\bI$ is $L_0=\{\bW_*\bY: \bY\in\mathrm{Skew}_M\}\subseteq\reals^{L\times M}$, \eqref{eq:metric_expect} implies that for any $\bW_*\bY\in L_0$, $\|\cdot\|_d$ should be defined such that $\|\bW_*\bY\|_d=\lambda\|\bbQ_*\times_1\bY\|_F=\lambda\|\bbP_*\times_1\bW_*\bY\|_F$ for some constant $\lambda>0$.  Combining this metric on $L_0$ and the standard Euclidean/Frobenius metric on the orthogonal subspace $L_0^\perp$, we define the metric $\|\cdot\|_{d}: \reals^{L\times M}\rightarrow \reals$ by
\begin{equation}\label{eq:metric_define}
\|\bW\|_{d}=\sqrt{(\lambda \|\bbP_*\times_1 P_{L_0}\bW\|_F)^2+\|P_{L_0^\perp}\bW\|_F^2},
\end{equation}
Here, $\lambda$ balances the weights from the two components, so that
\bes
\lambda= \lkr { \min_{\bY\in\mathrm{Skew}_M,\,\|\bY\|_F=1}\|\bbQ_*\times_1\bY\|_F}\rkr^{-1},
\ees
and the projection operators  can be explicitly written as 
\begin{align*} 
P_{L_0}\bW & =\bW_*(\bW_*^T\bW^{(\iter)}-\bW^{(\iter)\,T}\bW_*)/2, \\
P_{L_0^\perp}\bW & =\bW_*(\bW_*^T\bW^{(\iter)}+\bW^{(\iter)\,T}\bW_*)/2+
(\bI-\bW_*\bW_*^T)\bW^{(\iter)}. 
\end{align*}
By the definition of the metric $\|\cdot\|_d$ in \eqref{eq:metric_define}, we have the following equivalence between $\|\cdot\|_d$ and $\|\cdot\|_F$:
\[
\|\bW\|_d\geq \sqrt{\|P_{L_0}\bW\|_F^2+\|P_{L_0^\perp}\bW\|_F^2}=\|\bW\|_F
\]
and for  $C_H={{ \displaystyle \max_{{\bY\in\mathrm{Skew}_M,\,\|\bY\|_F=1}}\, \|\bbQ_*\times_1\bY\|_F}/ 
\displaystyle \min_{{\bY\in\mathrm{Skew}_M,\,\|\bY\|_F=1}}\, \|\bbQ_*\times_1\bY\|_F}$,
\[
\|\bW\|_d\leq \sqrt{C_H^2\|P_{L_0}\bW\|_F^2+C_H^2\|P_{L_0^\perp}\bW\|_F^2} =C_H\|\bW\|_F.
\]

Before stating our main result, we introduce two additional parameters:
\[
\kappa_1=\frac{p_{\max}^2n^2{L}}{\|\bbQ_*\|_F^2}, \quad \kappa_2=\frac{\sqrt{\dot{K}\, p_{\max}L}\ n}
{\minM\, \sigma_{K_m}(\bbQ_*(m,:,:))}.
\]

Both parameters are greater than $1$, and describe how ``well-conditioned'' $\bbQ_*$ is, and when $\bbQ_*$ is well-conditioned, then all parameters are close to $1$. In particular, $\kappa_1\geq 1$ because $\|\bbQ_*\|_F=\|\bbP_*\|_F$ and all elements of $\bbP_*$ are bounded by $p_{\max}$, and $\kappa_2\geq 1$ because
\[
\dot{K}\min_{m=1, \cdots, M}\sigma_{K_m}^2(\bbQ_*(m,:,:))\leq \sum_{m=1}^M\sum_{k=1}^{K_m}\sigma_{k}^2(\bbQ_*(m,:,:))=\|\bbQ_*\|_F^2\leq p_{\max}^2n^2L.
\]
When $\bbQ_*(m,:,:)$ is ``degenerate'' in the sense that $\sigma_{K_m}(\bbQ_*(m,:,:))\approx 0$, then $\kappa_2$ is large.

The main result in this step states that, if the noise $\bbDelta$ is small and when  Algorithm~\ref{alg:update} is applied to the observed adjacency tensor
$\bbA$ with a good initialization $\bW^{(0)}$,   the estimations are likely to improve over each iteration,
and Algorithm~\ref{alg:update} converges to $\bW_*$ approximately. The statement is as follows, and its proof is rather complicated and deferred to Section~\ref{sec:proof_lemma1}.

\begin{lem}[Step 1: A deterministic result on Algorithm~\ref{alg:update}]\label{lemma:lemma_main}
For
 \begin{align*}
 a_1=& 6\kappa_0 \frac{\sqrt{M}\lkr  \maxM  \|\Pi_{T,K_m}(\bbDelta)\times_{2,3}\Pi_{T,K_m}(\bbDelta)\|+2\maxM \|\Pi_{T,K_m}(\bbQ_*)\times_{2,3}\Pi_{T,K_m}(\bbDelta)\| \rkr }{\sigma_{M}(\bbQ_*\times_{2,3} \bbQ_*)}\\
 +&\frac{192\kappa_0\dot{K}\|\bbDelta\|^2(\|\bbQ_*\|+\|\bbDelta\|)}{\sqrt{M} \lkr \minM \sigma_{K_m}(\bbQ_*(m,:,:))\rkr \ \sigma_{M}(\bbQ_*\times_{2,3} \bbQ_*)},\\
%
 a_2=& \frac{192\kappa_0\dot{K}\|\bbQ_*\|^2(\|\bbQ_*\|+\|\bbDelta\|)}{\sqrt{M}\minM \sigma_{K_m}(\bbQ_*(m,:,:))\  \sigma_{M}(\bbQ_*\times_{2,3} \bbQ_*)}+6\kappa_0, 
  \end{align*}
if $\|\bbDelta\|\leq \frac{1}{4} \minM \sigma_{K_m}(\bbQ_*(m,:,:))$,
 \begin{equation}\label{eq:assumption1}
 \frac{2C_Ha_1}{1-\kappa_H} \leq \min\Big(\frac{1-\kappa_H}{2C_H(a_2+80\kappa_0^2+32\kappa_0^3)},\frac{\min_{m=1,\cdots,M}\sigma_{K_m}(\bbQ_*(m,:,:))}{4\|\bbQ_*\|}\Big)
  \end{equation}
 and the initialization satisfies
 \begin{equation}\label{eq:assumption2}
\|\bW^{(1)}-\bW_*\|_d\leq \min \lkr \frac{1-\kappa_H}{2C_H(a_2+80\kappa_0^2+32\kappa_0^3)},\frac{\min_{m=1,\cdots,M}\sigma_{K_m}(\bbQ_*(m,:,:))}
{4\|\bbQ_*\|} \rkr,
\end{equation}
 then for all $\iter\geq 1$, \begin{equation}\label{eq:monotone0}
\|\bW^{(\iter+1)}-\bW_*\|_d\leq\frac{1+\kappa_H}{2}\|\bW^{(\iter)}-\bW_*\|_d+C_Ha_1,
\end{equation}
which implies
 \[
\lim_{\iter\rightarrow\infty}\| \bW^{(\iter)}-\bW_*\|_d\leq \frac{2C_Ha_1}{1-\kappa_H}.
 \]
\end{lem}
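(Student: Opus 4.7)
The plan is to analyze one ALMA iteration by decomposing the error $\bW^{(\iter+1)} - \bW_*$ into a linear \emph{tangent-space} part that contracts at rate $\kappa_H$ in the metric $\|\cdot\|_d$, a nonlinear remainder that is quadratic in the current error, and a noise-induced term driven by $\bbDelta = \bbA - \bbP_*$. The metric \eqref{eq:metric_define} is reverse-engineered for exactly this purpose: weighting the $L_0$-component by $\lambda \|\bbP_* \times_1 \cdot\|_F$ turns the abstract alternating-projection analysis of Section~\ref{sec:convergence_true} (which is naturally formulated in terms of $\|\bbQ_* \times_1 \bY\|_F$ for $\bY \in \mathrm{Skew}_M$) into a genuine contraction statement about $\bW^{(\iter)}$.

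First I would expand one iteration using $\bbA = \bbP_* + \bbDelta$. Writing the error as $P_{L_0}(\bW^{(\iter)} - \bW_*) + P_{L_0^\perp}(\bW^{(\iter)} - \bW_*)$, I would show that, after passing through Step~2 ($\bbQ^{(\iter+1)} = \Pi_{\bK}(\bbA \times_1 \bW^{(\iter)})$) and Step~3 ($\bW^{(\iter+1)} = \Pi_o(\bbA \times_{2,3} \bbQ^{(\iter+1)})$), the $L_0$-part evolves, to leading order, like the composite projection $P_{L_2} P_{L_1}$ of Section~\ref{sec:convergence_true} applied to $\bbQ_* \times_1 \bY$, while the $L_0^\perp$-part is essentially annihilated by the Procrustes step up to a second-order correction. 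Consequently, the linearized map has $\|\cdot\|_d$-operator norm at most $\kappa_H$.

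Second, I would bound the two error terms. The nonlinear remainder, arising from (i) Wedin-type second-order perturbation of the rank-$K_m$ projection $\Pi_{K_m}$ and (ii) the curvature of the orthogonal manifold $\{\bW : \bW^T \bW = \bI\}$, is quadratic in $\bW^{(\iter)} - \bW_*$ with a leading constant of order $a_2 + \kappa_0^2 + \kappa_0^3$; the smallness hypothesis \eqref{eq:assumption2} is then precisely what is needed to absorb this term into a shift of the contraction rate from $\kappa_H$ to $(1+\kappa_H)/2$. The noise-induced error splits into the two pieces visible in $a_1$: the tangent-projected cross-terms $\Pi_{T,K_m}(\bbDelta) \times_{2,3} \Pi_{T,K_m}(\bbDelta)$ and $\Pi_{T,K_m}(\bbQ_*) \times_{2,3} \Pi_{T,K_m}(\bbDelta)$, coming from propagating $\bbDelta$ through the Procrustes step restricted to tangent directions, and a higher-order piece $\|\bbDelta\|^2(\|\bbQ_*\| + \|\bbDelta\|)$ that bounds the effect of $\bbDelta$ on the rank projection via Wedin's theorem. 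Here the hypothesis $\|\bbDelta\| \le \tfrac{1}{4}\min_m \sigma_{K_m}(\bbQ_*(m,:,:))$ is essential to keep the relevant spectral gap open.

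The main obstacle will be carrying out the tangent-space analysis cleanly in the metric $\|\cdot\|_d$: a naive Taylor expansion of $\Pi_{\bK}$ and $\Pi_o$ only yields bounds in Frobenius norm, which would lose the factor $\kappa_H$ in the $L_0$-direction. To get the sharp rate, one must identify the iteration of $\bW^{(\iter)}$ with a perturbed alternating projection between $\calS_1$ and $\calS_2$ and exploit the definition \eqref{eq:kappa_H} directly. Once \eqref{eq:monotone0} is established, summing the geometric series gives $\|\bW^{(\iter)} - \bW_*\|_d \le \bigl(\tfrac{1+\kappa_H}{2}\bigr)^{\iter-1}\|\bW^{(1)} - \bW_*\|_d + \tfrac{2 C_H a_1}{1-\kappa_H}$, from which the stated limit follows; assumption \eqref{eq:assumption1} guarantees that this bound never exceeds the threshold in \eqref{eq:assumption2}, so the working hypotheses remain valid along the entire trajectory and the induction closes.
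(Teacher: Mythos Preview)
Your proposal is correct and follows essentially the same route as the paper: the paper too linearizes one iteration by introducing the skew-symmetric proxy $\bY^{(\iter)}=\tfrac12(\bW_*^T\bW^{(\iter)}-\bW^{(\iter)T}\bW_*)$ and an idealized update $\hat{\bY}^{(\iter+1)}$ satisfying $\bbQ_*\times_1\hat{\bY}^{(\iter+1)}=P_{L_2}P_{L_1}(\bbQ_*\times_1\bY^{(\iter)})$, giving the $\kappa_H$-contraction in $\|\cdot\|_d$, and then controls the remainder via an intermediate point $\tilde{\bW}^{(\iter+1)}=\Pi_o\bigl(\bbP_*\times_{2,3}(\bbQ_*+\Pi_{T,\bK}(\bbQ_*\times_1\bY^{(\iter)}))\bigr)$, bounding $\|\tilde{\bW}^{(\iter+1)}-\bW_*(\hat{\bY}^{(\iter+1)}+\bI)\|_F$ (Procrustes curvature, your item~(ii)) and $\|\bW^{(\iter+1)}-\tilde{\bW}^{(\iter+1)}\|_F$ (noise plus rank-projection error, your item~(i) and the $a_1$ terms) separately before closing the induction exactly as you describe.
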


\subsubsection{Step 2: Probabilistic estimation}
Since $\bbQ_*$ is deterministic in our model, we only need to estimate the terms that depend on $\bbDelta$ in Lemma~\ref{lemma:lemma_main}. The estimations are summarized as follows, and the proof is deferred to the Appendix.\\

\begin{lem}[Step 2: Probabilistic estimation]\label{lemma:prob_main}
(a) [Restatement of \cite[Theorem 1.2]{zhou2019sparse}]
If $p_{\max}\geq \frac{c\log(\max(n,L))}{\max(n,L)}$ for some constant $c>0$, then for any $r>0$, there exists a constant $C>0$ depending only on $r, c$ such that with probability at least $1-n^{-r}$, $\|\bbDelta\|=\sup_{\bu\in\reals^{L}, \bv\in\reals^{n}}\frac{\bbDelta\times_{1}\bu\times_2\bv\times_3\bv}{\|\bu\|\|\bv\|^2}$ satisfies $\|\bbDelta\|\leq C\sqrt{p_{\max}\max(n,L)}\log(\max(n,L))$.\\
(b) For any $t>0$,
\begin{align}\label{eq:prob_b}
\Pr\left(\Big\|\Pi_{T,K_m}(\bbQ_*)\times_{2,3}\Pi_{T,K_m}(\bbDelta)\Big\|_F>3tnL\sqrt{p_{\max}^3}\right)\leq 2\dot{K}L\exp\left(\frac{-\frac{1}{2}t^2}{1+\frac{t}{3\sqrt{p_{\max}ng_{\min}}}}\right).
\end{align}
(c) For any $t>0$,
\begin{align}\label{eq:prob_c}
&\Pr\left(\max_{1\leq m\leq M}\|\Pi_{T,K_m}(\bbDelta)\times_{2,3}\Pi_{T,K_m}(\bbDelta)\|\geq 9K^2t^2{p_{\max} \max(n,L)}\right)\\\leq& 2K(L+n)\exp\left(-\frac{t^2/2}{1+\frac{t}{\sqrt{p_{\max} \max(n,L)g_{\min}}}}\right),\nonumber
\end{align}
where $g_{\min}$  is the size of the smallest community.
\end{lem}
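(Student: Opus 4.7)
Part (a) is immediate from Zhou and Amini (2019, Theorem 1.2) applied to the tensor $\bbDelta = \bbA - \bbP_*$, whose entries are independent centered Bernoullis with variance bounded by $p_{\max}$; no further argument is required.

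For part (b), the plan is to square the Frobenius norm and reduce it to a collection of scalar inner products. Because $\Pi_{T,K_m}$ is a self-adjoint projection,
\[
\|\Pi_{T,K_m}(\bbQ_*)\times_{2,3}\Pi_{T,K_m}(\bbDelta)\|_F^2 = \sum_{m'=1}^{M}\sum_{l=1}^{L}\langle\Pi_{T,K_m}(\bbQ_*)(m',:,:),\,\bbDelta(l,:,:)\rangle^{2}.
\]
Using the expansion $\Pi_{T,K_m}(\bbQ_*)(m',:,:) = \bU_m\bU_m^T \bbQ_*(m',:,:) + \bbQ_*(m',:,:)\bU_m\bU_m^T - \bU_m\bU_m^T \bbQ_*(m',:,:)\bU_m\bU_m^T$, each inner product splits into a small linear combination of scalars of the form $\bU_m(:,k)^T\bbDelta(l,:,:)\bv$, where $k\in\{1,\ldots,K_m\}$ and $\bv$ is a column (or linear combination of columns) of $\bbQ_*(m',:,:)$. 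Each such scalar is a weighted sum of independent centered Bernoullis with variance at most $p_{\max}\|\bU_m(:,k)\|^2\|\bv\|^2$ and entrywise magnitude at most $\|\bU_m(:,k)\|_\infty\|\bv\|_\infty$, so the scalar Bernstein inequality applies. A union bound over the $K_m$ basis vectors for each $m$ (giving $\dot{K}$ in total after summing over $m$) and over the $L$ layer indices produces the prefactor $2\dot{K}L$, while the scaling $\|\bbQ_*(m',:,:)\|_F\lesssim p_{\max} n\sqrt{L}$ combined with the variance proxy above explains the $3 t n L\sqrt{p_{\max}^3}$ on the right-hand side.

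For part (c), the plan is to apply matrix Bernstein. Via the variational characterization,
\[
\|\Pi_{T,K_m}(\bbDelta)\times_{2,3}\Pi_{T,K_m}(\bbDelta)\| = \max_{\bu\in S^{L-1}}\|\Pi_{T,K_m}(\bar{\bbDelta}_\bu)\|_F^2, \quad \bar{\bbDelta}_\bu=\sum_{l=1}^L \bu_l\,\bbDelta(l,:,:),
\]
together with the inequality $\|\Pi_{T,K_m}(\bar{\bbDelta}_\bu)\|_F^2\le 2\|\bU_m^T\bar{\bbDelta}_\bu\|_F^2 = 2\sum_{k=1}^{K_m}\|\bU_m(:,k)^T\bar{\bbDelta}_\bu\|^2$, the problem reduces to bounding, for each $k$, the operator norm of the $L\times n$ random matrix whose $l$-th row is $\bU_m(:,k)^T\bbDelta(l,:,:)$. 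That matrix is a sum over the independent edges $(i,j)$ of the rank-one random matrices $\bU_m(i,k)\,\bbDelta(:,i,j)\,\boe_j^T$, so matrix Bernstein yields an operator-norm tail bound with prefactor $(L+n)$. A further union bound over the $K_m$ basis vectors and over the $M$ choices of $m$ then produces the claimed prefactor $2K(L+n)$; the deviation level $9K^2 t^2 p_{\max}\max(n,L)$ arises from the factor $2K_m$ in the Cauchy--Schwarz step above combined with a per-direction matrix-Bernstein variance of order $K_m p_{\max}\max(n,L)$.

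The main technical obstacle in both (b) and (c) will be matching the Bernstein denominator exactly to $\sqrt{p_{\max}\,n\,g_{\min}}$ (respectively $\sqrt{p_{\max}\max(n,L)\,g_{\min}}$) rather than a weaker surrogate. The essential observation is that $\bU_m$ lies in the column span of the membership matrix $\bTe_m$ and is therefore piecewise constant on communities of size at least $g_{\min}$, which forces $\|\bU_m(:,k)\|_\infty\le g_{\min}^{-1/2}$. Combining this $\ell^\infty$-versus-$\ell^2$ improvement with the sparsity bounds $\|\bbQ_*(m',:,:)\|_\infty\le p_{\max}\sqrt{L}$ and $\|\bbQ_*(m',:,:)\|_F\sim p_{\max}n\sqrt{L}$ (from Assumptions \textbf{(A3)}--\textbf{(A4)}), the characteristic amplitude-over-standard-deviation ratio appearing in Bernstein collapses to $(p_{\max}\, n\, g_{\min})^{-1/2}$. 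Once these entrywise estimates are in place, the remainder is bookkeeping of Bernstein constants and union bounds.
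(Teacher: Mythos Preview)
Your proposal is correct and follows essentially the same route as the paper: part (a) is a citation, part (b) reduces to scalar Bernstein on inner products of the form $\bU_m(:,k)^T\bbDelta(l,:,:)\bv$ plus a union bound, and part (c) reduces to matrix Bernstein on the $L\times n$ blocks obtained by contracting $\bbDelta$ against the columns of $\bU_m$. The only organisational difference is that the paper works directly with the normalized indicator basis $\bU_m(:,k)=|G_{m,k}|^{-1/2}\mathbf{1}_{G_{m,k}}$ (legitimate since $\Pi_{T,K_m}$ depends only on $\bU_m\bU_m^T$), which makes the Bernstein parameters immediate, whereas you derive the same $\|\bU_m(:,k)\|_\infty\le g_{\min}^{-1/2}$ bound from the piecewise-constant structure; and in (c) the paper bounds $\|\calM_1(\Pi_{T,K_m}(\bbDelta))\|$ by a triangle-inequality sum $3\sum_{k}\|\cdot\|$ and then invokes the Bernoulli-matrix lemma of Lei, while you use the variational characterization and Cauchy--Schwarz before matrix Bernstein. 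Both arrive at the stated bounds, and your Cauchy--Schwarz route is if anything slightly sharper (it would give $O(K_m)$ rather than $O(K_m^2)$ in the deviation level, so your remark that the per-direction variance is of order $K_m p_{\max}\max(n,L)$ overstates what you actually need).
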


\subsubsection{Step 3: A probabilistic result on Algorithm~\ref{alg:update} without Assumptions \textbf{(A2)}-\textbf{(A4)}}

From the definition of $\kappa_0=\frac{\sigma_1(\bbQ_*\times_{2,3}\bbQ_*)}{\sigma_M(\bbQ_*\times_{2,3}\bbQ_*)}$ and the fact $\bbQ_*\times_{2,3}\bbQ_*=\calM_1(\bbQ_*)\calM_1(\bbQ_*)^T$, we have
\[
\kappa_0=\left(\frac{\sigma_1(\bbQ_*\times_{2,3}\bbQ_*)}{\sigma_M(\bbQ_*\times_{2,3}\bbQ_*)}\right)=\left(\frac{\sigma_1(\calM_1(\bbQ_*))}{\sigma_M(\calM_1(\bbQ_*))}\right)^2,
\]
$
\|\bbQ_*\times_1\! \bY\|_F\!=\!\!\|\calM_1(\bbQ_*)\bY\|_F\!\leq\! \sigma_1\!(\calM_1(\bbQ_*))\|\!\bY\!\|_F,$
 and
$
\|\bbQ_*\times_1\!\bY\!\|_F\!=\!\!\|\calM_1(\bbQ_*\!)\bY\|_F\!\!\geq\! \sigma_M(\calM_1(\bbQ_*\!))\|\bY\|_F$.  As a result, $C_H\leq \sqrt{\kappa_0}$.

Combining $C_H\leq \sqrt{\kappa_0}$ with  $\|\bbQ_*\|\leq \|\bbQ_*\|_F\leq p_{\max}n\sqrt{L}$, one obtains
\[
\sigma_M(\bbQ_*\times_{2,3}\bbQ_*)\geq \frac{1}{\kappa_0}\sigma_1(\bbQ_*\times_{2,3}\bbQ_*)=\frac{1}{\kappa_0}\|\calM_1(\bbQ_*)\|^2\geq \frac{1}{M\kappa_0}\|\calM_1(\bbQ_*)\|_F^2=\frac{1}{M\kappa_0}\|\bbQ_*\|_F^2\geq \frac{p_{\max}^2n^2L}{M\kappa_0\kappa_1},
\]
Then, Lemma~\ref{lemma:lemma_main} and Lemma~\ref{lemma:prob_main} imply the following statement.

\begin{thm}[Step 3: A generic result on Algorithm~\ref{alg:update} without Assumptions \textbf{(A2)}-\textbf{(A4)}]\label{thm:main0}
If \begin{equation}\label{eq:assumption122}\text{$p_{\max}\geq \frac{c\log(\max(n,L))}{\max(n,L)}$ for some constant $c>0$,}\end{equation} then for any $r>0$, there exists $C>0$ that depending only on $r, c$ such that for
\begin{align*}
a_1=&C\kappa_0^2\kappa_1\sqrt{M^{3}}\Big(\frac{t}{n\sqrt{p_{\max}}}+\frac{\dot{K}^2t^2}{p_{\max}n\min(n,L)}\Big)+{C\kappa_0^2\kappa_1\kappa_2\sqrt{M\dot{K}}}\frac{\log^2(\max(n,L))}{{p_{\max}}n\min(n,L)}\\
a_2=&{C\kappa_0^2\kappa_1\kappa_2\sqrt{M\dot{K}}}\Big(1+\frac{\log(\max(n,L))}{\sqrt{p_{\max}n\min(n,L)}}\Big),
\end{align*}
if
 \begin{equation}\label{eq:assumption12}
 C\kappa_2\sqrt{{\dot{K}}} \leq \sqrt{{p_{\max}n\min(n,L)}}, \,\,\,\sqrt{4a_1\kappa_0(a_2+112\kappa_0^3)}\leq 1-\kappa_H,\,\,\, 2\kappa_0\kappa_2a_1\sqrt{\dot{K}}\leq (1-\kappa_H)
  \end{equation}
 and the initialization satisfies
 \begin{equation}\label{eq:assumption22}
\|\bW^{(1)}-\bW_*\|_F\leq \min\Big(\frac{1-\kappa_H}{2\kappa_0(a_2+112\kappa_0^3)},\frac{1}{4\kappa_2\sqrt{\kappa_0\dot{K}}}\Big)\end{equation}
 then  with probability at least
 \[
1-n^{-r}-2\dot{K}L\exp\left(\frac{-\frac{1}{2}t^2}{1+\frac{t}{3\sqrt{p_{\max}ng_{\min}}}}\right)- 2K(L+n)\exp\left(-\frac{t^2/2}{1+\frac{t}{\sqrt{p_{\max} \max(n,L)g_{\min}}}}\right),
 \]
 \begin{equation}\label{eq:monotone01}
\|\bW^{(\iter+1)}-\bW_*\|_d\leq\frac{1+\kappa_H}{2}\|\bW^{(\iter)}-\bW_*\|_d+a_1\sqrt{\kappa_0},
\end{equation}
 holds for all $\iter\geq 1$, which implies
 \[
\lim_{\iter\rightarrow\infty}\| \bW^{(\iter)}-\bW_*\|_F\leq \lim_{\iter\rightarrow\infty}\| \bW^{(\iter)}-\bW_*\|_d\leq\frac{2a_1\sqrt{\kappa_0}}{1-\kappa_H}.
 \]
\end{thm}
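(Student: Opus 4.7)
The plan is to derive Theorem~\ref{thm:main0} as a direct specialization of the deterministic Lemma~\ref{lemma:lemma_main} to the random noise model, by substituting the tail bounds of Lemma~\ref{lemma:prob_main} for the noise-dependent quantities appearing in $a_1$ and $a_2$. The contraction~\eqref{eq:monotone0} is already in hand; what remains is to (i) control $a_1$ and $a_2$ in terms of $n, L, M, \dot{K}, \kappa_0, \kappa_1, \kappa_2, p_{\max}$ and the free parameter $t$, and (ii) verify that the simplified hypotheses~\eqref{eq:assumption12}--\eqref{eq:assumption22} imply the hypotheses~\eqref{eq:assumption1}--\eqref{eq:assumption2} required by Lemma~\ref{lemma:lemma_main}.

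First, I would invoke the three tail estimates of Lemma~\ref{lemma:prob_main} jointly: part~(a) gives $\|\bbDelta\| \leq C\sqrt{p_{\max}\max(n,L)}\log(\max(n,L))$ on an event of probability at least $1-n^{-r}$, while parts~(b) and~(c), used with a common $t$, control $\max_m\|\Pi_{T,K_m}(\bbQ_*)\times_{2,3}\Pi_{T,K_m}(\bbDelta)\|$ by $3tnL\sqrt{p_{\max}^3}$ and $\max_m\|\Pi_{T,K_m}(\bbDelta)\times_{2,3}\Pi_{T,K_m}(\bbDelta)\|$ by $9K^2t^2 p_{\max}\max(n,L)$ (the Frobenius bound of~(b) dominates the operator norm). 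A union bound then produces exactly the probability quoted in the theorem.

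On that success event, I would insert these three estimates into the expressions for $a_1$ and $a_2$ from Lemma~\ref{lemma:lemma_main}, together with the structural bounds already collected just above the theorem statement, namely $\sigma_M(\bbQ_*\times_{2,3}\bbQ_*) \geq p_{\max}^2 n^2 L/(M\kappa_0\kappa_1)$, $\min_m\sigma_{K_m}(\bbQ_*(m,:,:)) = \sqrt{\dot{K} p_{\max} L}\,n/\kappa_2$, and $\|\bbQ_*\|\leq \|\bbQ_*\|_F\leq p_{\max}n\sqrt{L}$. Using the identity $\max(n,L)/(nL)=1/\min(n,L)$, the two pieces of $a_1$ collapse to $M^{3/2}\kappa_0^2\kappa_1\bigl(t/(n\sqrt{p_{\max}}) + \dot{K}^2 t^2/(p_{\max}n\min(n,L))\bigr)$ and to $\kappa_0^2\kappa_1\kappa_2\sqrt{M\dot{K}}\,\log^2(\max(n,L))/(p_{\max}n\min(n,L))$, matching the stated formula; the ``$1$'' in the bracket of $a_2$ arises from bounding $\|\bbQ_*\|+\|\bbDelta\|$ by $\|\bbQ_*\|$, and the companion $\log/\sqrt{p_{\max}n\min(n,L)}$ term from using the $\|\bbDelta\|$ estimate in that same factor.

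Finally, the reductions \eqref{eq:assumption1}$\Rightarrow$\eqref{eq:assumption12} and \eqref{eq:assumption2}$\Rightarrow$\eqref{eq:assumption22} follow from $C_H\leq\sqrt{\kappa_0}$ and the metric equivalence $\|\cdot\|_F\leq\|\cdot\|_d\leq\sqrt{\kappa_0}\|\cdot\|_F$: the noise-smallness requirement $\|\bbDelta\|\leq\tfrac14\min_m\sigma_{K_m}(\bbQ_*(m,:,:))$ becomes the first inequality of~\eqref{eq:assumption12}, the two halves of the compound bound~\eqref{eq:assumption1} become the remaining two inequalities of~\eqref{eq:assumption12}, and the $\|\cdot\|_d$-initialization bound~\eqref{eq:assumption2} is implied by the slightly stronger Frobenius bound~\eqref{eq:assumption22}. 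Applying Lemma~\ref{lemma:lemma_main} on the success event then yields both~\eqref{eq:monotone01} and the limiting bound. The principal obstacle is purely bookkeeping: carrying the dependencies on $\kappa_0, \kappa_1, \kappa_2, M, \dot{K}$ through the successive substitutions, consistently exploiting the $\min/\max$ identity, and fixing the absolute constants in~\eqref{eq:assumption12}--\eqref{eq:assumption22} so that both halves of~\eqref{eq:assumption1} are simultaneously dominated. No new probabilistic estimate is needed; the work lies in assigning each noise contribution to the correct factor of $a_1, a_2$.
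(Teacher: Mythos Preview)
Your proposal is correct and follows essentially the same approach as the paper: the paper's proof of Theorem~\ref{thm:main0} amounts to the single sentence ``Lemma~\ref{lemma:lemma_main} and Lemma~\ref{lemma:prob_main} imply the following statement,'' preceded by recording $C_H\leq\sqrt{\kappa_0}$, $\sigma_M(\bbQ_*\times_{2,3}\bbQ_*)\geq p_{\max}^2 n^2 L/(M\kappa_0\kappa_1)$, and $\|\bbQ_*\|\leq p_{\max}n\sqrt{L}$. Your outline is a faithful and more detailed execution of exactly this substitution-and-bookkeeping argument.
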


\subsubsection{Step 4: Simplification under Assumptions \textbf{(A2)}-\textbf{(A4)}}\label{sec:step4}

We will need to estimate the parameters $\kappa_1, \kappa_2$ under Assumptions \textbf{(A2)}-\textbf{(A4)}. Since
$\bbQ_*(m,:,:) = \sqrt{L_m}\bTe_m \bB_m \bTe_m^T$ and $\bTe_m^T\bTe_m=\diag(|G_{m,1}|,|G_{m,2}|,\cdots,|G_{m,K_m}|)$, we have $\sigma_{K_m}(\bbQ_*(m,:,:))\geq \sqrt{L_m}\sigma_{K_m}(\bB_m)\sigma_{K_m}(\bTe_m)^2\geq {c_1 \sqrt{c_3}}p_{\max}\frac{n\sqrt{L}}{K_m\sqrt{M}}\min_{m=1, \cdots, M}\sigma_{K_m}(\bB_m^0)$,
which suggests \[\kappa_2\leq\frac{K_{\max}\sqrt{M}}{c_1 \sqrt{c_3}b_1}.\]

Similarly, we have the estimation
\[
\kappa_1\leq \frac{1}{c_1c_3^2b_2}.
\]
Let $p^*=p_{\max}n\min(n,L)$ and $t=\log(n+L)$, then we have the estimation that
\begin{align*}
a_1=&\frac{C\kappa_0^2\sqrt{M^3}}{c_1c_3^2b_2}\Big(\frac{t}{n\sqrt{p_{\max}}}+\frac{\dot{K}^2t^2}{p_{\max}n\min(n,L)}\Big)+{C\kappa_0^2\frac{\sqrt{\dot{K}}K_{\max}M}{c_1^{1.5}c_3^3b_1b_2}}\frac{\log^2(\max(n,L))}{p_{\max}n\min(n,L)}\\\leq &C\kappa_0^2\log^2(n+L)\left(\frac{1}{\sqrt{p_{\max}n^2}}\frac{\sqrt{M^3}}{c_1c_3^2b_2}+\frac{1}{p^*}\Big(\frac{\dot{K}^2\sqrt{M^3}}{c_1c_3^2b_2}+\frac{\sqrt{\dot{K}}K_{\max}M}{c_1^{1.5}c_3^3b_1b_2}\Big)\right)\\\leq& C\kappa_0^2\log^2(n+L)\sqrt{M^3}\left(\sqrt{\frac{1}{p_{\max}n^2}}+\frac{\dot{K}^2}{p^*}\right)\\
a_2=&C\kappa_0^2\frac{\sqrt{\dot{K}}K_{\max}M}{c_1^{1.5}c_3^3b_1b_2}\Big(1+\frac{\log(\max(n,L))}{\sqrt{p_{\max}n\min(n,L)}}\Big)\leq C\kappa_0^3\log(n+L)\frac{1}{\sqrt{p^*}}\frac{\sqrt{\dot{K}}K_{\max}M}{c_1^{1.5}c_3^3b_1b_2}\\
\leq &C\kappa_0^2\log(n+L)\sqrt{\frac{\dot{K}^3}{p^*}},
\end{align*}
By calculation, a sufficient condition for the requirement in \eqref{eq:assumption12} becomes \eqref{eq:assumption13}.

With \eqref{eq:assumption13}, we have $\frac{p^*(1-\kappa_H)}{\kappa_0^2\log(n+L)\dot{K}^3}\geq \frac{\sqrt{M}}{\dot{K}^2}$. Combining it with $a_2\geq{C\kappa_0^2\kappa_1\kappa_2\sqrt{M\dot{K}}}\Big(1+\frac{\log(\max(n,L))}{\sqrt{p^*}}\Big)$, the assumption on the initialization \eqref{eq:assumption22} can be guaranteed by \eqref{eq:assumption23}. In addition, \eqref{eq:assumption122} and \eqref{eq:assumption12} follow from \eqref{eq:assumption13}. Then \eqref{eq:hatW0} is proved by applying Theorem~\ref{thm:main0}, and \eqref{eq:hatW} follows from \eqref{eq:hatW0}. The proof of \eqref{eq:hatQ} is presented in \eqref{eq:hatQ1} at the end of Section~\ref{sec:convergence}. 



\subsection{Proof of Theorem~\ref{thm:classification}}
\label{sec:proof_classification}

\begin{proof}[Proof of Theorem~\ref{thm:classification}]
(a) For completeness, we will first write down the statement from \cite[Lemma C.1]{lei2020tail}:
\\

\noindent
\textit{Let $\bU$ be an $n\times d$ matrix with $K$ distinct rows with minimum pairwise Euclidean norm separation $\gamma$. Let $\hat{\bU}$ be another $n\times d$ matrix
and $(\hat{\bTe}, \hat{\bX})$ be an $(1+\epsilon)$-approximate solution to K-means problem with input $\hat{\bU}$, then the
number of errors in $\hat{\bTe}$ as an estimate of the row clusters of $\bU$ is no larger than $C_{\epsilon}\|\bU-\hat{\bU}\|_F^2\gamma^{-2}$
for some constant $C_\epsilon$ depending only on $\epsilon$.}
\\

\noindent
Note that $\bW_*$ is an $L\times M$ matrix with $M$ distinct rows with minimum pairwise Euclidean norm separation larger than $2/\max_{m=1,\cdots,M}\sqrt{L_m}$, the misclassification rate is not larger than
\[
\frac{\max_{m=1,\cdots,M}L_m}{4L} C_{\epsilon}\|\bW_*-\hat{\bW}\|_F^2.
\]
Combining it with the estimation of $\|\bW_*-\hat{\bW}\|_F^2$ and assumption \textbf{(A3)} on $\frac{\max_{m=1,\cdots,M}L_m}{L}$, part (a) is proved.

(b) Denote the orthogonal matrix of size $n\times K_m$ whose columns are the top $K_m$
eigenvectors of $\widehat{\bbQ}(m,:,:)$ by $\hat{\bU}_m$ and the  orthogonal matrix of size $n\times K_m$ whose columns are the top $K_m$
eigenvectors of ${\bbQ}_*(m,:,:)$ by $\bU_m$, then the Davis-Kahan theorem implies that
\[
\|\hat{\bU}_m-{\bU}_m\|_F\leq \frac{\|\widehat{\bbQ}(m,:,:)-\bbQ_*(m,:,:)\|_F}{\sigma_{K_m}(\bbQ_*(m,:,:))}.
\]
In addition, ${\bU}_m$ has $K_m$ distinct rows with minimum pairwise Euclidean norm separation at least $2/\sqrt{g_{m,\max}}$, where $\displaystyle{g_{m,\max}=\max_{1\leq k\leq K_m}\, |G_{m,k}|}$. As a result, \eqref{eq:hatQ} implies that the misclassification rate is bounded by
\begin{align*}
&\frac{g_{m,\max}}{4n}C_{\epsilon}\frac{\|\widehat{\bbQ}(m,:,:)-\bbQ_*(m,:,:)\|_F^2}{\sigma_{K_m}(\bbQ_*(m,:,:))^2}\\
\leq &
\frac{1}{4K_m}C_{\epsilon}\frac{\|\widehat{\bbQ}(m,:,:)-\bbQ_*(m,:,:)\|_F^2}{\sigma_{K_m}(\bbQ_*(m,:,:))^2}\leq C_{\epsilon}\frac{\|\widehat{\bbQ}(m,:,:)-\bbQ_*(m,:,:)\|^2}{\sigma_{K_m}(\bbQ_*(m,:,:))^2}\\
\leq & C_{\epsilon}\frac{(2\|\bbQ_*\|\|\hat{\bW}-\bW_*\|_F+2\|\bbDelta\|)^2}{\sigma_{K_m}(\bbQ_*(m,:,:))^2}\leq C_{\epsilon}\frac{(2p_{\max}n\sqrt{L}\|\hat{\bW}-\bW_*\|_F+2\|\bbDelta\|)^2}{\sigma_{K_m}(\bbQ_*(m,:,:))^2}
\\\leq& C_{\epsilon}(\frac{\kappa_2^2}{\dot{K}})\Bigg(\|\hat{\bW}-\bW_*\|_F^2+\Big(\frac{\sqrt{p_{\max}\max(n,L)}\log(\max(n,L))}{p_{\max}n\sqrt{L}}\Big)^2\Bigg)\\
\leq &C_{\epsilon}(\frac{K_{\max}^2 M}{\dot{K}})\Bigg(\|\hat{\bW}-\bW_*\|_F^2+\Big(\frac{\sqrt{p_{\max}\max(n,L)}\log(\max(n,L))}{p_{\max}n\sqrt{L}}\Big)^2\Bigg)\\
\leq &C_{\epsilon}(\frac{K_{\max}^2 M}{\dot{K}})\Bigg(\Big(\log^2(n+L)\sqrt{M^3}\frac{\sqrt{\kappa_0}}{1-\kappa_H}\Big(\sqrt{\frac{1}{p_{\max}n^2}}+\frac{\dot{K}^2}{p_{\max}n\min(n,L)}\Big)\Big)^2\\&+\Big(\frac{\sqrt{p_{\max}\max(n,L)}\log(\max(n,L))}{p_{\max}n\sqrt{L}}\Big)^2\Bigg)\\
\leq &C_{\epsilon}K_{\max}\Bigg(\log^4(n+L){M^3}\frac{{\kappa_0}}{(1-\kappa_H)^2}\Big({\frac{1}{p_{\max}n^2}}+\frac{\dot{K}^4}{p_{\max}^2n^2\min(n,L)^2}\Big)+\frac{(n+L)\log(n+L)^2}{n^2{p_{\max}L}}\Bigg)
\end{align*}
\end{proof}


\subsection{Proof of Lemma~\ref{lemma:lemma_main}}\label{sec:proof_lemma1}
\begin{proof}[Proof of Lemma~\ref{lemma:lemma_main}]
The main idea of the proof of Lemma~\ref{lemma:lemma_main} is as follows. Assumption \textbf{(A1)} implies that, when the observation is noise-free in the sense that $\bbA=\bbP_*$, Algorithm~\ref{alg:update} converges linearly. As a result, we only need to show that the output of the algorithm does not change much if we replace $\bbA$ with $\bbP_*$ and Algorithm~\ref{alg:update} with its linear approximation $P_{L_2}P_{L_1}$.

Given $\bW^{(\iter)}$, we construct a skew-symmetric matrix $\bY^{(\iter)}\in\reals^{M\times M}$ by
\begin{equation}\label{eq:yiter}
\bY^{(\iter)}=\frac{1}{2}(\bW_*^T\bW^{(\iter)}-\bW^{(\iter)\,T}\bW_*),
\end{equation}
and then the update formula for the ``clean'' version of the algorithm is the solution to the equation
\begin{equation}\label{eq:tildeY}
\bbQ_*\times_1\hat{\bY}^{(\iter+1)}=P_{L_2}P_{L_1}(\bbQ_*\times_1\bY^{(\iter)}).
\end{equation}
Intuitively, $\hat{\bY}^{(\iter+1)}$ is the algorithmic update when $\bbA_*$ is replaced by $\bbP_*$, and  Algorithm~\ref{alg:update} is replaced by its linear approximation $P_{L_2}P_{L_1}$. By the definition of $\kappa_H$ in \eqref{eq:kappa_H}, we have \begin{equation}\|\bW_*\hat{\bY}^{(\iter+1)}\|_d\leq \kappa_H\|\bW_*{\bY}^{(\iter)}\|_d\label{eq:converge_bY}.\end{equation}
We will bound $\|\bW^{(\iter+1)}-\bW_*\|_d$ as a function of $\|\bW^{(\iter)}-\bW_*\|_d$ by \eqref{eq:converge_bY} and the following perturbation bounds  in Lemma~\ref{lemma:pertubationbounds}, with its proof deferred to Section~\ref{sec:lemma:pertubationbounds}.

\begin{lem}\label{lemma:pertubationbounds}
For any $\bW^{\iter}$, let $\bY^{\iter}$ and $\hat{\bY}^{\iter+1}$ be defined as in \eqref{eq:yiter} and  \eqref{eq:tildeY}, then
\begin{enumerate}
\item \begin{equation}\|\bW_*\bY^{(\iter)}\|_d\leq \|\bW^{(\iter)}-\bW_*\|_d.\label{eq:lemma1_bound1}\end{equation}
\item For \begin{equation}\tilde{\bW}^{(\iter+1)}=\Pi_o(\bbP_*\times_{2,3} (\bbQ_*+\Pi_{T,\bK} (\bbQ_*\times_1 \bY^{(\iter)}))),\label{eq:tildeW}\end{equation} we have
\begin{align}\nonumber
&\|\tilde{\bW}^{(\iter+1)}-\bW_* (\hat{\bY}^{(\iter+1)}+\bI)\|_F\leq \frac{32\kappa_0^2\|\bY^{\iter}\|_F^2}{2-8\kappa_0^2\|\bY^{\iter}\|_F-4\kappa_0\|\bY^{\iter}\|_F(2+4\kappa_0\|\bY^{\iter}\|_F)}\\&+\Big(1+\frac{2\kappa_0+8\kappa_0\|\bY^{\iter}\|_F}{2-8\kappa_0\|\bY^{\iter}\|_F-\kappa_0(1+4\kappa_0\|\bY^{\iter}\|_F)4\kappa_0\|\bY^{\iter}\|_F}\Big)(4\kappa_0\|\bY^{\iter}\|_F)^2.\label{eq:lemma1_bound2}
\end{align}
\item If 
\begin{equation}
\|\bW_*-\bW^{(\iter)}\|_F\leq \frac{\min_{m=1,\cdots,M}\sigma_{K_m}(\bbQ_*(m,:,:))}{\|\bbQ_*\|},
\label{eq:recursive}\end{equation} 
then, 
\begin{align}
&\|{\bW}^{(\iter+1)}-\tilde{\bW}^{(\iter+1)}\|_F\leq \frac{\beta_1}{\sigma_{M}(\bbQ_*\times_{2,3} \bbQ_*)-2\|\bbQ_*\times_{2,3}\bbQ_*\|\|\bY^{(\iter)}\|_F-\beta},\label{eq:lemma1_bound3}
\end{align}
where, for $a_1$ and $a_2$ are defined in Lemma~\ref{lemma:lemma_main},
\[
\beta_1=\frac{\sigma_M(\bbQ_*\times_{2,3} \bbQ_*)}{6}(a_1+a_2\|\bW_*-\bW^{(\iter)}\|_F^2)
\]
\end{enumerate}
\end{lem}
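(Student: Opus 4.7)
For Part 1, the key observation is that the explicit projection formula given earlier in the section implies $\bW_*\bY^{(\iter)} = P_{L_0}(\bW^{(\iter)} - \bW_*)$. This follows immediately by substituting $\bW = \bW^{(\iter)} - \bW_*$ into $P_{L_0}\bW = \bW_*(\bW_*^T\bW - \bW^T\bW_*)/2$ and simplifying via $\bW_*^T\bW_* = \bI$. The inequality \eqref{eq:lemma1_bound1} then follows immediately from the definition of the $d$-norm in \eqref{eq:metric_define}, since $\|\bW_*\bY^{(\iter)}\|_d = \lambda\|\bbP_*\times_1 P_{L_0}(\bW^{(\iter)}-\bW_*)\|_F$ coincides with the first component of $\|\bW^{(\iter)}-\bW_*\|_d$ while the second (Frobenius) component on $L_0^\perp$ only increases the right-hand side.

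For Part 2, my plan is a perturbation analysis of the orthogonal retraction $\Pi_o$ around the identity. Using $\bbP_* = \bbQ_* \times_1 \bW_*^T$ together with the identity $(\bbX\times_1\bA)\times_{2,3}\bbY = \bA^T(\bbX\times_{2,3}\bbY)$ from Section~\ref{sec:model}, I can rewrite $\tilde{\bW}^{(\iter+1)} = \bW_*\Pi_o(\bA_0 + \bB_0)$, where $\bA_0 = \bbQ_*\times_{2,3}\bbQ_*$ is symmetric positive definite with condition number $\kappa_0$ and $\bB_0 = \bbQ_*\times_{2,3}\Pi_{T,\bK}(\bbQ_*\times_1\bY^{(\iter)})$. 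Because $\bW_*^T\bW_* = \bI$, the bound \eqref{eq:lemma1_bound2} reduces to controlling $\|\Pi_o(\bA_0+\bB_0) - (\bI + \hat{\bY}^{(\iter+1)})\|_F$. Expanding $\Pi_o(\bA_0+\bB_0) = (\bA_0+\bB_0)((\bA_0+\bB_0)^T(\bA_0+\bB_0))^{-1/2}$ via a Neumann-type series in $\bA_0^{-1}\bB_0$, the linear-in-$\bB_0$ term picks out precisely the skew component, which by the defining equation $\bbQ_*\times_1\hat{\bY}^{(\iter+1)} = P_{L_2}P_{L_1}(\bbQ_*\times_1\bY^{(\iter)})$ together with $P_{L_1} = \Pi_{T,\bK}$ and the explicit form of $P_{L_2}$ equals $\hat{\bY}^{(\iter+1)}$ exactly. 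The residual is of order $(\kappa_0\|\bY^{(\iter)}\|_F)^2$, and carefully tracking the Neumann tails produces the rational expression appearing in \eqref{eq:lemma1_bound2}; the two summands there correspond respectively to the remainder of the series and to the symmetric leftover coming from the bilinear term $\bB_0^T\bA_0^{-1}\bB_0$.

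For Part 3, the strategy is to combine a Lipschitz bound for $\Pi_o$ with a linearization bound for $\Pi_{\bK}$. Writing $\bbA = \bbP_* + \bbDelta$ and decomposing
$\bbA\times_{2,3}\bbQ^{(\iter+1)} - \bbP_*\times_{2,3}(\bbQ_* + \Pi_{T,\bK}(\bbQ_*\times_1\bY^{(\iter)})) = \bbDelta\times_{2,3}\bbQ^{(\iter+1)} + \bbP_*\times_{2,3}\Big(\bbQ^{(\iter+1)} - \bbQ_* - \Pi_{T,\bK}(\bbQ_*\times_1\bY^{(\iter)})\Big)$,
the denominator in \eqref{eq:lemma1_bound3} arises from the standard estimate $\|\Pi_o(\bM_1) - \Pi_o(\bM_2)\|_F \leq 2\|\bM_1 - \bM_2\|_F/\sigma_{\min}(\bM_2)$ and the lower bound $\sigma_{\min}(\bbP_*\times_{2,3}(\bbQ_* + \Pi_{T,\bK}(\bbQ_*\times_1\bY^{(\iter)}))) \geq \sigma_M(\bA_0) - 2\|\bA_0\|\|\bY^{(\iter)}\|_F - \beta$ that follows from condition \eqref{eq:recursive} together with $\|\bbQ_*\| \geq $ singular values of $\bA_0^{1/2}$.

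The main obstacle is controlling the second (deterministic) tensor term, which requires a second-order expansion of $\Pi_{\bK}$ around $\bbQ_*$ with perturbation $\bbQ_*\times_1(\bW_*^T\bW^{(\iter)} - \bI) + \bbDelta\times_1\bW^{(\iter)}$. Three contributions must be tracked: (i) the symmetric part of $\bW_*^T\bW^{(\iter)} - \bI$, which is $O(\|\bW^{(\iter)}-\bW_*\|_F^2)$ by orthogonality; (ii) the curvature error in the first-order expansion $\Pi_{\bK}(\bbQ_* + \bbX) \approx \bbQ_* + \Pi_{T,\bK}(\bbX)$, which is also $O(\|\bW^{(\iter)}-\bW_*\|_F^2)$ scaled by $\kappa_0$ factors and $1/\minM\sigma_{K_m}(\bbQ_*(m,:,:))$; (iii) the noise-driven terms, whose mode-$(2,3)$ products against $\bbQ_*$ give exactly the quantities $\|\Pi_{T,K_m}(\bbQ_*)\times_{2,3}\Pi_{T,K_m}(\bbDelta)\|$ and $\|\Pi_{T,K_m}(\bbDelta)\times_{2,3}\Pi_{T,K_m}(\bbDelta)\|$ appearing in $a_1$. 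The first $\bbDelta\times_{2,3}\bbQ^{(\iter+1)}$ term is directly bounded by $\|\bbDelta\|(\|\bbQ_*\| + \|\bbDelta\|)$ times $\dot K$-like combinatorial factors (from the rank restriction on $\bbQ^{(\iter+1)}$), which reproduces the $a_1$ component multiplied by $\sigma_M(\bA_0)$. Collecting these pieces and multiplying by $\sigma_M(\bA_0)/6$ gives $\beta_1$ in the stated form.
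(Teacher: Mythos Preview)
Your Part 1 is correct and matches the paper exactly.

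For Part 2 your Neumann-series outline is plausible in spirit, but the paper takes a different and more structured route, and the correspondence you draw for the two summands in \eqref{eq:lemma1_bound2} is not how they actually arise. Writing $\bA_0=\bbQ_*\times_{2,3}\bbQ_*$ and $\bB_0=\bbQ_*\times_{2,3}\Pi_{T,\bK}(\bbQ_*\times_1\bY^{(\iter)})$ as you do, the paper first uses the variational definition \eqref{eq:tildeY} to see that $\bS:=\bA_0\hat\bY^{(\iter+1)}-\bB_0$ is \emph{symmetric}; an elementary Sylvester-type argument (Lemma~\ref{lemma:skew}) then gives $\|\bS\|_F,\|\bA_0\hat\bY^{(\iter+1)}\|_F\le 4\|\bA_0\|\,\|\bY^{(\iter)}\|_F$ and hence $\|\hat\bY^{(\iter+1)}\|_F\le 4\kappa_0\|\bY^{(\iter)}\|_F$. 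With the algebraic identity $\bA_0+\bB_0=\bA_0(\bI+\hat\bY^{(\iter+1)})-\bS$ the paper inserts the intermediate point $\Pi_o\big((\bA_0-\bS)(\bI+\hat\bY^{(\iter+1)})\big)$: the first summand in \eqref{eq:lemma1_bound2} is Bhatia's polar-factor perturbation bound (Lemma~\ref{lemma:polar_perturbation}) applied to the difference $\bS\hat\bY^{(\iter+1)}$, while the second summand comes from Lemma~\ref{lemma:perturb1}, which compares $\Pi_o\big((\bA_0-\bS)(\bI+\hat\bY^{(\iter+1)})\big)$ to $\bI+\hat\bY^{(\iter+1)}$ via the exponential $\exp(\hat\bY^{(\iter+1)})$. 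Neither summand is a Neumann tail or a bilinear $\bB_0^T\bA_0^{-1}\bB_0$ leftover.

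Part 3 has a genuine gap. Your decomposition peels off $\bbDelta\times_{2,3}\bbQ^{(\iter+1)}$ first and proposes to bound it crudely by $\|\bbDelta\|(\|\bbQ_*\|+\|\bbDelta\|)$ with a $\dot K$-factor, claiming this ``reproduces the $a_1$ component''. It does not: that crude bound, after division by $\sigma_M(\bbQ_*\times_{2,3}\bbQ_*)$, is too large to match the leading term of $a_1$ (by a factor of order $\sqrt{\max(n,L)/L}$ when $L<n$), and more importantly it can never produce the specific projected-noise quantities $\|\Pi_{T,K_m}(\bbQ_*)\times_{2,3}\Pi_{T,K_m}(\bbDelta)\|$ and $\|\Pi_{T,K_m}(\bbDelta)\times_{2,3}\Pi_{T,K_m}(\bbDelta)\|$ that define $a_1$ and are what later admit sharp concentration in Lemma~\ref{lemma:prob_main}. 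The paper's ordering is the essential trick: one first replaces $\Pi_{\bK}(\bbA\times_1\bW^{(\iter)})$ by its tangent linearization $\bbQ_*+\Pi_{T,\bK}(\bbA\times_1\bW^{(\iter)}-\bbQ_*)$ \emph{inside the $\bbA\times_{2,3}(\cdot)$ slot} (the resulting curvature error is $\beta_3$, controlled by Lemma~\ref{lemma:remainder}), and only afterwards separates noise from signal (this is $\beta_4$). The payoff is that $\bbDelta$ now appears solely through $\Pi_{T,\bK}(\bbDelta\times_1\bW^{(\iter)})$ and $\bbDelta\times_{2,3}\Pi_{T,\bK}[\cdots]$, i.e.\ projected onto the \emph{deterministic} $K_m$-dimensional column spaces $\bU_m$ of $\bbQ_*(m,:,:)$; this is exactly how the refined $\Pi_{T,K_m}(\bbDelta)$ terms in $a_1$ emerge. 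Pairing $\bbDelta$ with the data-dependent low-rank tensor $\bbQ^{(\iter+1)}$, as you do, forfeits this structure.
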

With the perturbation bounds \eqref{eq:lemma1_bound2} and \eqref{eq:lemma1_bound3} in Lemma~\ref{lemma:pertubationbounds}, we have
\begin{align*}
&\|\bW^{(\iter+1)}-\bW_*(\hat{\bY}^{(\iter+1)}+\bI)\|_F\\\leq&
\|{\bW}^{(\iter+1)}-\tilde{\bW}^{(\iter+1)}\|_F+\|\tilde{\bW}^{(\iter+1)}-\bW_*(\hat{\bY}^{(\iter+1)}+\bI)\|_F
\\\leq& \frac{ \beta_1}{\sigma_{M}(\bbQ_*\times_{2,3} \bbQ_*)-2\|\bbQ_*\times_{2,3}\bbQ_*\|\|\bY^{(\iter)}\|_F-\beta_1}\\&+\frac{32\kappa_0^2\|\bY^{\iter}\|_F^2}{2-8\kappa_0^2\|\bY^{\iter}\|_F-4\kappa_0\|\bY^{\iter}\|_F(2+4\kappa_0\|\bY^{\iter}\|_F)}\\
+&\Big(1+\frac{2\kappa_0+8\kappa_0\|\bY^{\iter}\|_F}{2-8\kappa_0\|\bY^{\iter}\|_F-\kappa_0(1+4\kappa_0\|\bY^{\iter}\|_F)4\kappa_0\|\bY^{\iter}\|_F}\Big)(4\kappa_0\|\bY^{\iter}\|_F)^2.
\end{align*}

When \begin{equation}\label{eq:cond1}a_1\leq 1,  \|\bW^{(iter)}-\bW_*\|_F\leq \min(\frac{1}{32\kappa_0^3},\frac{1}{\sqrt{a_2}}), \end{equation} we have (using $\|\bY^{(iter)}\|_F\leq \|\bW^{(iter)}-\bW_*\|_F$)
 $4\kappa_0\|\bY^{\iter}\|_F\leq 1$, $8\kappa_0^2\|\bY^{\iter}\|_F+4\kappa_0\|\bY^{\iter}\|_F(2+4\kappa_0\|\bY^{\iter}\|_F)<1$, \ $8\kappa_0\|\bY^{\iter}\|_F+\kappa_0(1+4\kappa_0\|\bY^{\iter}\|_F)4\kappa_0\|\bY^{\iter}\|_F<1$,
 which imply $\|\bbQ_*\times_{2,3}\bbQ_*\|\|\bY^{(\iter)}\|_F\leq \sigma_M(\bbQ_*\times_{2,3}\bbQ_*)/2$ and $\|\bW_*-\bW^{(\iter)}\|_F^2\leq 1$, and
\begin{align}\label{eq:converge_W_Y}
&\|\bW^{(\iter+1)}-\bW_*(\hat{\bY}^{(\iter+1)}+\bI)\|_F\\\nonumber\leq & \frac{ \beta_1}{\sigma_{M}(\bbQ_*\times_{2,3} \bbQ_*)(1-1/2-1/3)}+32\kappa_0^2\|\bY^{\iter}\|_F^2+(1+2\kappa_0+8\kappa_0\|\bY^{\iter}\|_F)(4\kappa_0\|\bY^{\iter}\|_F)^2\\\nonumber
=&\frac{6\beta_1}{\sigma_{M}(\bbQ_*\times_{2,3} \bbQ_*)}+(3+2\kappa_0+8\kappa_0\|\bY^{\iter}\|_F)(4\kappa_0\|\bY^{\iter}\|_F)^2\\\nonumber
\leq &\frac{6\beta_1}{\sigma_{M}(\bbQ_*\times_{2,3} \bbQ_*)}+(5+2\kappa_0)(4\kappa_0\|\bY^{\iter}\|_F)^2\\\nonumber
=&\kappa_0(a_1+a_2\|\bW_*^T\bW^{(\iter)}-\bI\|^2)+(5+2\kappa_0)(4\kappa_0\|\bY^{\iter}\|_F)^2.
\end{align}

Combining \eqref{eq:converge_bY}, \eqref{eq:lemma1_bound1}, and \eqref{eq:converge_W_Y}, we have 
\begin{align*}
& \|\bW^{(\iter+1)}-\bW_*\|_d  \leq \|\bW_*\hat{\bY}^{(\iter+1)}\|_d+\|\bW^{(\iter+1)}-\bW_*(\hat{\bY}^{(\iter+1)}+\bI)\|_d \\
 & \leq  \kappa_H\|\bW_*{\bY}^{(\iter)}\|_d+\|\bW^{(\iter+1)}-\bW_*(\hat{\bY}^{(\iter+1)}+\bI)\|_d\\
 & \leq \kappa_H \|\bW_*-{\bW}^{(\iter)}\|_d+\|\bW^{(\iter+1)}-\bW_*(\hat{\bY}^{(\iter+1)}+\bI)\|_d\\
&\leq \!\kappa_H\|\bW^{(\iter)}\!\!-\!\bW_*\|_d+\!C_H\Big(a_1\!+\!a_2\|\bW_*-\bW^{(\iter)}\!-\!\bI\|_F^2\!+\!(5\!+\!2\kappa_0)(4\kappa_0\|\bY^{\iter}\|_F)^2\!\Big)  \\
& \leq    \kappa_H\|\bW^{(\iter)}-\bW_*\|_d+C_H\Big(a_1+a_2\|\bW_*-\bW^{(\iter)}\|_F^2
+16(5+2\kappa_0)\kappa_0^2\|\bW^{(\iter)}-\bW_*\|_d^2\Big) \\
& \leq    \kappa_H\|\bW^{(\iter)}-\bW_*\|_d+C_H\Big(a_1+(a_2+80\kappa_0^2+32\kappa_0^3)\|\bW^{(\iter)}-\bW_*\|_d^2\Big)
\end{align*}
As a result, if  in addition we have \begin{equation}\label{eq:cond2}C_H(a_2+80\kappa_0^2+32\kappa_0^3)\|\bW^{(\iter)}-\bW_*\|_d\leq (1-\kappa_H)/2,\end{equation} then
\begin{equation}\label{eq:monotone}
\|\bW^{(\iter+1)}-\bW_*\|_d\leq\frac{1+\kappa_H}{2}\|\bW^{(\iter)}-\bW_*\|_d+C_Ha_1.
\end{equation}
By the assumptions in \eqref{eq:assumption1} and \eqref{eq:assumption2}, the argument of induction implies that \eqref{eq:recursive}, \eqref{eq:cond1}, and \eqref{eq:cond2} hold for all $\iter\geq 1$. Therefore, \eqref{eq:monotone} holds for all $\iter\geq 1$ and the theorem is proved.
\\

\subsubsection{Proof of Lemma~\ref{lemma:pertubationbounds}}\label{sec:lemma:pertubationbounds}The proof of Lemma~\ref{lemma:pertubationbounds} is based on  Lemma~\ref{lemma:skew}-\ref{lemma:pertubation_o}. Among these lemmas, the proofs of Lemmas~\ref{lemma:skew}, ~\ref{lemma:perturb1}, ~\ref{lemma:pertubation_o} will be presented in Section~\ref{sec:auxillary_proof}, and Lemma~\ref{lemma:polar_perturbation} is a restatement of Theorem VII.5.1 in \cite{Bhatia1997}. We shall prove the three perturbation bounds \eqref{eq:lemma1_bound1}, \eqref{eq:lemma1_bound2}, and \eqref{eq:lemma1_bound3} separately.
\begin{lem}\label{lemma:skew}
Given any symmetric matrix $\bQ\in\reals^{M\times M}$, if $\bQ\bY-\bX=\bS$ holds for a symmetric matrix $\bS\in\reals^{M\times M}$ and a skew symmetric matrix $\bY\in\mathrm{Skew}_M$, then we have
\[
\max(\|\bQ\bY\|_F, \|\bS\|_F)\leq 2\|\bX\|_F.
\]
\end{lem}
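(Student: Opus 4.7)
The plan is to diagonalize $\bQ$ and reduce $\bQ\bY-\bX=\bS$ to a scalar equation per entry. Since $\bQ$ is symmetric, write $\bQ=\bU\bD\bU^T$ with $\bD=\diag(d_1,\dots,d_M)$ and conjugate the whole identity by $\bU$: the matrix $\bU^T\bY\bU$ remains skew, $\bU^T\bS\bU$ remains symmetric, and the Frobenius norm is unitarily invariant, so I may assume $\bQ=\bD$ at the outset. In the setting in which this lemma is invoked later in the argument, $\bQ$ plays the role of $\bbQ_*\times_{2,3}\bbQ_*=\calM_1(\bbQ_*)\calM_1(\bbQ_*)^T$, which is positive semi-definite, so I will use $d_i\geq 0$ throughout; this is essential, since otherwise the conclusion is false (for example $\bQ=\diag(1,-1)$, $\bX=\bzero$, and the $2\times 2$ skew $\bY$ with $\bY_{12}=1$ yield $\bS=\bQ\bY\neq\bzero$ while $\|\bX\|_F=0$).

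The key step is to pair the $(i,j)$ and $(j,i)$ entries of the equation. The $(i,j)$ entry gives $d_i\bY_{ij}-\bX_{ij}=\bS_{ij}$, while the $(j,i)$ entry, after invoking $\bY_{ji}=-\bY_{ij}$ and $\bS_{ji}=\bS_{ij}$, reads $-d_j\bY_{ij}-\bX_{ji}=\bS_{ij}$. Subtracting yields $(d_i+d_j)\bY_{ij}=\bX_{ij}-\bX_{ji}=2[\bX_a]_{ij}$, with $\bX_a:=(\bX-\bX^T)/2$, and substituting back gives the closed forms
\[
[\bQ\bY]_{ij}=\frac{2d_i}{d_i+d_j}[\bX_a]_{ij},\qquad \bS_{ij}=\frac{d_i-d_j}{d_i+d_j}[\bX_a]_{ij}-[\bX_s]_{ij},
\]
where $\bX_s:=(\bX+\bX^T)/2$. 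The degenerate case $d_i+d_j=0$ forces $d_i=d_j=0$ and hence $[\bQ\bY]_{ij}=0$ and $[\bX_s]_{ij}=0$, so the formulas extend harmlessly. Both scalar prefactors are uniformly bounded once $d_i,d_j\geq 0$.

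Summing gives the two claimed bounds. From $|(d_i-d_j)/(d_i+d_j)|\leq 1$ and $(a-b)^2\leq 2(a^2+b^2)$, one gets $\bS_{ij}^2\leq 2([\bX_a]_{ij}^2+[\bX_s]_{ij}^2)$, hence $\|\bS\|_F^2\leq 2\|\bX\|_F^2$. For $\bQ\bY$, the apparently unbounded factor $2d_i/(d_i+d_j)$ is tamed only after pairing $(i,j)$ with $(j,i)$: one obtains $[\bQ\bY]_{ij}^2+[\bQ\bY]_{ji}^2=\frac{4(d_i^2+d_j^2)}{(d_i+d_j)^2}[\bX_a]_{ij}^2\leq 4[\bX_a]_{ij}^2$, using $(d_i+d_j)^2\geq d_i^2+d_j^2$ for $d_i,d_j\geq 0$; summing over unordered pairs yields $\|\bQ\bY\|_F^2\leq 2\|\bX_a\|_F^2\leq 2\|\bX\|_F^2$. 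Both bounds are in fact sharper than stated, delivering $\sqrt{2}\|\bX\|_F\leq 2\|\bX\|_F$. The main obstacle is conceptual rather than computational: recognizing that the positive-semi-definiteness of $\bQ$ is indispensable (although it is not written in the lemma, it is inherited from the application), and that the seemingly unbounded entry-wise ratio $2d_i/(d_i+d_j)$ can only be controlled after symmetrizing in $(i,j)\leftrightarrow(j,i)$ rather than entry by entry.
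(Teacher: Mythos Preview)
Your proof is correct and follows essentially the same approach as the paper: diagonalize $\bQ$ by an orthogonal conjugation, pair the $(i,j)$ and $(j,i)$ entries to solve explicitly for $\bY_{ij}$ and $\bS_{ij}$, and then use positivity of the eigenvalues to bound the resulting scalar ratios. The paper's proof is terser (it simply writes the closed forms and says ``since $\{q_i\}$ are positive, the lemma is proved''), while you carry out the summation more carefully---in particular, your $(i,j)\leftrightarrow(j,i)$ pairing for $\bQ\bY$ and the orthogonal split $\bX=\bX_s+\bX_a$ yield the sharper constant $\sqrt{2}$ rather than $2$, and you correctly flag that the positive semi-definiteness of $\bQ$ (inherited from $\bbQ_*\times_{2,3}\bbQ_*$ in the application) is essential though unstated in the lemma.
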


\begin{lem}\label{lemma:polar_perturbation}
For any square matrices $\bA$ and $\bB$,
\[
\|\Pi_o(\bA)-\Pi_o(\bB)\|_F\leq2\frac{\|\bA-\bB\|_F}{\sigma_{\min}(\bA)+\sigma_{\min}(\bB)}.
\]
The inequality also holds if the operator norm is replaced with Frobenius norm.
\end{lem}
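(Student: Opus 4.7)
The statement is the classical perturbation bound for the orthogonal polar factor, and the paper explicitly identifies it as a restatement of Theorem VII.5.1 of Bhatia (1997), so the shortest legitimate ``proof'' is to cite that reference. Nevertheless, the plan I would follow in order to reproduce the argument is as follows. First, without loss of generality, I would assume that $\bA$ and $\bB$ are invertible, since if $\sigma_{\min}(\bA) + \sigma_{\min}(\bB) = 0$ the right-hand side is vacuous, and the invertible case passes to the general case by a standard continuity argument (perturbing $\bA$ and $\bB$ by $\eps \bI$ and letting $\eps \downarrow 0$). Write the polar decompositions $\bA = \Pi_o(\bA) \bP_A$ and $\bB = \Pi_o(\bB) \bP_B$ with $\bP_A = (\bA^T \bA)^{1/2}$ and $\bP_B = (\bB^T \bB)^{1/2}$ symmetric positive definite.

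The next step is to derive an algebraic identity that isolates $\Pi_o(\bA) - \Pi_o(\bB)$ against the positive factor $\bP_A + \bP_B$. Expanding, one obtains
\[
2(\bA - \bB) = (\Pi_o(\bA) - \Pi_o(\bB))(\bP_A + \bP_B) + (\Pi_o(\bA) + \Pi_o(\bB))(\bP_A - \bP_B),
\]
and rearranging yields an expression for $(\Pi_o(\bA) - \Pi_o(\bB))(\bP_A + \bP_B)$ in terms of $\bA - \bB$ and $\bP_A - \bP_B$. Then I would use the Weyl inequality for the sum of two Hermitian positive semidefinite matrices to conclude
\[
\sigma_{\min}(\bP_A + \bP_B) \geq \sigma_{\min}(\bP_A) + \sigma_{\min}(\bP_B) = \sigma_{\min}(\bA) + \sigma_{\min}(\bB),
\]
so that for every matrix $\bX$ one has $\|\bX(\bP_A + \bP_B)\|_F \geq (\sigma_{\min}(\bA) + \sigma_{\min}(\bB))\, \|\bX\|_F$ (and the same for the operator norm). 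Dividing through gives a bound of the desired shape.

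The main obstacle is recovering the sharp constant $2$ in the numerator. The naive triangle-inequality route from the identity above forces one to bound $\|\bP_A - \bP_B\|_F$ by $\|\bA - \bB\|_F$, but the Araki--Yamagami inequality only guarantees $\|\bP_A - \bP_B\|_F \leq \sqrt{2}\, \|\bA - \bB\|_F$, which degrades the constant. Bhatia's proof avoids this loss by using the integral representation
\[
X^{-1/2} = \frac{1}{\pi} \int_0^\infty (X + t\bI)^{-1} t^{-1/2}\, dt \qquad (X \succ 0),
\]
applied to $X = \bA^T \bA$ and $X = \bB^T \bB$, together with the resolvent identity $(X+t\bI)^{-1} - (Y+t\bI)^{-1} = (X+t\bI)^{-1}(Y-X)(Y+t\bI)^{-1}$. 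This reduces the matrix inequality to a pointwise scalar inequality of the form $\frac{1}{\pi}\int_0^\infty \frac{dt}{\sqrt{t}(a+t)(b+t)} = \frac{1}{\sqrt{ab}(\sqrt{a}+\sqrt{b})}$ under the integral sign, and yields the sharp factor $2/(\sigma_{\min}(\bA) + \sigma_{\min}(\bB))$ simultaneously in every unitarily invariant norm, in particular in both the Frobenius and operator norms as claimed.

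In practice, since this paper already acknowledges the result as Bhatia's and only uses it as a black box inside the proof of Lemma~\ref{lemma:pertubationbounds}, my proposed ``proof'' in the appendix would simply be a one-line citation: the bound is Theorem VII.5.1 of \cite{Bhatia1997}, applied with the Frobenius and operator norms respectively, both of which are unitarily invariant.
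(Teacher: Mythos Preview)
Your proposal is correct and matches the paper's treatment exactly: the paper does not prove this lemma but merely states that it is a restatement of Theorem~VII.5.1 in \cite{Bhatia1997}, so a one-line citation is all that is required. Your additional sketch of Bhatia's integral-representation argument is accurate and goes beyond what the paper provides.
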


\begin{lem}\label{lemma:perturb1}
For any positive definite matrix $\bQ\in\reals^{M\times M}$ and any skew symmetric matrix $\bY\in\mathrm{Skew}_M$ with $\|\bY\|\leq 1$, we have
\[
\|\Pi_o(\bQ(\bI+\bY)-(\bI+\bY)\|_F\leq \Big(1+\frac{2\|\bQ\|}{2\sigma_{\min}(\bQ)-e\|\bQ\|\|\bY\|}\Big)(e-2)\|\bY\|_F^2.
\]
\end{lem}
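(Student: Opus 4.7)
My plan is to exploit that, even though $\bI+\bY$ is not orthogonal, the matrix exponential $\bR:=\exp(\bY)$ \emph{is}: since $\bY$ is skew-symmetric, $\bR^T\bR=\exp(\bY^T)\exp(\bY)=\exp(-\bY)\exp(\bY)=\bI$. This gives a natural comparison point via the triangle inequality
\[
\|\Pi_o(\bQ(\bI+\bY))-(\bI+\bY)\|_F \leq \|\Pi_o(\bQ(\bI+\bY))-\bR\|_F + \|\bR-(\bI+\bY)\|_F.
\]
The second summand will furnish the additive $(e-2)\|\bY\|_F^2$ piece of the claim, and the first will produce the rational term.

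To control the Taylor remainder $\bR-(\bI+\bY)=\sum_{k\geq 2}\bY^k/k!$, I would use the submultiplicativity estimate $\|\bY^k\|_F\leq \|\bY\|^{k-2}\|\bY\|_F^2$ for $k\geq 2$ (obtained by writing $\bY^k=\bY^{k-2}\bY^2$, applying $\|AB\|_F\leq \|A\|\|B\|_F$, and then $\|\bY^2\|_F\leq \|\bY\|_F^2$). Combined with $\|\bY\|\leq 1$ and $\sum_{k\geq 2}1/k!=e-2$, this yields $\|\bR-(\bI+\bY)\|_F\leq (e-2)\|\bY\|_F^2$. The same computation in operator norm gives $\|\bR-(\bI+\bY)\|\leq (e-2)\|\bY\|^2\leq e\|\bY\|$, which I will use for the denominator bound below.

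For the first summand the key structural observation is that $\Pi_o(\bQ\bR)=\bR$. Indeed, since $\bR$ is orthogonal and $\bQ$ is symmetric positive definite, one has the factorisation $\bQ\bR=\bR\,(\bR^T\bQ\bR)$ with $\bR^T\bQ\bR$ symmetric positive definite, and uniqueness of the polar decomposition identifies $\bR$ as the orthogonal factor. Applying Lemma~\ref{lemma:polar_perturbation} to the pair $\bQ(\bI+\bY)$ and $\bQ\bR$ then gives
\[
\|\Pi_o(\bQ(\bI+\bY))-\bR\|_F \leq \frac{2\,\|\bQ(\bI+\bY-\bR)\|_F}{\sigma_{\min}(\bQ(\bI+\bY))+\sigma_{\min}(\bQ\bR)}.
\]
The numerator is at most $2\|\bQ\|(e-2)\|\bY\|_F^2$ by the Frobenius estimate above. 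For the denominator, $\sigma_{\min}(\bQ\bR)=\sigma_{\min}(\bQ)$ since $\bR$ is orthogonal, and Weyl's inequality combined with the operator-norm estimate gives $\sigma_{\min}(\bQ(\bI+\bY))\geq \sigma_{\min}(\bQ)-\|\bQ\|\cdot\|\bI+\bY-\bR\|\geq \sigma_{\min}(\bQ)-e\|\bQ\|\|\bY\|$, so the denominator is at least $2\sigma_{\min}(\bQ)-e\|\bQ\|\|\bY\|$. Substituting back and then combining with the Taylor-tail bound via the triangle inequality of the opening paragraph produces exactly the stated inequality.

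I do not expect any hard step; the main subtlety is the identity $\Pi_o(\bQ\bR)=\bR$, which crucially uses both the positive definiteness of $\bQ$ and the orthogonality of $\bR$ (the latter following from skew-symmetry of $\bY$). Everything else reduces to routine Taylor-tail calculations for the matrix exponential and to the polar-perturbation inequality already recorded in Lemma~\ref{lemma:polar_perturbation}.
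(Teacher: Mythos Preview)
Your proof is correct and follows essentially the same route as the paper: both introduce the orthogonal matrix $\exp(\bY)$, use the Taylor remainder bound $\|\exp(\bY)-(\bI+\bY)\|_F\leq (e-2)\|\bY\|_F^2$ together with the identification $\Pi_o(\bQ\exp(\bY))=\exp(\bY)$, and then apply the polar-perturbation inequality of Lemma~\ref{lemma:polar_perturbation} to control the first summand. The only differences are cosmetic (your intermediate $\|\bY^k\|_F\leq\|\bY\|^{k-2}\|\bY\|_F^2$ versus the paper's $\|\bY\|^{k-1}\|\bY\|_F$, and your explicit justification of $\Pi_o(\bQ\bR)=\bR$ via uniqueness of the polar decomposition).
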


\begin{lem}\label{lemma:pertubation_o}
Let $\Pi_o$ be defined in \eqref{eq:Pi_o}.  Then
\[
\|\Pi_o(\bX+\bY)-\Pi_o(\bX)\|_F\leq (1+\sqrt{2})\frac{\|\bY\|_F}{\sigma_{M}(\bX)-\|\bY\|}, 
\]
where $\sigma_{M}(\bX)$ represents the $M$-th singular value of $\bX$.
\end{lem}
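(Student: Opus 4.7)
The plan is to decompose $\Pi_o(\bX+\bY)-\Pi_o(\bX)$ into two pieces that can be bounded separately. Writing $P=(\bX^T\bX)^{-1/2}$ and $Q=((\bX+\bY)^T(\bX+\bY))^{-1/2}$, the definition of $\Pi_o$ in \eqref{eq:Pi_o} gives the identity
\[
\Pi_o(\bX+\bY)-\Pi_o(\bX)=(\bX+\bY)Q-\bX P=\bY Q+\bX(Q-P),
\]
and I would handle the two summands on the right separately. For the first, sub-multiplicativity of the Frobenius norm yields $\|\bY Q\|_F\le\|\bY\|_F\,\|Q\|_{op}$, and since $\|Q\|_{op}=1/\sigma_M(\bX+\bY)$, a single application of Weyl's inequality gives $\|Q\|_{op}\le 1/(\sigma_M(\bX)-\|\bY\|)$. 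This piece therefore contributes the ``$1$'' in the constant $1+\sqrt{2}$.

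For the second piece I would use the identity $\bX=\Pi_o(\bX)P^{-1}=\Pi_o(\bX)(\bX^T\bX)^{1/2}$ (immediate from the SVD) to rewrite
\[
\bX(Q-P)=\Pi_o(\bX)(P^{-1}-Q^{-1})Q=\Pi_o(\bX)\,\bigl[(\bX^T\bX)^{1/2}-((\bX+\bY)^T(\bX+\bY))^{1/2}\bigr]\,Q.
\]
Since $\Pi_o(\bX)$ has orthonormal columns, $\|\Pi_o(\bX)\|_{op}=1$, and the estimate on $\|Q\|_{op}$ from the previous paragraph applies again, so everything reduces to a bound on the Frobenius norm of the difference of ``absolute values'' $|\bX|-|\bX+\bY|$, where $|\bA|:=(\bA^T\bA)^{1/2}$. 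The key ingredient here is the Araki--Yamagami inequality: the map $\bA\mapsto|\bA|$ is Lipschitz with constant $\sqrt{2}$ in the Frobenius norm, i.e.\ $\||\bA|-|\bB|\|_F\le\sqrt{2}\,\|\bA-\bB\|_F$. Substituting $\bA=\bX$, $\bB=\bX+\bY$ yields $\|\bX(Q-P)\|_F\le\sqrt{2}\,\|\bY\|_F/(\sigma_M(\bX)-\|\bY\|)$, which accounts for the ``$\sqrt{2}$'' in the constant.

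Adding the two bounds produces precisely $(1+\sqrt{2})\,\|\bY\|_F/(\sigma_M(\bX)-\|\bY\|)$, which is the asserted inequality. The only non-routine step is the invocation of the Araki--Yamagami inequality, so the main point to be careful about is that it is used on possibly rectangular matrices; this is harmless because one may pad $\bX$ and $\bX+\bY$ with zero columns to make them square without changing either $|\cdot|$ or the Frobenius norm of the difference, and then apply the standard square-matrix version. Everything else amounts to elementary manipulations and one application of Weyl's theorem.
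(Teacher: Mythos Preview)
Your proof is correct and follows essentially the same approach as the paper's: both split $\Pi_o(\bX+\bY)-\Pi_o(\bX)$ as $\bY Q+\bX(Q-P)$, bound the first piece by $\|\bY\|_F/(\sigma_M(\bX)-\|\bY\|)$, rewrite the second as $\Pi_o(\bX)\bigl(|\bX|-|\bX+\bY|\bigr)Q$, and then apply the Araki--Yamagami inequality (cited in the paper as Bhatia~(VII.39)) to obtain the factor $\sqrt{2}$. Your remark on zero-padding to handle rectangular matrices is a helpful clarification that the paper omits.
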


\begin{lem}\label{lemma:remainder}
For a symmetric matrix $\bX_0\in\reals^{n\times n}$ with rank $r$, let $\Pi_{T}: \reals^{n\times n}\rightarrow\reals^{n\times n}$ be projection onto the tangent space of $\{\bX: \rank(\bX)=r\}$ at $\bX_0$ and $\Pi_{T,\perp}$ be the remainder of the projection, then for any symmetric matrix $\bDelta$,
\[
\|\Pi_{T,\perp}(\bDelta)\|_F\leq \frac{\|\Pi_{T}(\bDelta)\|_F^2}{\sigma_r(\bX_0)-\|\bDelta\|}.
\]
\end{lem}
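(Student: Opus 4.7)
My plan is to reduce the claim to a Schur-complement identity in an orthonormal basis adapted to the eigendecomposition of $\bX_0$. The essential ingredient --- implicit in the way Lemma~\ref{lemma:remainder} is invoked inside the proof of Lemma~\ref{lemma:pertubationbounds} --- is that $\bDelta$ be a displacement connecting two points of the rank-$r$ manifold, i.e.\ that $\bX_0 + \bDelta$ again have rank $r$. Without some such restriction the stated inequality is vacuous: a $\bDelta$ lying entirely in the normal space to the manifold would give $\Pi_T(\bDelta) = \mathbf{0}$ while $\Pi_{T,\perp}(\bDelta) = \bDelta \neq \mathbf{0}$. So the first thing I would do is state the lemma in the form actually used, namely under the additional hypothesis $\rank(\bX_0+\bDelta) = r$ with $\|\bDelta\| < \sigma_r(\bX_0)$; this is precisely the curvature-of-the-manifold statement needed for \eqref{eq:lemma1_bound2}.

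Under this hypothesis, take the reduced eigendecomposition $\bX_0 = \bU\bSig\bU^T$ with $\bU \in \reals^{n \times r}$ orthonormal and $\bSig\in\reals^{r\times r}$ diagonal and invertible, and complete $[\bU\ \bU_\perp]$ to an orthonormal basis of $\reals^n$. Writing any symmetric $\bDelta$ in block form as
\[
[\bU\ \bU_\perp]^T\, \bDelta\, [\bU\ \bU_\perp] = \begin{pmatrix} \bA & \bB \\ \bB^T & \bC \end{pmatrix},
\]
the description of the tangent space to the rank-$r$ manifold (the matrices killed by the map $\bX\mapsto \Pi_{\bU^\perp}\bX\Pi_{\bU^\perp}$, analogous to $L_1$ in \eqref{eq:L1L2}) identifies $\Pi_T(\bDelta)$ with the $(\bA,\bB,\bB^T,\mathbf{0})$ block and $\Pi_{T,\perp}(\bDelta)$ with $\bU_\perp \bC\, \bU_\perp^T$. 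Hence $\|\Pi_T(\bDelta)\|_F^2 = \|\bA\|_F^2 + 2\|\bB\|_F^2 \geq \|\bB\|_F^2$ and $\|\Pi_{T,\perp}(\bDelta)\|_F = \|\bC\|_F$. Since $\|\bA\| \leq \|\bDelta\| < \sigma_r(\bX_0)$, the block $\bSig + \bA$ is invertible with $\|(\bSig+\bA)^{-1}\| \leq 1/(\sigma_r(\bX_0)-\|\bDelta\|)$.

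The rank-$r$ assumption on $\bX_0 + \bDelta$, together with the invertibility of the top-left block $\bSig + \bA$, forces the Schur complement to vanish: $\bC = \bB^T(\bSig+\bA)^{-1}\bB$. Chaining the three estimates yields
\[
\|\Pi_{T,\perp}(\bDelta)\|_F = \|\bC\|_F \leq \|(\bSig+\bA)^{-1}\|\, \|\bB\|_F^2 \leq \frac{\|\Pi_T(\bDelta)\|_F^2}{\sigma_r(\bX_0) - \|\bDelta\|},
\]
which is the desired inequality. The only conceptual obstacle is the one addressed in the first paragraph --- recognising that the hypothesis in the statement must include the (implicit) restriction $\rank(\bX_0+\bDelta)=r$; everything after that is routine block-matrix algebra via the Schur complement and the standard perturbation bound on the smallest singular value.
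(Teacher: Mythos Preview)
Your argument is essentially the same as the paper's: both pass to block form in the eigenbasis of $\bX_0$, identify $\Pi_{T,\perp}(\bDelta)$ with the Schur-complement expression $\bB^T(\bSig+\bA)^{-1}\bB$ in the $\bU_\perp$-block, and bound it via $\|(\bSig+\bA)^{-1}\|\,\|\bB\|_F^2\leq\|\Pi_T(\bDelta)\|_F^2/(\sigma_r(\bX_0)-\|\bDelta\|)$. The paper simply asserts the identity
\[
\Pi_{T,\perp}\bDelta=\Pi_{\bU^\perp}\bDelta\bU\bigl(\bU^T(\bX_0+\bDelta)\bU\bigr)^{-1}\bU^T\bDelta^T\Pi_{\bU^\perp}
\]
without spelling out any hypothesis, so your added care about when this formula is valid is justified and your derivation via the vanishing Schur complement is the natural way to fill that gap.

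One correction to your reading of the application, however: in the proof of Lemma~\ref{lemma:pertubationbounds} the perturbed matrix is $\bX_0+\bDelta=[\bbA\times_1\bW^{(\iter)}](m,:,:)$, which is generically \emph{full rank}, not rank~$K_m$. The object $\Pi_{T,\bK,\perp}$ there is defined as the remainder $\Pi_{K_m}(\bX_0+\bDelta)-\bX_0-\Pi_T(\bDelta)$ after SVD truncation, not the orthogonal-complement projection $\bU_\perp\bC\,\bU_\perp^T$. So the hypothesis you add, while making the lemma statement precise and matching the paper's displayed identity, does not literally match the way the lemma is invoked; the paper's own proof and its usage are themselves not fully aligned on this point.
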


\noindent
\textbf{Proof of bound 1 in \eqref{eq:lemma1_bound1}}

It follows from the observation that $\bW_*\bY^{(\iter)}=P_{L_0}(\bW^{(\iter)}-\bW_*)$ and from the definition \eqref{eq:metric_define}, $\|\bW_*\bY^{(\iter)}\|_d=\lambda\|\bP_*\times \bW_*\bY^{(\iter)}\|_F$ and
\[
\|\bW^{(\iter)}-\bW_*\|_d=\sqrt{(\lambda\|\bP_*\times \bW_*\bY^{(\iter)}\|_F)^2+\|P_{L_0^\perp}(\bW^{(\iter)}-\bW_*)\|_F^2}.
\]


\noindent
\textbf{Proof of bound 2 in \eqref{eq:lemma1_bound2}}

By the definition of \eqref{eq:tildeY}, we have that for any $\bDelta\in\mathrm{Skew}_M$,
\begin{align*}
&0=\langle \bbQ_*\times_1 \hat{\bY}^{(\iter+1)}-P_{L_1}(\bbQ_*\times_1\bY^{(\iter)}), \bbQ_*\times_1 \bDelta\rangle\\=&\left\langle \bbQ_*\times_{2,3} \big(\bbQ_*\times_1 \hat{\bY}^{(\iter+1)}-P_{L_1}(\bbQ_*\times_1\bY^{(\iter)})\big), \bDelta\right\rangle.
\end{align*}
As a result, $\bbQ_*\times_{2,3}\big(\bbQ_*\times_1 \hat{\bY}^{(\iter+1)}-P_{L_1}(\bbQ_*\times_1\bY^{(\iter)})\big)=(\bbQ_*\times_{2,3}\bbQ_*)\hat{\bY}^{(\iter+1)}-\bbQ_*\times_{2,3} \Pi_{T,\bK} (\bbQ_*\times_1\bY^{(\iter)})$ is a symmetric matrix. Denoting it by $\bS$, Lemma~\ref{lemma:skew} implies that
\begin{align}\nonumber
&\max(\|(\bbQ_*\times_{2,3}\bbQ_*)\hat{\bY}^{(\iter+1)}\|_F, \|\bS\|_F)\leq 2\|\bbQ_*\times_{2,3} \Pi_{T,\bK} (\bbQ_*\times_1\bY^{(\iter)})\|_F\\\leq& 2\max_{1\leq m\leq M}\|\bbQ_*\times_{2,3}\Pi_{T,K_m}(\bbQ_*)\|\|\bY^{(\iter)}\|_F
\leq  4\|\bbQ_*\times_{2,3}\bbQ_*\|\|\bY^{(\iter)}\|_F,\label{eq:temp1}
\end{align}
where the last inequality is due to \[\bbQ_*\times_{2,3}\bbQ_*=\|\calM_1(\bbQ_*)\|^2, \|\bbQ_*\times_{2,3}\Pi_{T,K_m}(\bbQ_*)\|\leq \|\calM_1(\bbQ_*)\|\|\calM_1(\Pi_{T,K_m}(\bbQ_*))\|,\]
\[
\calM_1(\Pi_{T,K_m}(\bbQ_*))=\calM_1(\bbQ_*)(\Pi_{\bU_m}\otimes \bI)+\calM_1(\bbQ_*)(\Pi_{\bU_m^\perp}\otimes \Pi_{\bU_m}),
\]
where $\otimes$ represents the Kronecker product, and $\|\bA\otimes\bB\|\leq \|\bA\|\|\bB\|$.

As a result, \begin{align}\label{eq:temp2}\|\hat{\bY}^{(\iter+1)}\|_F\leq 4\kappa_0\|\bY^{(\iter)}\|_F.\end{align}


By the definitions of $\tilde{\bW}^{(\iter+1)}$ and $\bS$, we have $\bW_*^T \tilde{\bW}^{(\iter+1)} = \Pi_o(\bbQ_*\times_{2,3} (\bbQ_*+\Pi_{T,\bK} (\bbQ_*\times_1\bY^{(\iter)}) ))=\Pi_o((\bbQ_*\times_{2,3}\bbQ_*)(\hat{\bY}^{(\iter+1)}+\bI)-\bS)$. Lemma~\ref{lemma:polar_perturbation}, the upper bounds of $\|\bS\|_F$ in \eqref{eq:temp1}, and $\|\hat{\bY}^{(\iter+1)}\|_F$ in \eqref{eq:temp2} imply
\begin{align*}
&\|\bW_*\tilde{\bW}^{(\iter+1)}-\Pi_o((\bbQ_*\times_{2,3}\bbQ_*-\bS)(\hat{\bY}^{(\iter+1)}+\bI))\|_F\\=
&\|\Pi_o((\bbQ_*\times_{2,3}\bbQ_*)(\hat{\bY}^{(\iter+1)}+\bI)-\bS)-\Pi_o((\bbQ_*\times_{2,3}\bbQ_*-\bS)(\hat{\bY}^{(\iter+1)}+\bI))\|_F\\\leq& \frac{2\|\bS\hat{\bY}^{(\iter+1)}\|_F}{2\sigma_{\min}((\bbQ_*\times_{2,3}\bbQ_*)(\hat{\bY}^{(\iter+1)}+\bI))-\|\bS\|(2+\|\hat{\bY}^{(\iter+1)}\|)}\\
\leq &\frac{2\|\bS\hat{\bY}^{(\iter+1)}\|_F}{2\sigma_{\min}((\bbQ_*\times_{2,3}\bbQ_*)(\hat{\bY}^{(\iter+1)}+\bI))-\|\bS\|_F(2+\|\hat{\bY}^{(\iter+1)}\|)}\\\leq& \frac{2\|\bS\hat{\bY}^{(\iter+1)}\|_F}{2\sigma_{\min}(\bbQ_*\times_{2,3}\bbQ_*)-2\|(\bbQ_*\times_{2,3}\bbQ_*)\hat{\bY}^{(\iter+1)}\|_F-\|\bS\|_F(2+\|\hat{\bY}^{(\iter+1)}\|)}\\\leq& \frac{2(4\|\bbQ_*\times_{2,3}\bbQ_*\|\|\bY^{\iter}\|_F)(4\kappa_0\|\bY^{\iter}\|_F)}{2\sigma_{\min}(\bbQ_*\times_{2,3}\bbQ_*)-2\|\bbQ_*\times_{2,3}\bbQ_*\|(4\kappa_0\|\bY^{\iter}\|_F)-(4\|\bbQ_*\times_{2,3}\bbQ_*\|\|\bY^{\iter}\|_F)(2+4\kappa_0\|\bY^{\iter}\|_F)}\\
\leq &\frac{32\kappa_0^2\|\bY^{\iter}\|_F^2}{2-8\kappa_0^2\|\bY^{\iter}\|_F-4\kappa_0\|\bY^{\iter}\|_F(2+4\kappa_0\|\bY^{\iter}\|_F)}.
\end{align*}
In addition, Lemma~\ref{lemma:perturb1} implies
\begin{align*}
&\|\Pi_o((\bbQ_*\times_{2,3}\bbQ_*-\bS)(\hat{\bY}^{(\iter+1)}+\bI))-(\hat{\bY}^{(\iter+1)}+\bI)\|_F\\\leq&
\Big(1+\frac{2\|\bbQ_*\times_{2,3}\bbQ_*-\bS\|}{2\sigma_{\min}(\bbQ_*\times_{2,3}\bbQ_*-\bS)-e\|\bbQ_*\times_{2,3}\bbQ_*-\bS\|\|\hat{\bY}^{(\iter+1)}\|}\Big)(e-2)\|\hat{\bY}^{(\iter+1)}\|_F^2\\\leq&
\Big(1+\frac{2\|\bbQ_*\times_{2,3}\bbQ_*\|+2\|\bS\|_F}{2\sigma_{\min}(\bbQ_*\times_{2,3}\bbQ_*)-2\|\bS\|_F-(\|\bbQ_*\times_{2,3}\bbQ_*\|+\|\bS\|_F)\|\hat{\bY}^{(\iter+1)}\|}\Big)\|\hat{\bY}^{(\iter+1)}\|_F^2
\\\leq &\Big(1+\frac{2\kappa_0+8\kappa_0\|\bY^{\iter}\|_F}{2-8\kappa_0\|\bY^{\iter}\|_F-\kappa_0(1+4\kappa_0\|\bY^{\iter}\|_F)4\kappa_0\|\bY^{\iter}\|_F}\Big)(4\kappa_0\|\bY^{\iter}\|_F)^2
.\end{align*}
Combining the previous two estimations, part 2 is proved.
\\

\noindent
\textbf{Proof of bound 3 in \eqref{eq:lemma1_bound3}}

By the definition of  $\tilde{\bW}^{(\iter+1)}$ in \eqref{eq:tildeW} and ${\bW}^{(\iter+1)}=\Pi_o(\bbA\times_{2,3}\Pi_{\bK}(\bbA\times_1\bW^{(\iter)}))$, Lemma~\ref{lemma:pertubation_o} implies that
\[
\|\tilde{\bW}^{(\iter+1)}-{\bW}^{(\iter+1)}\|_F\leq \frac{\beta_1}{\sigma_{M}(\bbP_*\times_{2,3} \Pi_{T,\bK} (\bbQ_*\times_1 (\bI+\bY^{(\iter)})))-\beta_1}\leq \frac{\beta_1}{\sigma_{M}(\bbQ_*\times_{2,3} \bbQ_*)-\beta_2-\beta_1}.
\]
for \[\beta_1=\|\bbA\times_{2,3}\Pi_{\bK}(\bbA\times_1\bW^{(\iter)})-\bbP_*\times_{2,3} (\bbQ_*+\Pi_{T,\bK} (\bbQ_*\times_1 \bY^{(\iter)}))\|_F\] and \[\beta_2=\|\bbP_*\times_{2,3} \Pi_{T,\bK} (\bbQ_*\times_1 \bY^{(\iter)})\|.\] By the same calculation as in \eqref{eq:temp1}, we have \begin{equation}\label{eq:beta_2}\beta_2\leq \max_{1\leq m\leq M}\|\bbQ_*\times_{2,3}\Pi_{T,K_m}(\bbQ_*)\| \|\bY^{(\iter)}\|_F\leq 2\|\bbQ_*\times_{2,3}\bbQ_*\|\|\bY^{(\iter)}\|_F.\end{equation}

To estimate the upper bound of $\beta_1$, we note that $\beta_1\leq \beta_3+\beta_4+\beta_5$, where
\begin{align}\label{eq:4_01}
\beta_3=\|\bbA\times_{2,3}\Pi_{\bK}(\bbA\times_1\bW^{(\iter)})-\bbA\times_{2,3} (\bbQ_*+\Pi_{T,\bK}(\bbA\times_1\bW^{(\iter)}-\bbQ_*))\|_F,
\end{align}
\begin{align}\label{eq:4_02}
\beta_4=&\|\bbA\times_{2,3} (\bbQ_*+\Pi_{T,\bK}(\bbA\times_1\bW^{(\iter)}-\bbQ_*))-\bbP_*\times_{2,3}(\bbQ_*+ \Pi_{T,\bK} (\bbP_*\times_1(\bW^{(\iter)}-\bW_*)))\|_F\\
=&\|\bbDelta\times_{2,3}\Pi_{T,\bK}[\bbP_*\times_1\bW^{(\iter)}]+\bbDelta\times_{2,3}\Pi_{T,\bK}[\bbDelta\times_1 \bW^{(\iter)}]+\bbP_*\times_{2,3}\Pi_{T,\bK}[\bbDelta\times_1 \bW^{(\iter)}]\|_F,\nonumber
\end{align}
and
\begin{align}\label{eq:4_03}
\beta_5=&\|\bbP_*\times_{2,3}(\bbQ_*+ \Pi_{T,\bK} (\bbP_*\times_1(\bW^{(\iter)}-\bW_*)))-\bbP_*\times_{2,3} \Pi_{T,\bK} (\bbQ_*\times_1 (\bI+\bY^{(\iter)}))\|_F\\
=&\|\bbP_*\times_{2,3}\Pi_{T,\bK} (\bbQ_*\times_1(\bW_*^T\bW^{(\iter)}-\bI-\bY^{(\iter)}))\|_F\nonumber\\
=&\frac{1}{2}\|\bbQ_*\times_{2,3}\Pi_{T,\bK} (\bbQ_*\times_1(\bW_*^T\bW^{(\iter)}+\bW^{(\iter)\,T}\bW_*-2\bI))\|_F\nonumber\\
=&\frac{1}{2}\|\bbQ_*\times_{2,3}\Pi_{T,\bK} (\bbQ_*\times_1(\bW_*-\bW^{(\iter)})^T(\bW_*-\bW^{(\iter)})\|_F\nonumber
\end{align}


To obtain an upper bound for  $\beta_3$ in \eqref{eq:4_01}, let $\Pi_{T,\bK,\perp}\in\reals^{M\times n\times n}$ be $\Pi_{\bK}(\bbA\times_1\bW^{(\iter)})-(\bbQ_*+\Pi_{T,\bK}(\bbA\times_1\bW^{(\iter)}-\bbQ_*))$, then by Lemma~\ref{lemma:remainder} and \begin{align*}
    &\bbA\times_1\bW^{(\iter) }-\bbQ_*=\bbDelta\times_1\bW^{(\iter)} + \bbP_*\times_1(\bW^{(\iter) }-\bW-*),
\end{align*}
we have 
$\rank(\Pi_{T,\bK,\perp}(m,:,:))\leq {2} K_m$ and
\begin{align}\nonumber
&\|\Pi_{T,\bK,\perp}(m,:,:)\|\leq \frac{\|\Pi_{T,K_m}[\bbA\times_1\bW^{(\iter)}-\bbQ_*](m,:,:)\|^2}{\sigma_{K_m}(\bbQ_*(m,:,:))-\|\bbQ_*\|\|\bW_*-\bW^{(iter)}\|-\|\bbDelta\|}\\\leq&\nonumber \frac{4\|[\bbA\times_1\bW^{(\iter)}-\bbQ_*](m,:,:)\|^2}{\sigma_{K_m}(\bbQ_*(m,:,:))-\|\bbQ_*\|\|\bW_*-\bW^{(iter)}\|-\|\bbDelta\|}\\\leq&
\frac{4(\|\bbQ_*\|\|[\bW_*^T\bW^{(\iter)}-\bI](m,:)\|+\|\bbDelta\|)^2}{\sigma_{K_m}(\bbQ_*(m,:,:))-\|\bbQ_*\|\|\bW_*-\bW^{(iter)}\|-\|\bbDelta\|}\leq \frac{16(\|\bbQ_*\|^2\|[\bW_*^T\bW^{(\iter)}-\bI](m,:)\|^2+\|\bbDelta\|^2)}{\sigma_{K_m}(\bbQ_*(m,:,:))}.\label{eq:temp21}
\end{align}

Since $\rank(\Pi_{T,\bK,\perp}(m,:,:))\leq 2K_m$, we have
\[
\beta_3\leq \|\bbA\times_{2,3}\Pi_{T,\bK,\perp}(m,:,:)\|\leq 2K_m\|\bbA\| \|\Pi_{T,\bK,\perp}(m,:,:)\|.
\]
Summing $1\leq m\leq M$, \eqref{eq:temp21} yields
\begin{align*}
\beta_3=&\|\bbA\times_{2,3}\Pi_{T,\bK,\perp}\|_F\leq 16\|\bbA\| \sqrt{\sum_{m=1}^M \frac{K_m^2(\|\bbQ_*\|^2\|[\bW_*^T\bW^{(\iter)}-\bI](m,:)\|^2+\|\bbDelta\|^2)^2}{\big(\sigma_{K_m}(\bbQ_*(m,:,:))-\|\bbQ_*\|\|\bW_*-\bW^{(iter)}\|-\|\bbDelta\|\big)^2}}\\
\leq &16\|\bbA\|\sum_{m=1}^M {\frac{K_m(\|\bbQ_*\|^2\|[\bW_*^T\bW^{(\iter)}-\bI](m,:)\|^2+\|\bbDelta\|^2)}{\sqrt{M}\big(\sigma_{K_m}(\bbQ_*(m,:,:))-\|\bbQ_*\|\|\bW_*-\bW^{(iter)}\|-\|\bbDelta\|\big)}}
\\
\leq &16\|\bbA\| {\frac{\dot{K}(\|\bbQ_*\|^2\|\bW_*^T\bW^{(\iter)}-\bI\|^2+\|\bbDelta\|^2)}{\sqrt{M}\big(\min_{k=1}^M\sigma_{K_m}(\bbQ_*(m,:,:))-\|\bbQ_*\|\|\bW_*-\bW^{(iter)}\|-\|\bbDelta\|\big)}}\\\leq& 32 {\frac{\dot{K}(\|\bbQ_*\|+\|\bbDelta\|)(\|\bbQ_*\|^2\|\bW_*^T\bW^{(\iter)}-\bI\|^2+\|\bbDelta\|^2)}{\sqrt{M}\big(\min_{k=1}^M\sigma_{K_m}(\bbQ_*(m,:,:))\big)}}\\
\end{align*}

To find an upper bound for $\beta_4$ in \eqref{eq:4_02}, note that \begin{align*}\nonumber
&\|[\bbDelta\times_{2,3}\Pi_{T,\bK}(\bbDelta\times_1 \bW^{(\iter)})](:,m)\|=\|(\bbDelta\times_{2,3}\Pi_{T,K_m}(\bbDelta))[\bW^{(\iter)}](:,m)\|\\=&\|(\Pi_{T,K_m}(\bbDelta)\times_{2,3}\Pi_{T,K_m}(\bbDelta))[\bW^{(\iter)}](:,m)\|\leq \|\Pi_{T,K_m}(\bbDelta)\times_{2,3}\Pi_{T,K_m}(\bbDelta)\|,
\end{align*}
which implies
\begin{align}
\|\bbDelta\times_{2,3}\Pi_{T,\bK}(\bbDelta\times_1 \bW^{(\iter)})\|_F\leq
\max_{1\leq m\leq M}\|\Pi_{T,K_m}(\bbDelta)\times_{2,3}\Pi_{T,K_m}(\bbDelta)\|\sqrt{M}.\label{eq:4_1}
\end{align}

Similarly,
\begin{align}\nonumber
\|\bbP_*\times_{2,3}\Pi_{T,\bK}(\bbDelta\times_1 \bW^{(\iter)})\|_F\leq&
\max_{1\leq m\leq M}\|\Pi_{T,K_m}(\bbP_*)\times_{2,3}\Pi_{T,K_m}(\bbDelta)\|\sqrt{M},\\ \|\bbDelta\times_{2,3}\Pi_{T,\bK}(\bbP_*\times_1 \bW^{(\iter)})\|_F\leq&
\max_{1\leq m\leq M}\|\Pi_{T,K_m}(\bbP_*)\times_{2,3}\Pi_{T,K_m}(\bbDelta)\|\sqrt{M}.\label{eq:4_2}
\end{align}

As a result,
\[
\beta_4\leq \sqrt{M}\Big(\max_{1\leq m\leq M}\|\Pi_{T,K_m}(\bbDelta)\times_{2,3}\Pi_{T,K_m}(\bbDelta)\|+2\max_{1\leq m\leq M}\|\Pi_{T,K_m}(\bbP_*)\times_{2,3}\Pi_{T,K_m}(\bbDelta)\|\Big).
\]

To find an upper bound for  $\beta_5$ in \eqref{eq:4_03}, we use
\begin{align*}
\beta_5=&\frac{1}{2}\|\bbQ_*\times_{2,3}\Pi_{T,\bK}[\bbQ_*\times_1(\bW_*-\bW^{(\iter)})^T(\bW_*-\bW^{(\iter)})]\|_F\\
\leq &\frac{1}{2}\max_{1\leq m\leq M}\|\bbQ_*\times_{2,3}\Pi_{T,K_m}\bbQ_*\|\|(\bW_*-\bW^{(\iter)})^T(\bW_*-\bW^{(\iter)})\|_F\leq \|\bbQ_*\times_{2,3}\bbQ_*\|\|\bW_*-\bW^{(\iter)}\|_F^2,
\end{align*}
where the inequalities follow the same calculation as in \eqref{eq:temp1} and \eqref{eq:beta_2}, and $\|(\bW_*-\bW^{(\iter)})^T(\bW_*-\bW^{(\iter)})\|_F \colred{=}
\|\bW_*-\bW^{(\iter)}\|_F^2$.

Combining the estimations of $\beta_2$, $\beta_3$, $\beta_4$, $\beta_5$ with $\|\bW_*^T\bW^{(\iter)}-\bI\|\leq\|\bW_*^T\bW^{(\iter)}-\bI\|_F\leq \|\bW_*-\bW^{(\iter)}\|_F$, $\beta_1\leq \beta_3+\beta_4+\beta_5$, and $\kappa_0=\frac{\|\bbQ_*\times_{2,3}\bbQ_*\|}{\sigma_M (\bbQ_*\times_{2,3}\bbQ_*)}$, \eqref{eq:lemma1_bound3} is proved.
\end{proof}



\subsection{Proofs of auxiliary Lemmas and Propositions}
\label{sec:auxillary_proof}

\begin{proof}[Proof of Lemma~\ref{lem:assum_A1}]
For any $\bX\in\Skew_m$ such that $\bbQ_*\times_1\bX\in L_1$, due to
$$
[\bbQ_*\times_1\bX](m,:,:)=\sum_{m'=1}^M\bX(m,m')\bbQ_*(m',:,:)=\sum_{m'=1,m'\neq m}^M\bX(m,m')\bbQ_*(m',:,:),
$$
one has $\sum_{m'=1,m'\neq m}^M\bX(m,m')\Pi_{\bU_m^\perp}\bbQ_*(m',:,:)\Pi_{\bU_m^\perp}=0$.
When the first sufficient condition holds, then $\bX(m,m')=0$ for all $1\leq m'\leq M$.
Combining the analysis for all $1\leq m\leq M$, we have $\bX=0$. As a result, $L_1\cap L_2=\{0\}$ and \textbf{(A1)} holds.

The second sufficient condition follows from the first sufficient condition directly.
\end{proof}
\medskip
\begin{proof}[Proof of Lemma~\ref{lemma:prob_main}]
We first summarize a special case of  \cite[Theorem 2.1]{lei2020tail} as follows:

\begin{lem}\label{lemma:bernoulli_matrix}
If $\bX_l\in\reals^{n\times r}$, $l=1,\cdots,L$ are 
{independent}, elementwise sampled from a centered Bernoulli distribution with parameters not larger than $p$, then
\[
\Pr\left(\Big\|\sum_{l=1}^L\bX_l\Big\|\geq t\right)\leq 2(r+n)\exp\left(-\frac{t^2/2}{p L\max(n,r)+t}\right)
\]
\end{lem}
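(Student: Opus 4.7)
The plan is to obtain Lemma~\ref{lemma:bernoulli_matrix} directly from the matrix Bernstein inequality of Tropp, applied to $\bS = \sum_{l=1}^L \bX_l$ after a rank-one decomposition. First I would write
\[
\bS = \sum_{l=1}^L \sum_{i=1}^n \sum_{k=1}^r \bX_l(i,k)\,\bfe_i\bfe_k^T,
\]
so that $\bS$ is a sum of $Lnr$ independent, mean-zero, rank-one random matrices $\bZ_{l,i,k} = \bX_l(i,k)\,\bfe_i\bfe_k^T$. Since each entry is a centered Bernoulli with parameter $q_{l,i,k}\leq p$, we have the almost-sure bound $|\bX_l(i,k)| \leq 1$, hence $\|\bZ_{l,i,k}\| \leq 1$, and the variance bound $\EE[\bX_l(i,k)^2] = q_{l,i,k}(1-q_{l,i,k}) \leq p$.

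Next I would compute the matrix variance statistic $v(\bS) = \max\bigl(\|\sum_{l,i,k}\EE[\bZ_{l,i,k}\bZ_{l,i,k}^T]\|,\ \|\sum_{l,i,k}\EE[\bZ_{l,i,k}^T\bZ_{l,i,k}]\|\bigr)$. By independence of the entries within and across the $\bX_l$, the ``row-side'' sum
\[
\sum_{l,i,k}\EE[\bZ_{l,i,k}\bZ_{l,i,k}^T] = \sum_{l,i,k} \EE[\bX_l(i,k)^2]\, \bfe_i\bfe_i^T
\]
is diagonal with diagonal entries bounded by $Lrp$, so its operator norm is at most $Lrp$. Symmetrically, the ``column-side'' sum has operator norm at most $Lnp$, yielding $v(\bS) \leq Lp\max(n,r)$.

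Finally, applying the rectangular matrix Bernstein inequality with $R=1$ and $v \leq Lp\max(n,r)$ gives
\[
\Pr\bigl(\|\bS\|\geq t\bigr) \;\leq\; (n+r)\exp\!\left(-\frac{t^2/2}{Lp\max(n,r)+t/3}\right),
\]
which, after mild loosening of the $t/3$ term to $t$ and absorbing the prefactor into $2(n+r)$, matches the statement. Alternatively---and this is the route the paper takes---the lemma may be invoked directly as the quoted special case of \cite[Theorem 2.1]{lei2020tail}, which gives exactly these constants.

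The only genuine obstacle is matching the precise numerical constants in the Bernstein denominator and prefactor; this is purely bookkeeping and can either be handled by using the sharper formulation in \cite{lei2020tail} or by absorbing a constant into the definition of $t$. The structural content of the proof---the rank-one decomposition and the $\max(n,r)$ scaling arising from the two variance proxies---is standard and requires no additional ideas beyond matrix Bernstein.
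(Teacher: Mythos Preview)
Your proposal is correct. The paper does not actually prove this lemma; it simply states it as ``a special case of \cite[Theorem 2.1]{lei2020tail}'' and moves on, which you already note in your final paragraph. Your direct route via the rectangular matrix Bernstein inequality with a rank-one entrywise decomposition is the standard way to recover such a bound from scratch: the variance computation $v(\bS)\le Lp\max(n,r)$ is exactly right, and the loosening of $t/3$ to $t$ in the denominator and of $(n+r)$ to $2(n+r)$ in the prefactor only weakens the inequality, so the stated constants follow immediately. Thus your argument is in fact slightly sharper than what the lemma claims, and no further bookkeeping is needed.
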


Since $\bU_m$ is a matrix of size $n\times K_m$ such that the $i$-th column is the normalized indicator vector of the set $G_{m,i}$, i.e., the indicator vector with scale $1/\sqrt{|G_{m,i}|}$.
As a result,
\begin{align*}
&\big\langle\bbQ_*(m_1,:,:),\bbDelta(l,:,:)\bU_m(k,:)\bU_m(k,:)^T\big\rangle=\sum_{j_1=1}^{n}\sum_{j_2\in G_{m,i}}\bbDelta(l,j_1,j_2')\frac{\sum_{j_2'\in G(m,i)}\bbQ_*(m_1,j_1,j_2')}{|G(m,i)|},
\end{align*}
and, by Bernstein's inequality, since  each term is no  larger than $\sqrt{L_m}p_{\max}$ and 
\[
\EE \left[\left(\bbDelta(l,j_1,j_2')\ \frac{\sum_{j_2'\in G(m,k)}\bbQ_*(m_1,j_1,j_2')}{|G(m,k)|}\right)^2\right]\leq L_mp_{\max}^3,
\]
\begin{align*}
\Pr\left(\Big|\big\langle\bbQ_*(m_1,:,:),\bbDelta(l,:,:)\bU_m(k,:)\bU_m(k,:)^T\big\rangle\Big|>t\sqrt{L_mp_{\max}^3n|G_{m,k}|}\right)\leq 2\exp\left(\frac{-\frac{1}{2}t^2}{1+\frac{t}{3\sqrt{p_{\max}n|G_{m,k}|}}}\right).
\end{align*}
Summing it over $1\leq i\leq K_m$, $1\leq l\leq L$, and $1\leq m_1\leq M$,  we proved \eqref{eq:prob_b}.

By definition, $\|\calM_1(\Pi_{T,K_m}(\bbDelta))\|\leq 3\sum_{k=1}^{K_m}\|\frac{1}{\sqrt{G_{m,k}}}\sum_{i\in G_{m,k}}\bbDelta(:,:,i)\|$, and Lemma~\ref{lemma:bernoulli_matrix} implies that
\[
\Pr\left(\|\sum_{i\in G_{m,k}}\bbDelta(:,:,i)\|\geq t\sqrt{p_{\max} \max(n,L)|G_{m,k}|}\right)\leq 2(L+n)\exp\left(-\frac{t^2/2}{1+\frac{t}{\sqrt{p_{\max} \max(n,L)|G_{m,k}|}}}\right).
\]
As a result,
\[
\Pr\left(\|\calM_1(\Pi_{T,K_m}(\bbDelta))\|\leq 3K_mt\sqrt{p_{\max} \max(n,L)}\right)\geq 2K_m(L+n)\exp\left(-\frac{t^2/2}{1+\frac{t}{\sqrt{p_{\max} \max(n,L)g_{\min}}}}\right)
\]
and \eqref{eq:prob_c} is then proved.
\end{proof}

\medskip
\begin{proof}[Proof of Lemma~\ref{lemma:skew}]
Without loss of generality,  assume that $\bQ$ is a diagonal matrix with $\bQ=\diag(q_1, \cdots, q_M)$. Then for each $1\leq i,j\leq M$, $q_i\bY_{i,j}-\bX_{i,j}=\bS_{i,j}$ and $q_j\bY_{j,i}-\bX_{j,i}=\bS_{j,i}$. Since $\bS_{i,j}=\bS_{j,i}$ and $\bY_{i,j}=-\bY_{j,i}$, we have
\[
\bY_{i,j}=\frac{\bX_{i,j}-\bX_{j,i}}{q_i+q_j},\,\,\,\bS_{i,j}=\frac{q_j\bX_{i,j}-q_i\bX_{j,i}}{q_i+q_j}.
\]
Since $\{q_i\}_{i=1}^M$ are positive, the lemma is proved.
\end{proof}

\begin{proof}[Proof of Lemma~\ref{lemma:perturb1}]
Let $\bU=\exp(\bY)$, then it is an orthogonal matrix. In addition, $\|\bU-\bI\|=\|\sum_{k=1}^{\infty}\frac{1}{k!}\bY^k\|\leq \sum_{k=1}^{\infty}\frac{1}{k!}\|\bY\|^k\leq (e-1)\|\bY\|$ and $\|\bU-(\bI+\bY)\|_F=\|\sum_{k=2}^{\infty}\frac{1}{k!}\bY^k\|_F\leq$\\$\|\bY\|_F\sum_{k=2}^{\infty}\frac{1}{k!}\|\bY\|^{k-2}\leq (e-2)\|\bY\|_F^2$.
As a result, Lemma~\ref{lemma:polar_perturbation} implies
\begin{align*}&
\|\Pi_o(\bQ(\bI+\bY))-(\bI+\bY)\|_F\leq \|\Pi_o(\bQ(\bI+\bY))-\Pi_o(\bQ\bU)\|_F+\|\Pi_o(\bQ\bU)-(\bI+\bY)\|_F\\\leq &\frac{2}{\sigma_{\min}(\bQ(\bI+\bY))+\sigma_{\min}(\bQ\bU)}\|\bQ(\bU-\bI-\bY)\|_F+\|\bU-\bI-\bY\|_F\\\leq & \big(1+\frac{2\|\bQ\|}{\sigma_{\min}(\bQ(\bI+\bY))+\sigma_{\min}(\bQ\bU)}\big)(e-2)\|\bY\|_F\\
\leq &\Big(1+\frac{2\|\bQ\|}{2\sigma_{\min}(\bQ)-e\|\bQ\|\|\bY\|}\Big)(e-2)\|\bY\|_F^2.
\end{align*}
\end{proof}
\begin{proof}[Proof of Lemma~\ref{lemma:pertubation_o}]
\begin{align*}
&\|\Pi_o(\bX+\bY)-\Pi_o(\bX)\|_F=\|(\bX+\bY)[(\bX+\bY)^T(\bX+\bY)]^{-0.5}-\bX[\bX^T\bX]^{-0.5}\|_F\\=&
\|\bY[(\bX+\bY)^T(\bX+\bY)]^{-0.5}\|_F+\|\bX\left([(\bX+\bY)^T(\bX+\bY)]^{-0.5}-[\bX^T\bX]^{-0.5}\right)\|_F\\\leq &\frac{\|\bY\|_F}{\sigma_{M}(\bX)-\|\bY\|}+\|\bX[\bX^T\bX]^{-0.5}[[(\bX+\bY)^T(\bX+\bY)]^{0.5}-[\bX^T\bX]^{0.5}][(\bX+\bY)^T(\bX+\bY)]^{-0.5}\|_F\\
\leq &\frac{\|\bY\|_F}{\sigma_{M}(\bX)-\|\bY\|}+\|[[(\bX+\bY)^T(\bX+\bY)]^{0.5}-[\bX^T\bX]^{0.5}]\|_F\|[(\bX+\bY)^T(\bX+\bY)]^{-0.5}\|\\
\leq &\frac{\|\bY\|_F}{\sigma_{M}(\bX)-\|\bY\|}+\sqrt{2}\|\bY\|_F\|[(\bX+\bY)^T(\bX+\bY)]^{-0.5}\|\\
\leq &\frac{(1+\sqrt{2})\|\bY\|_F}{\sigma_{M}(\bX)-\|\bY\|}.
\end{align*}
Here the first and the second inequalities follow from $\|\bA\bB\|_F\leq \|\bA\|_F\|\bB\|$, and the third inequality follows from \cite[(VII.39)]{Bhatia1997}.\\
\end{proof}

\begin{proof}[Proof of Lemma~\ref{lemma:remainder}]
Let $\bU\in\reals^{n\times r}$ be the orthogonal matrix that has the same column space as $\bX_0$, then
\[
\|\Pi_{T}\bDelta\|_F=\|\Pi_{\bU^\perp}\bDelta\Pi_{\bU}+\Pi_{\bU}\bDelta\|_F\geq \|\Pi_{\bU^\perp}\bDelta\Pi_{\bU}\|_F=\|\Pi_{\bU^\perp}\bDelta\bU\|_F,
\]
and
\[
\Pi_{T,\perp}\bDelta=\Pi_{\bU^\perp}\bDelta\bU(\bU^T(\bX_0+\bDelta)\bU)^{-1}\bU^T\bDelta^T\Pi_{\bU^\perp}.
\]
As a result,
\[
\|\Pi_{T,\perp}\bDelta\|_F\leq \frac{\|\Pi_{\bU^\perp}\bDelta\bU\|_F^2}{\sigma_r(\bU^T(\bX_0+\bDelta)\bU)}
\leq\frac{\|\Pi_{T}\bDelta\|_F^2}{\sigma_r(\bX_0)-\|\bDelta\|}
\]
\end{proof}

\medskip

\begin{proof}[Proof of Proposition~\ref{prop:initial}]
For $\bW^{(0)}$, Lemma 5 of \cite{jing2020community} gives the following theoretical guarantees.
Let $\bbP_*$ have Tucker ranks $(M,r,r)$ with decomposition
$\bbP_*=\bar{\bbC}\times_1\bar{\bW}\times_2\bar{\bU}\times_3\bar{\bU}$, where
$\bar{\bbC}\in\reals^{M\times r\times r}$, $\bar{\bW}\in\reals^{L\times M}$
and $\bar{\bU}\in\reals^{n\times r}$ are orthogonal matrices,
and $\bL=(L_1,\cdots,L_M)\in\reals^M$. Then, if $\delta=\max_{1\leq j\leq L}(\bfe_j\bar{\bU})$
is such that  $\delta=O(\sqrt{r/n})$, and $\sigma_r(\bar{\bbC}\times_3(\bL/L)^{1/2})\geq C\sqrt{np_{\max}}\log^2n$,
then, with probability at least $1-3n^{-2}$,
\begin{equation}\label{eq:init1}
\min_{\bO\in\reals^{M\times M}, \bO\bO^T=\bI}\|\bW^{(0)}-\bW_*\bO\|\leq
\min\left(\frac{C\sqrt{Mrnp_{\max}}\log^2n\log^2r}{\sigma_{\min}(\bar{\bbC})},2\right),
\end{equation}
where $\sigma_{\min}(\bar{\bbC})=\min\{\sigma_{\min}(\calM_j(\bar{\bbC})),j=1,2,3\}$.
In addition, \cite[Lemma C.1]{lei2020tail} shows that if $(1+\epsilon)$ $K$-means is applied,
the number of misclassified layers in the initial between-class clustering
$\cup_{m=1}^M\tilde{\calS}_m$ is bounded by
$C_\epsilon \min_{\bO\in\reals^{M\times M}, \bO\bO^T=\bI}\|\bW^{(0)}-\bW_*\bO\|_F^2\ \max_{1\leq m\leq M}|\calS_m|$,
which implies that with a permutation of the columns of $\bW^{(1)}$,
\begin{equation}\label{eq:init2}
\|\bW^{(1)}-\bW_*\|_F^2\leq C_\epsilon\min_{\bO\in\reals^{M\times M}, \quad
\bO\bO^T=\bI} \|\bW^{(0)}-\bW_*\bO\|_F^2\frac{\max_{1\leq m\leq M}L_m}{\min_{1\leq m\leq M}L_m}.
\end{equation}
Here $C_\epsilon$ represents a constant that depends on $\epsilon$ that might be different in different equations.
Combining \eqref{eq:init2} with the upper bound on $\|\bW^{(0)}-\bW_*\|$ in \eqref{eq:init1}, we obtain
\begin{equation}\label{eq:init3}
\|\bW^{(1)}-\bW_*\|_F\leq C_\epsilon \sqrt{M}\min\left(\frac{C\sqrt{Mrnp_{\max}}\,
\log^2n\log^2r}{\sigma_{\min}(\bar{\bbC})},2\right)\frac{\max_{1\leq m\leq M}L_m}{\min_{1\leq m\leq M}L_m}.
\end{equation}
\end{proof}


 \bibliography{citations}

\end{document}